\def\eqref#1{equation~\ref{#1}}
\def\1{\bm{1}}
\def\ra{{\textnormal{a}}}
\def\rv{{\textnormal{v}}}
\def\rw{{\textnormal{w}}}
\def\rva{{\mathbf{a}}}
\def\rvv{{\mathbf{v}}}
\def\rvw{{\mathbf{w}}}
\def\vzero{{\bm{0}}}
\def\vtheta{{\bm{\theta}}}
\def\ve{{\bm{e}}}
\def\vf{{\bm{f}}}
\def\vh{{\bm{h}}}
\def\vr{{\bm{r}}}
\def\vw{{\bm{w}}}
\def\vx{{\bm{x}}}
\def\mA{{\bm{A}}}
\def\mI{{\bm{I}}}
\def\mJ{{\bm{J}}}
\def\mM{{\bm{M}}}
\def\mW{{\bm{W}}}
\DeclareMathAlphabet{\mathsfit}{\encodingdefault}{\sfdefault}{m}{sl}
\SetMathAlphabet{\mathsfit}{bold}{\encodingdefault}{\sfdefault}{bx}{n}
\def\sH{{\mathbb{H}}}
\def\sP{{\mathbb{P}}}
\newcommand{\E}{\mathbb{E}}
\newcommand{\R}{\mathbb{R}}
\newtheorem{theorem}{Theorem}
\newtheorem{lemma}{Lemma}
\newtheorem{remark}{Remark}
\newtheorem{corollary}{Corollary}
\newtheorem*{rep@theorem}{\rep@title}
\newcommand{\newreptheorem}[2]{%
\newenvironment{rep#1}[1]{%
 \def\rep@title{#2 \ref{##1}}%
 \begin{rep@theorem}}%
 {\end{rep@theorem}}}
\DeclareMathOperator{\tr}{tr}
\title{Out-of-Distribution Detection by Leveraging Between-Layer Transformation Smoothness}
\author{Fran Jelenić$^{1,2}$ \quad Josip Jukić$^{1,2}$ \quad Martin Tutek$^3$ \quad Mate Puljiz$^2$ \quad Jan Šnajder$^{1,2}$ \\
$^1$TakeLab, $^2$Faculty of Electrical Engineering and Computing, University of Zagreb, Croatia\\ 
$^3$UKP Lab, Technical University of Darmstadt, Germany\\
\texttt{\{fran.jelenic, josip.jukic, mate.puljiz, jan.snajder\}@fer.hr}\\
\texttt{tutek@ukp.informatik.tu-darmstadt.de}
}
\newcommand{\nospacetext}[1]{\makebox[0pt][l]{#1}}
\newcommand{\ourmethod}{\textsc{BLOOD}}
\newcommand{\ourmethodL}{\textsc{BLOOD}$_{L}$}
\newcommand{\ourmethodmean}{\textsc{BLOOD}$_{M}$}
\newcommand{\whitebox}{white/black-box}
\begin{document}

\maketitle

\begin{abstract}

Effective out-of-distribution (OOD) detection is crucial for reliable machine learning models, yet most current methods are limited in practical use due to requirements like access to training data or intervention in training. We present a novel method for detecting OOD data in Transformers based on transformation smoothness between intermediate layers of a network (\ourmethod{}), which is applicable to pre-trained models without access to training data. \ourmethod{} utilizes the tendency of between-layer representation transformations of in-distribution (ID) data to be smoother than the corresponding transformations of OOD data, a property that we also demonstrate empirically. We evaluate \ourmethod{} on several text classification tasks with Transformer networks and demonstrate that it outperforms methods with comparable resource requirements. Our analysis also suggests that when learning simpler tasks, OOD data transformations maintain their original sharpness, whereas sharpness increases with more complex tasks.



\end{abstract}

\section{Introduction}

Machine learning (ML) models' success rests on the assumption that the model will be evaluated on data that comes from the same distribution as the data on which it was trained, the \emph{in-distribution} (ID) data.
However, models deployed in noisy and imperfect real-world scenarios often face data that comes from a different distribution, the \emph{out-of-distribution} (OOD) data, which can hinder the models' performance. 
The task of discerning between ID and OOD data is commonly referred to as \emph{OOD detection} \citep{yang2021generalized}.


Owing to their consistent state-of-the-art performance across diverse ML tasks \citep{abiodun2018state}, Deep Neural Networks (DNNs) have garnered significant attention in OOD detection research. 
While popular baselines make use of the model's posterior class probabilities \citep{hendrycksbaseline}, the issue of overconfidence in DNNs \citep{guo2017calibration} frequently erodes the credibility of these probabilities.
An alternative is offered by the group of methods that leverage the fundamental concept of DNNs, namely, representation learning.
Because a DNN encodes similar instances closely in its representation space, an OOD instance can be identified based on the distance between its representation and the representations of other instances in the training set \citep{lee2018simple}.
The downside of these methods, however, is that they require the presence of training data during prediction or involve intervention in the model's training procedure. 
This is a significant practical limitation, as using third-party models pre-trained on non-public data is increasingly the standard practice. A case in point is the Hugging Face Transformers library \citep{wolf2020transformers}, which provides 
community models but often lacks comprehensive details about their training. 

An obvious way to close the resource gap is to rely on OOD detection methods with minimal prerequisites. 
However, current OOD detection research has largely ignored the differing prerequisites among OOD detection methods, often leading to comparisons that treat methods with varying prerequisites equally, disregarding the question of practical applicability.
From a practical perspective, it makes sense to group OOD detection methods into the following three categories:\footnote{\citet{gomes2022igeood} employ similar terminology to refer to which parts of the model one can access (e.g., its outputs, inputs, or intermediate representations). In contrast, we use these terms to characterize the resources an OOD detection method requires.}
(1) \emph{Black-box}, for methods capable of operating on black-box models (i.e., having access only to input-output mappings) and thus suitable for models integrated into a product; 
(2) \emph{White-box}, for methods that require access to the model's weights and have knowledge about its architecture, and are thus readily applicable to third-party pre-trained models; and 
(3) \emph{Open-box}, for methods with unrestricted access to model and training resources, allowing for interventions in the training process and/or access to training data or separate OOD train or validation sets.

In this paper, we focus on the OOD detection for the Transformer architecture \citep{vaswani2017attention}, which has emerged as the predominant architecture in numerous ML domains. We introduce a novel OOD detection method that leverages the inherent differences in how Transformers process ID and OOD data. The method is white-box and has the potential for broad practical applicability. More concretely, our \textbf{B}etween \textbf{L}ayer \textbf{O}ut-\textbf{O}f-\textbf{D}istribution (\ourmethod{}) Detection method estimates the smoothness of between-layer transformations of intermediate representation, building on the insight that 
these transformations tend to be smoother for ID data than for OOD data.
We evaluate \ourmethod{} on Transformer-based pre-trained large language models applied to text classification, the most prevalent task in natural language processing (NLP), and find that it outperforms other state-of-the-art OOD detection white-box methods and even some open-box methods. We further analyze \ourmethod{} to probe into the underlying causes of the differences between how ID and OOD intermediate representations are transformed and evaluate \ourmethod{} on two other types of distribution shifts -- semantic and background shift. We provide code and data for our experiments.
\footnote{\url{https://github.com/fjelenic/between-layer-ood}}

The contributions of this paper are as follows:
\begin{enumerate*}[label=\textbf{(\arabic*)},ref=\arabic*]
\item We propose \ourmethod{}, a novel method for OOD detection applicable even in cases when only the model's weights are available, e.g., third-party pre-trained models which are becoming \textit{de facto} standard in many fields. \ourmethod{} uses the information about the smoothness of the between-layer transformations of intermediate representations. We quantify this smoothness using the square of the Frobenius norm of the Jacobian matrix, for which we provide an unbiased estimator to alleviate computational limitations. 
\item Our experiments on Transformer-based pre-trained large language models for the task of text classification show that \ourmethod{} outperforms other state-of-the-art white-box OOD detection methods. Additionally, our results indicate that the performance advantages are more prominent when applied to complex datasets as opposed to simpler ones. We also show that \ourmethod{} is more effective in detecting background shift than semantic shift.
\item Following our main insight that between-layer representation transformations of ID data tend to be smoother from that of OOD data, we analyze the source of this difference. We find that the learning algorithm is more focused on changing the ID region of intermediate representation space, smoothing the between-layer transformations of ID data in the process. At the same time, the OOD region of the intermediate representation space is largely left unchanged, except in some scenarios, e.g., for more complex tasks, when the OOD region of the space is also changed and sharpened as a consequence.
\end{enumerate*}

\section{Related Work}

OOD detection methods are typically categorized based on their underlying mechanism, for example, into output-based, gradient-based, distance-based, density-based, and Bayesian methods \citep{yang2021generalized}. Another, and arguably more practically relevant, categorization would factor in the necessary prerequisites for these methods, distinguishing between black-box, white-box, and open-box methods as introduced earlier. In the following,  we provide a brief overview of the most prominent OOD detection methods through this lens.

\textbf{Black-box.}~
Methods with minimal prerequisites typically rely on posterior class probabilities, 
assuming that when a model 
is uncertain about an instance, the instance is more likely to be OOD.
A commonly used baseline quantifies the uncertainty of an instance as the negative of the model's maximum softmax probability for that instance \citep{lee2018simple}. A straightforward modification employs the entropy of softmax probabilities rather than the maximum value. \citet{liu2020energy} proposed using energy scores instead of softmax scores to overcome the issue of DNN overconfidence.

\textbf{White-box.}~
\citet{gal2016dropout} proposed using Monte-Carlo dropout to more reliably estimate the model's uncertainty, showing that dropout \citep{srivastava2014dropout} with DNNs approximates Bayesian inference. Although Monte-Carlo dropout outperforms vanilla posterior probabilities in OOD detection \citep{ovadia2019can}, it is computationally expensive as it requires multiple forward passes. Another way of leveraging the access to model's architecture is to use gradients to implicitly measure the uncertainty of the model's predictions \citep{oberdiek2018classification,huang2021importance}. Gradient methods primarily employ the gradient norm to gauge the difference between the model's posterior distribution and the ideal distribution. 
\citet{djurisic2023extremely} detect OOD data by pruning and adjusting the representations of the model, grounded in the intuition that the representations generated by contemporary DNNs tend to be excessive for their designated tasks.

\textbf{Open-box.}~
Because DNNs posterior probabilities tend to exhibit overconfidence, \citet{guo2017calibration} suggested using temperature scaling to calibrate the model's posterior probabilities, 
which entails the usage of a separate validation set. To get higher quality predictive uncertainty estimates, \citet{lakshminarayanan2017simple} train an ensemble of differently initialized models and combine their predictions. Although ensembles are robust to different distributional shifts \citep{ovadia2019can}, they impose a significant computational and memory overhead because they require training and keeping in memory of multiple models.
\citet{agarwal2022estimating} extend the gradient-based methods by leveraging the variance of the gradient of the predicted label w.r.t.~the input through different checkpoints during training.
A popular approach to OOD detection for DNNs revolves around the utilization of information related to distances in the representation space \citep{lee2018simple,van2020uncertainty,liu2020simple, hsu2020generalized,kuan2022back,sun2022out}. However, these approaches require access to the training data or changes in the standard training procedure.
Yet another set of methods relies on exposing the model to OOD samples during training to improve the performance on OOD detection task \citep{hendrycks2018deep,thulasidasan2021effective,roy2022does}. Still, a major practical limitation of these methods is the necessity for OOD data, whose entire distribution is typically unknown in real-world scenarios.
Several post-hoc methods also need OOD data, but for validation sets to optimize their method's hyperparameters \citep{liang2018enhancing,sun2021react,sun2022dice}.

\section{Preliminaries}

\subsection{Problem statement}

Let instance $\vx \in \mathbb{R}^d$ be a $d$-dimensional feature vector and $y \in \{0,\dots,C-1\}$ be its corresponding class in a $C$-way classification task. We train a classifier on the dataset $\mathcal{D} = \{(\vx_n,y_n)\}^N_{n=1}$ consisting of $N$ instances i.i.d.~sampled from the distribution $p(\vx,y)$. The objective of the learning algorithm is to model the conditional distribution $p(y|\vx)$  based on $\mathcal{D}$ by estimating the parameters $\vtheta$ of the distribution $p_{\vtheta}(y|\vx)$ that is as close as possible to the true conditional distribution.

The goal of an OOD detection method is to determine the uncertainty score $\mathcal{U}_{\vx} \in \mathbb{R}$ of an instance $\vx$, such that there exist $\epsilon \in \mathbb{R}$ for which both $\sP_{\vx \sim p(\vx,y)}(\mathcal{U}_{\vx} < \epsilon)$ and $\sP_{\vx \sim q(\vx,y)} (\mathcal{U}_{\vx} > \epsilon)$ are close to unity whenever $q(\vx,y)$ is a distribution sufficiently different from $p(\vx,y)$. In practice, there can never exist a scoring function that perfectly discriminates between ID examples (generated by $p(\vx,y)$) and OOD examples (generated by $q(\vx,y)$). Nevertheless, even reasonable attempts can prove valuable in real-world scenarios.

\subsection{Intuition}

Transformers work by mapping the input features onto a high-dimensional representation space through $L$ layers using the self-attention mechanism, creating a representation of the data suitable for the task at hand. The mapping is realized as a composition of several attention layers, where each layer creates an intermediate representation of the input. 
It has been show that Transformer-based models tend to gradually progress from input features towards more abstract representation levels through layers, i.e., lower layers model lower-level features, while upper layers model higher-level features. 
For example, \citet{peters2018dissecting,tenney2019bert,jawahar2019does} showed that large Transformer-based language models create text representations that progress gradually from representations that encode morphological and syntactic information at the lower layers to representations that encode semantic meaning in the upper layers.
Likewise, Vision Transformers (ViT) \citep{dosovitskiy2021image}, which are garnering popularity in computer vision, were shown to process images in a similar fashion \citep{ghiasi2022vision}.

We hypothesize that during the model's training, the model learns smooth transformations between layers corresponding to natural and meaningful progressions between abstractions for ID data.
We further hypothesize that these progressions will not match OOD data, hence the transformations will not be smooth for OOD data.
Thus, if we could measure the smoothness of transformations in representations between layers, we could in principle differentiate between ID and OOD data.
We also speculate that the difference in smoothness of transformations between ID and OOD data should be emphasized in the upper layers of a Transformer. Lower layers typically represent low-level features that are more universal, whereas upper layers tend to cluster instances around task-specific features that are not shared between ID and OOD data, potentially creating a mismatch in levels of abstraction.



\subsection{Our method}

Assume an $L$-layered deep neural network $\vf : \R^{d_0} \rightarrow [0, 1]^C$ was trained to predict the probabilities of $C$ classes for a $d_0$-dimensional input $\vx$. Let $\vf$ be a composition of $L$ intermediate functions, $\vf_L \circ \dots \circ \vf_l \circ \dots \circ \vf_1$, where $\vf_l : \R^{d_{l-1}} \rightarrow \R^{d_l}$,  $l=1,\dots,L-1$, correspond to intermediate network layers, while $\vf_L$ corresponds to the last layer, mapping to a vector of logits to which softmax function is applied to obtain the conditional class probabilities. We denote the intermediate representation of $\vx$ in layer $l$ as $\vh_l$, defined as $\vh_l = (\vf_l\circ\dots\circ\vf_1)(\vx)$.

We now need to quantify how smoothly an intermediate representation is transformed from layer $l$ to layer $l+1$. To this end, we first need to define what we consider a smooth transformation. We say a representation $\vh_l$ is transformed smoothly if there is not a large difference in how it is mapped from layer $l$ onto layer $l+1$ compared to how its infinitesimally close neighborhood is mapped. 

Let $\phi_l(\vx)$ be the degree of smoothness of the transformation between representation $\vh_l$ and representation $\vh_{l+1}$ for input $\vx$. To calculate $\phi_l(\vx)$, we compute the Jacobian matrix $\frac{\partial \vf_{l+1}}{\partial \vh_l} = \mJ_{l}: \R^{d_l} \rightarrow \R^{d_{l+1}\times d_l}$, and take the square of its Frobenius norm:
\begin{equation}
    \phi_l(\vx) = \| \mJ_{l}(\vh_l) \|_F^2 = \sum_{i=1}^{d_{l+1}} \sum_{j=1}^{d_{l}} \left(\frac{\partial (f_{l+1})_i} {\partial (h_{l})_j}\right)^2
\end{equation}
In the most popular ML libraries, gradients of a function are computed through automatic differentiation (AD), which comprises both forward mode and backward mode. Forward mode AD computes the values of the function and a Jacobian-vector product. Computing the full Jacobian matrix $\mJ(\vx)$ with AD is computationally expensive as it requires $d$ forward evaluations of $\mJ(\vx) \ve^{(i)}$, $i=1, \dots, d$, where $\ve^{(i)}$ are standard basis vectors, computing the Jacobian matrix one column at a time. In the case of modern DNNs with high-dimensional hidden layers, computing full Jacobians could render our method unfeasible. To reduce computational complexity, we derive an unbiased estimator of $\phi_l(\vx)$ by leveraging Jacobian-vector product computation through forward mode AD.

\begin{corollary}
   Let $\mJ(\vx) \in \R^{m \times n}$ be a Jacobian matrix, and let $\rvv \in \R^n$ and $\rvw \in \R^m$ be random vectors whose elements are independent random variables with zero mean and unit variance. Then, $\E[(\rvw^{\intercal}\mJ(\vx)\rvv)^2] = \| \mJ(\vx) \|_F^2$.
   \label{thr:estimate}
\end{corollary}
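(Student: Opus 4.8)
The plan is to expand the scalar $\rvw^{\intercal}\mJ(\vx)\rvv$ into a double sum over the matrix entries, square it, and push the expectation inside using linearity, so that everything reduces to computing the mixed moments $\E[\rvw_i\rvw_k]$ and $\E[\rvv_j\rvv_l]$. Concretely, writing $J_{ij}$ for the $(i,j)$ entry of $\mJ(\vx)$ (a deterministic quantity at the fixed point $\vx$), I would first record that
\begin{equation}
\rvw^{\intercal}\mJ(\vx)\rvv \;=\; \sum_{i=1}^{m}\sum_{j=1}^{n} \rvw_i\, J_{ij}\, \rvv_j .
\end{equation}

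Next I would square this expression and obtain a quadruple sum,
\begin{equation}
\bigl(\rvw^{\intercal}\mJ(\vx)\rvv\bigr)^2 \;=\; \sum_{i,k=1}^{m}\sum_{j,l=1}^{n} J_{ij} J_{kl}\, \rvw_i \rvw_k \, \rvv_j \rvv_l ,
\end{equation}
and then take expectations term by term. Since $\rvv$ and $\rvw$ are independent, $\E[\rvw_i\rvw_k\,\rvv_j\rvv_l] = \E[\rvw_i\rvw_k]\,\E[\rvv_j\rvv_l]$; since the entries of $\rvw$ are independent with zero mean and unit variance, $\E[\rvw_i\rvw_k] = \delta_{ik}$, and likewise $\E[\rvv_j\rvv_l] = \delta_{jl}$. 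Substituting these Kronecker deltas collapses the quadruple sum to the diagonal terms $i=k$, $j=l$, leaving $\sum_{i,j} J_{ij}^2 = \|\mJ(\vx)\|_F^2$, which is the claimed identity.

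There is essentially no hard step here: the only things to be careful about are (i) justifying the interchange of expectation and the finite sum, which is immediate by linearity of expectation over finitely many terms, and (ii) noting that all relevant second moments are finite, which follows from the unit-variance, zero-mean assumption (so $\E[\rvw_i^2]=1$ and $\E[\rvv_j^2]=1$). If anything, the one subtlety worth a sentence is that $\mJ(\vx)$ is treated as a constant matrix throughout — the randomness lives entirely in $\rvv$ and $\rvw$ — so no independence between $\mJ(\vx)$ and the probe vectors needs to be invoked. I would also remark that this is exactly the Hutchinson-type trace/norm estimator specialized to $\|\mJ\|_F^2 = \tr(\mJ^{\intercal}\mJ)$, which gives an unbiased one-sample estimate of $\phi_l(\vx)$ at the cost of a single Jacobian-vector product (via forward-mode AD) followed by an inner product with $\rvw$.
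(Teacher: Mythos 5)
Your proof is correct, and it takes a more direct route than the paper. You expand $\rvw^{\intercal}\mJ(\vx)\rvv$ as a quadruple sum and compute the expectation in one pass using $\E[\rvw_i\rvw_k]=\delta_{ik}$ and $\E[\rvv_j\rvv_l]=\delta_{jl}$; the paper instead factors the argument through two lemmas — first showing $\E[\|\mA\rvv\|_2^2]=\|\mA\|_F^2$, then specializing to row vectors — and finishes by conditioning on $\rvv$ and applying the tower property, $\E[(\rvw^{\intercal}\mA\rvv)^2]=\E\bigl[\E[(\rvw^{\intercal}\mA\rvv)^2\mid\rvv]\bigr]=\E[\|\mA\rvv\|_2^2]$. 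The two approaches prove the same thing under the same effective hypotheses: your computation visibly uses only the autocorrelation conditions $\E[\rvv\rvv^{\intercal}]=\mI_n$, $\E[\rvw\rvw^{\intercal}]=\mI_m$ and the independence of $\rvv$ from $\rvw$, so it in fact establishes the paper's more general Theorem 2 directly, not just the corollary. What the paper's layered structure buys is the intermediate estimator $\|\mA\rvv\|_2^2$ as a named object, which it then uses in the remarks to compare variances (the one-sided projection onto $\rvw$ can only increase variance) and to connect to Hutchinson's trace estimator — a connection you also correctly note. Your observation that $\mJ(\vx)$ is a constant matrix and that no independence between it and the probe vectors is needed matches the paper's own Remark 1.
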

We prove \Cref{thr:estimate} in the \Cref{sec:appendix:proof} by providing a proof for more general \Cref{thr:general}.
As for the intuition behind the corollary, the Jacobian-vector product $\mJ(\vx)\rvv$ gives us an appropriately scaled gradient with respect to the change of the input in the direction of vector $\rvv$. Further multiplying the Jacobian-vector product $\mJ(\vx)\rvv$ by the random vector $\rvw$ from the left projects the calculated directional gradient $\mJ(\vx)\rvv$ on the vector $\rvw$, i.e., it quantifies the extent to which the output changes in the direction of $\rvw$ when the input changes in the direction of $\rvv$. Squaring the vector-Jacobian-vector product then gives an estimate of the sum of squared entries of the Jacobian, i.e., the square of its Frobenius norm. Squaring also handles negative values (in cases when the angle between the directional gradient $\mJ(\vx)\rvv$ and the vector $\rvw$ is obtuse), since we are interested in the overall smoothness as defined by Frobenius norm rather than the direction of the specific gradient.\footnote{Our notion of smoothness extends from Lipschitz continuity, where the spectral norm of the Jacobian acts as a lower bound for the Lipschitz constant \citep{rosca2020case}. Since all matrix norms are equivalent, we use the Frobenius norm, which can be efficiently computed, rather than the spectral norm to capture smoothness.}

 To calculate the unbiased estimate $\hat{\phi_l}(\vx)$ of $\phi_l(\vx)$, we use a sample of $M$ pairs of random vectors $\rvv_l \sim \mathcal{N} (\vzero_n , \mI_n)$ and $\rvw_l \sim \mathcal{N} (\vzero_m , \mI_m)$, and define $\hat{\phi_l}(\vx)$ as:
\begin{align}
    \hat{\phi_l}(\vx) &= \frac{1}{M} \sum_{i=1}^M \left(\rvw_{l,i}^{\intercal}\mJ_l(\vh_l)\rvv_{l,i} \right)^2
\end{align}
\ourmethod{} uses $\hat{\phi}_l(\vx)$ as the uncertainty score of an instance $\vx$.
In our experiments, we consider two variations of \ourmethod{}: (1) the average of scores for all layers \ourmethodmean{} $ = \frac{1}{L-1} \sum_{l=1}^{L-1}\hat{\phi}_l(\vx)$, and (2) the score for the projection of \ourmethodL{} $= \hat{\phi}_{L-1}(\vx)$. 
We use the two variants to assess the impact of layer choice, as we hypothesize that \ourmethod{} will perform better on upper layers, given that lower layers capture low-level, general features.


\section{Experiments}
\label{sec:experiments}


\subsection{Experimental setup}

We evaluate \ourmethod{} on several text classification datasets using two transformer-based \citep{vaswani2017attention} large pre-trained language models, RoBERTa \citep{liu2019roberta} and ELECTRA \citep{clark2020electra}, known for their state-of-the-art performance across a wide range of NLP tasks. We calculate the \ourmethod{} score using samples of size $M=50$ to estimate $\hat{\phi}_l(\vx)$ of \texttt{[CLS]} token's representations between layers.
We use eight text classification datasets for ID data: SST-2 \cite[\textbf{\textsc{sst};}][]{socher2013recursive}, Subjectivity \cite[\textbf{\textsc{subj};}][]{Pang+Lee:04a}, AG-News \cite[\textbf{\textsc{agn};}][]{zhang2015character}, and TREC \cite[\textbf{\textsc{trec};}][]{li2002learning}, BigPatent \cite[\textbf{\textsc{bp};}][]{sharma2019bigpatent}, AmazonReviews \cite[\textbf{\textsc{ar};}][]{mcauley2015image}, MovieGenre \cite[\textbf{\textsc{mg};}][]{maas2011learning}, 20NewsGroups \cite[\textbf{\textsc{ng};}][]{lang1995newsweeder}. We use One Billion Word Benchmark (\textbf{\textsc{obw}}) \citep{chelba2013one} for OOD data, similarly to \citet{ovadia2019can}, because of the diversity of the corpus. We subsample OOD datasets to be of the same size as their ID test set counterparts. \Cref{sec:appendix:exp-design} provides more details about the models, datasets, and training procedures.



We compare \ourmethod{} to several state-of-the-art black-box and white-box OOD detection methods:
\begin{enumerate*}[label=(\arabic*),ref=\arabic*]
    \item \textbf{Maximum softmax probability} (\textbf{MSP}) -- the negative posterior class probability of the most probable class, $- \max_{c}p(y=c|\vx)$, often considered a baseline OOD detection method\citep{hendrycksbaseline};
    \item \textbf{Entropy} (\textbf{ENT}) -- the entropy of the posterior class distribution, $\sH[Y|\vx,\vw]$;
    \item \textbf{Energy} (\textbf{EGY}) -- a density-based method that overcomes the overconfidence issue by calculating energy scores from logits $-\log \sum_{i=0}^{C-1}e^{f_{L}(\vx)_i}$ instead of softmax scores \citep{liu2020energy};
    \item \textbf{Monte-Carlo dropout} (\textbf{MC}) -- the entropy of predictive distribution obtained using Monte-Carlo dropout \citep{gal2016dropout}. We use $M=30$ stochastic forward passes to estimate uncertainty;
    \item \textbf{Gradient norm} (\textbf{GRAD}) -- the L2-norm of the penultimate layer's gradient of the loss function with most likely class considered as a true class \citep{oberdiek2018classification}.
    \item \textbf{Activation shaping} (\textbf{ASH}) -- removing 90\% of the smallest activations and adjusting the rest using ASH-S method in the penultimate layer  \citep{djurisic2023extremely}.
\end{enumerate*}

Additionally, we compare \ourmethod{} to three standard open-box OOD detection methods. 
Given that these methods entail considerably more prerequisites compared to \ourmethod{} and other \whitebox{} methods, this comparison is intended solely as a reference point:
\begin{enumerate*}[label=(\arabic*),ref=\arabic*]
    \item \textbf{Rectified Activations} (\textbf{ReAct}) -- setting the values of the activations in the penultimate layer to be at most the 90th percentile of the activations of the training data \citep{sun2021react}.
    \item \textbf{Ensemble} (\textbf{ENSM}) -- an ensemble of $M=5$ models of the same type, e.g., an ensemble of five RoBERTa or ensemble of five ELECTRA models, \citep{lakshminarayanan2017simple};
    \item \textbf{Temperature scaling} (\textbf{TEMP}) -- introduces a temperature parameter $T$ into the softmax function such that it minimizes the negative log-likelihood on the ID validation set \citep{guo2017calibration};
    \item \textbf{Mahalanobis distance} (\textbf{MD}) -- Mahalanobis distance of a query instance in the representation space with respect to the closest class-conditional Gaussian distribution \citep{lee2018simple}.
\end{enumerate*}

\subsection{OOD detection performance}

As the performance measure for OOD detection, we follow the standard practice and use the area under the receiver operating characteristic curve (AUROC) metric (in \Cref{sec:appendix:extra-results}, we report the results using two other commonly used metrics, AUPR-IN and FPR@95TPR; these gave qualitatively identical results as AUROC). The OOD detection task is essentially a binary classification task, with AUC corresponding to the probability that a randomly chosen OOD instance will have a higher uncertainty score than a randomly chosen ID instance \citep{fawcett2006introduction}.
%
The AUROC for random value assignment is 50\%, while a perfect method achieves 100\%. We run each experiment five times with different random seeds  and report the mean AUROC.

\begin{table}[]
\caption{The performance of OOD detection methods measured by AUROC (\%). The best-performing \whitebox{} method is in \textbf{bold}. Open-box methods that outperform the best-performing \whitebox{} method are in \textbf{bold}. Higher is better. We test the performance of \ourmethodmean{} and \ourmethodL{} against the MSP baseline using the one-sided Man-Whitney U test; significant improvements ($p<.05$) are indicated with asterisks ($^*$).
}
\small
\centering
\scalebox{0.76}{
\begin{tabular}{ll|cccccccc|cccc} 
\toprule
\multirow{2}{*}{Model} & \multirow{2}{*}{Dataset} & \multicolumn{8}{c|}{White-box/Black-box} & \multicolumn{4}{c}{Open-box}\\
& & \ourmethodmean{} & \ourmethodL{} & MSP & ENT & EGY & MC & GRAD & ASH & ReAct & ENSM & TEMP & MD\\
\midrule
\multirow{8}{*}{\rotatebox[origin=c]{90}{RoBERTa}} & \textsc{sst} & 50.56 & \textbf{72.83} & 71.69 & 71.69 & 71.61 & 68.28 & 71.76 & 67.22 & 69.55 & 69.03 & 71.64 & \textbf{85.36}\\ 
& \textsc{subj} & 52.02 & 74.66 & 74.55 & 74.55 & 75.79 & 74.21 & 74.93 & \textbf{79.27} & 73.33 & 76.68 & 74.41 & \textbf{93.47}\\
& \textsc{agn} & 77.46 & 61.95 & 73.57 & 73.80 & 76.36 & \textbf{77.55} & 73.58 & 72.54 & 77.10 & \textbf{80.35} & 75.38 & \textbf{82.63}\\
& \textsc{trec} &  69.63 & 95.30 & 96.20 & \textbf{96.40} & 96.28 & 95.68 & 96.14 & 90.36 & 96.05 & \textbf{96.87} & \textbf{96.74} & \textbf{96.74}\\ 
& \textsc{bp} & 87.20\nospacetext{$^*$} & \textbf{89.53}\nospacetext{$^*$} & 70.15 & 72.82 & 85.84 & 74.29 & 73.11 & 82.18 & 86.19 & 79.39 & 86.01 & \textbf{97.35}\\
& \textsc{ar} & 91.41\nospacetext{$^*$} & \textbf{93.20}\nospacetext{$^*$} & 89.06 & 89.96 & 92.39 & 90.59 & 88.65 & 91.42 & 92.65 & 92.44 & 92.25 & \textbf{98.35}\\
& \textsc{mg} & \textbf{88.15}\nospacetext{$^*$} & 85.23\nospacetext{$^*$} & 75.02 & 76.60 & 86.45 & 79.98 & 74.28 & 81.62 & 87.30 & 76.98 & 84.30 & \textbf{95.12}\\
& \textsc{ng} & \textbf{83.53}\nospacetext{$^*$} & 72.02 & 77.49 & 78.76 & 82.65 & 79.32 & 76.93 & 77.73 & 83.17 & 80.77 & 82.87 & \textbf{90.68}\\
\midrule
\multirow{8}{*}{\rotatebox[origin=c]{90}{ELECTRA}} & \textsc{sst} & 74.36 & \textbf{78.11}\nospacetext{$^*$} & 73.84 & 73.84  & 71.97 & 70.81 & 73.82 & 67.92 & 71.18 & 73.81 & 73.58 & \textbf{78.85}\\
& \textsc{subj} & 74.10 & 77.41 & \textbf{78.17} & \textbf{78.17} & 70.46 & 77.71 & 78.11 & 75.11 & 68.33 & \textbf{79.23} & \textbf{78.20} & \textbf{81.59}\\
& \textsc{agn} & 65.67 & \textbf{80.28} & 76.80 & 77.01 & 79.75 & 79.55 & 76.57 & 77.96 & 79.46 & 79.50 & 78.31 & \textbf{86.10}\\
& \textsc{trec} & 97.48 & \textbf{98.90}\nospacetext{$^*$} & 97.26 & 97.56 & 97.48 & 96.21 & 97.07 & 90.18 & 97.50 & 97.55 & 98.20 & 97.54 \\
& \textsc{bp} & 86.06\nospacetext{$^*$} & \textbf{96.72}\nospacetext{$^*$} & 78.56 & 81.75 & 84.63 & 83.04 & 76.77 & 79.81 & 85.26 & 84.20 & 84.69 & \textbf{98.28}\\
& \textsc{ar} & 84.58 & \textbf{91.66}\nospacetext{$^*$} & 87.74 & 88.44 & 90.64 & 88.53 & 87.52 & 83.96 & 91.01 & \textbf{91.98} & 90.35 & \textbf{95.47}\\
& \textsc{mg} & 80.52 & \textbf{90.63}\nospacetext{$^*$} & 73.83 & 74.78 & 80.41 & 76.67 & 73.35 & 71.84 & 81.22 & 76.86 & 78.47 & \textbf{92.96}\\
& \textsc{ng} & 77.61 & \textbf{82.47}\nospacetext{$^*$} & 76.45 & 77.73 & 80.83 & 79.11 & 75.97 & 74.50 & 80.95 & 79.93 & 80.75 & \textbf{89.13}\\
\bottomrule
\end{tabular}
}
\label{tab:agnostic-ood}
\end{table}

OOD detection performance is shown in~\Cref{tab:agnostic-ood}. The first observation is that \ourmethod{} outperforms other \whitebox{} methods. Secondly, \ourmethodL{} outperforms other \whitebox{} methods more often than \ourmethodmean{}, thus in the rest of the experiments we focus on \ourmethodL{}. Lastly, 
while BLOOD demonstrates superior performance on most datasets, the improvements are more consistently observed when applied with ELECTRA compared to RoBERTa. Interestingly, the datasets where BLOOD with RoBERTa outperforms  other \whitebox{} methods (\textsc{sst}, \textsc{bp}, \textsc{ar}, \textsc{mg}, and \textsc{ng}) appear to be more complex, as indicated by the minimum description length  \citep{perez2021rissanen} (cf.~\Cref{sec:appendix:exp-design}).
We offer explanations for these observations in  sections \ref{sec:diff-source} and \ref{sec:complexity}. 

Compared to open-box methods, \ourmethod{} is outperformed by MD in all setups except when using ELECTRA on the \textsc{trec} dataset. However, \ourmethod{} remains competitive with ENSM and TEMP.
Unlike the findings in \citep{ovadia2019can}, the dominance of ENSM is reduced. This is likely because we employ a pre-trained language model ensemble, while they use entirely randomly initialized models. In our ensemble, the model parameters exhibit minimal variation since all models are pre-trained. Variability between models arises solely from the random initialization of the classification head and the stochastic nature of the training process. The high performance of MD on transformer-based language models is aligns with prior research \citep{podolskiy2021revisiting}.

\subsection{Source of the differences in transformations of ID and OOD data}
\label{sec:diff-source}

\begin{figure}[t!]
\small
\centering
\begin{subfigure}{.49\linewidth}
  \centering
  \includegraphics[width=\linewidth]{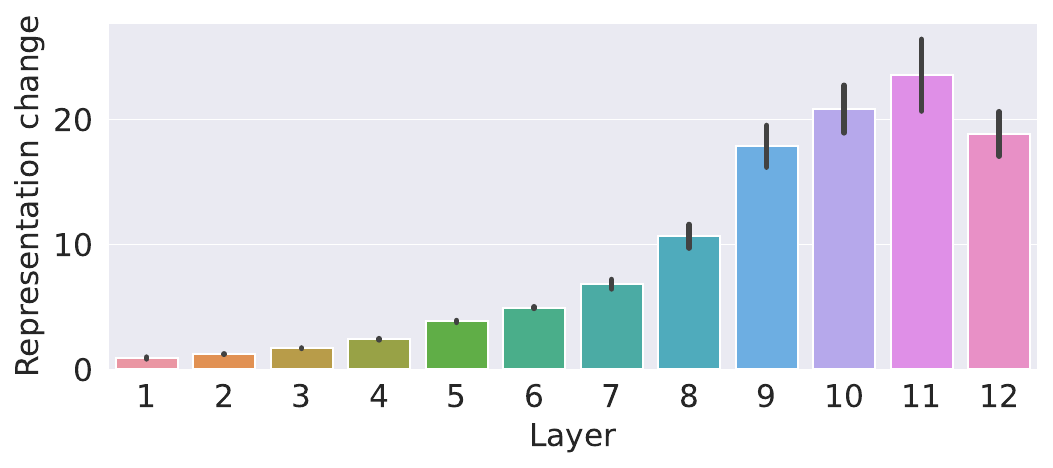}
  \caption{RoBERTa's representation change}
\end{subfigure}
\begin{subfigure}{.49\linewidth}
  \centering
  \includegraphics[width=\linewidth]{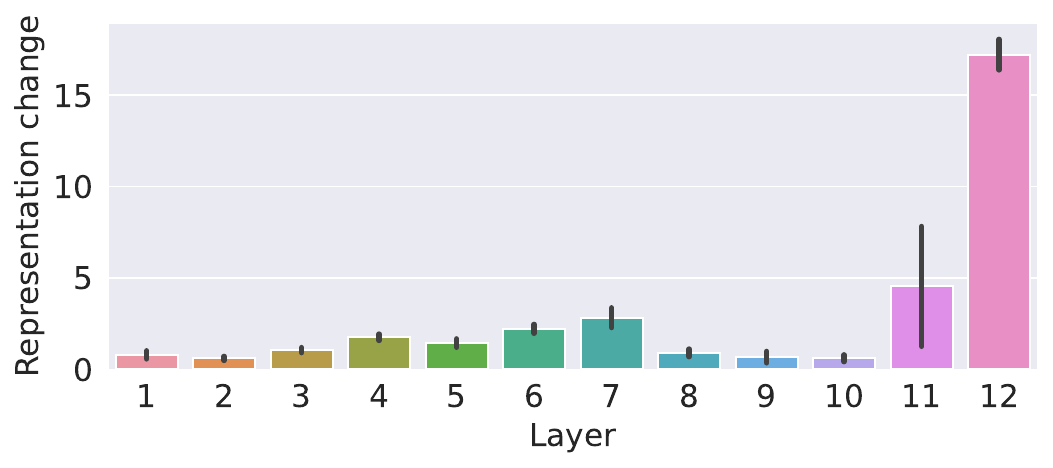}
  \caption{ELECTRA's representation change}
\end{subfigure}
\begin{subfigure}{.49\linewidth}
  \centering
  \includegraphics[width=\linewidth]{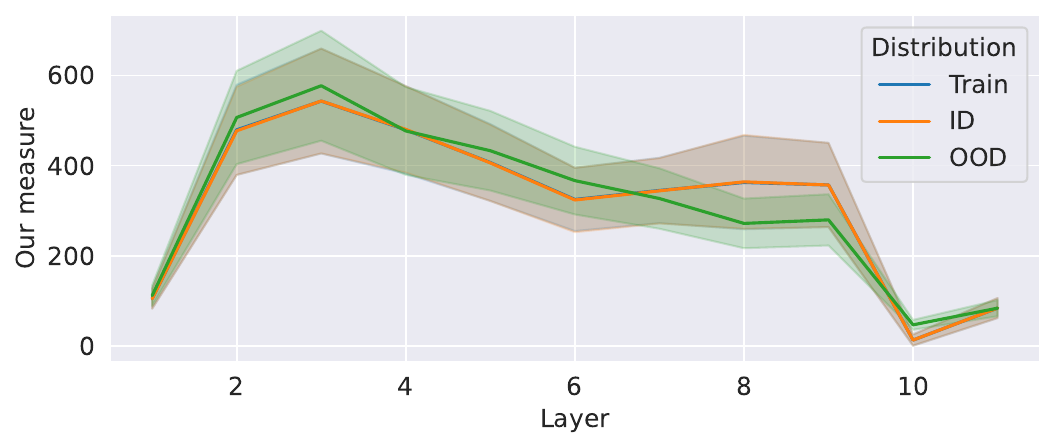}
  \caption{Pre-trained RoBERTa's \ourmethod{} score by layer}
\end{subfigure}
\begin{subfigure}{.49\linewidth}
  \centering
  \includegraphics[width=\linewidth]{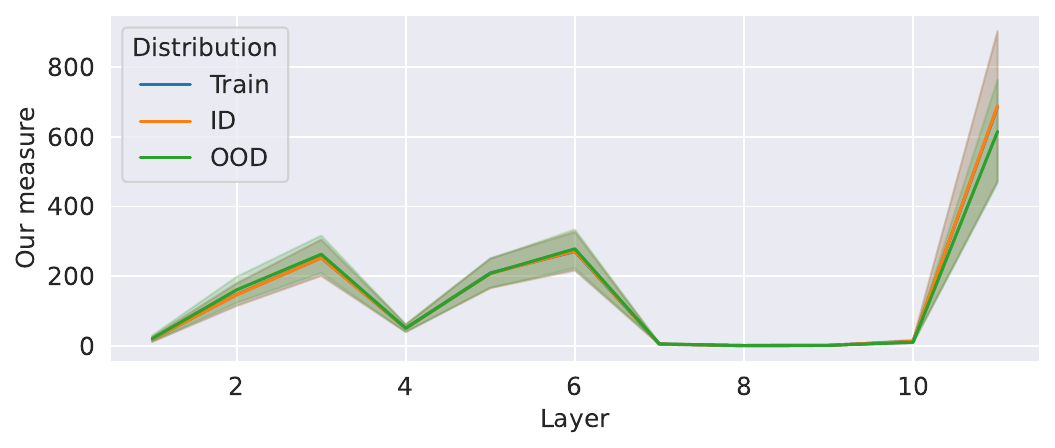}
  \caption{Pre-trained ELECTRA \ourmethod{} score by layer}
\end{subfigure}
\begin{subfigure}{.49\linewidth}
  \centering
  \includegraphics[width=\linewidth]{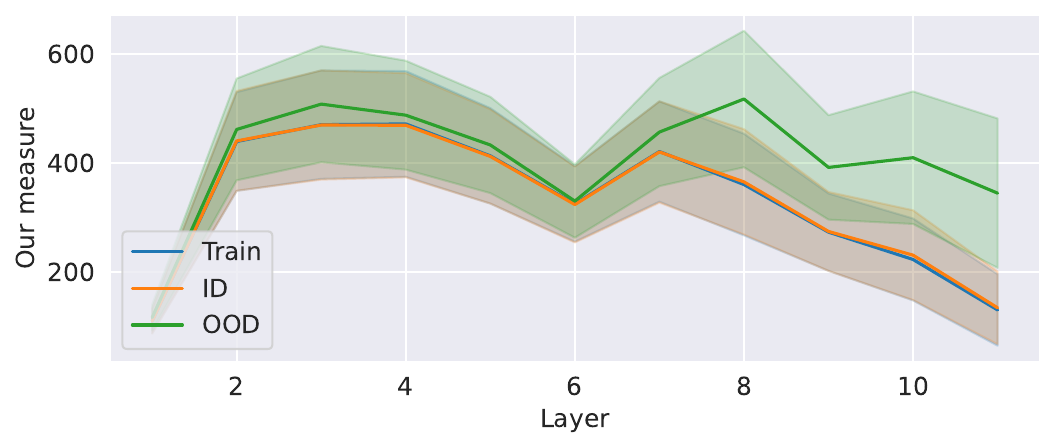}
  \caption{Fine-tuned RoBERTa's \ourmethod{} score by layer}
\end{subfigure}
\begin{subfigure}{.49\linewidth}
  \centering
  \includegraphics[width=\linewidth]{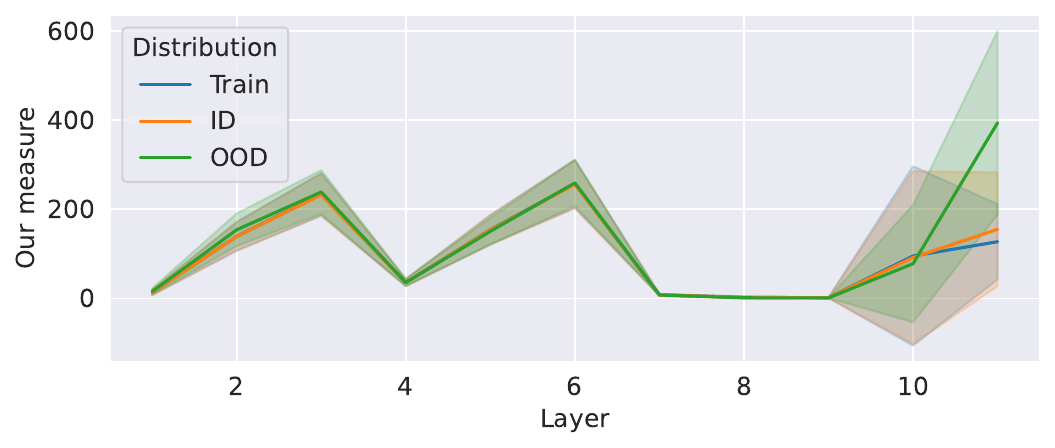}
  \caption{Fine-tuned ELECTRA \ourmethod{} score by layer}
\end{subfigure}
\caption{The impact of change of each layer on \ourmethod{} score across layers. Top row: Change in intermediate representations of training instances by layer for (a) RoBERTa and (b) ELECTRA. The scores are averaged across instances for the \textsc{ar} dataset. The black error bars denote the standard deviation. Middle row: \ourmethod{} score by layer of models for \textsc{ar} before fine-tuning. Bottom row: \ourmethod{} score by layer of models for \textsc{ar} after fine-tuning. 
\vspace*{-12px}}
\label{fig:repr-change}
\end{figure}

Understanding which layers of the model are impacted by the model's training could shed some light on the behavior of our method. 
To find out how much each layer has learned, we examine the changes in intermediate representations of instances after training. 
For simplicity, we use the Euclidean distances $\| \vr_{\text{init}} - \vr_{\text{FT}} \|_2$ between representations of the initialized model ($\vr_{\text{init}}$) and the representations after fine-tuning the model ($\vr_{\text{FT}}$). 
We calculate this distance for all instances in the training set at each of the model's layers and then compute the average for each layer.

\Cref{fig:repr-change} illustrates the extent of representation changes in training data alongside \ourmethod{} scores before and after fine-tuning at each intermediate layer. The representations of the upper layers change significantly more than the representations of the lower layers.
This is expected since transformer-based language models learn morphological- and syntactic-based features in the lower layers, which are similar between tasks and can be mostly reused from the pre-training. In contrast, higher layers learn more task-specific features such as context and coreferences \citep{peters2018dissecting,tenney2019bert,jawahar2019does}. 
Our hypothesis posits that the smooth transformations of ID data are a by-product of the learning algorithm learning the natural progression between abstractions. Consequently, layers more impacted by training will exhibit smoother transformations, which explains why \ourmethodL{} outperforms \ourmethodmean{} on the OOD detection task. 
This effect becomes apparent when comparing the representation change (upper row of \Cref{fig:repr-change}) with the \ourmethod{} score (lower two rows of \Cref{fig:repr-change}) across layers, with a more significant difference in transition smoothness between ID and OOD data observed in layers where representations have undergone more substantial changes overall.
The effect is particularly emphasized in ELECTRA, where the last layer undergoes the most significant change, resulting in \ourmethodL{} performing exceptionally well due to the radical smoothing of the final transformation.

We also anticipate that the representations of ID data will undergo more significant changes after fine-tuning than those of OOD data, given the model's focus on the ID region of the representation space during training.
This effect would cause a difference in smoothness because the ID region of the space would be smoothed out while the OOD region of the space would keep its original sharpness. Same as above, we calculate the change in representations using Euclidean distance of representations before and after fine-tuning.
We then quantify the difference between changes in representations of ID and OOD data using the common language effect size (CLES) \citep{mcgraw1992common}, corresponding to the probability that representations of ID data exhibited greater changes after training than representations of OOD data.\footnote{The CLES statistics quantifies the effect size of the difference between two samples. It is equivalent to AUC of the corresponding univariate binary classifier, representing the probability that a randomly selected score from the first sample will exceed a randomly selected score from the second sample.} 
We measure this difference for the model's last layer and the mean difference across all layers.

\Cref{tab:repr-diff} shows the effect size quantified using CLES for the changes in representations between ID and OOD data. In most setups, CLES is far above 50\%, which means that representations of ID data underwent more significant changes than those of OOD data. The results imply that the learning algorithm's focus during training is on the ID region of the representation space. In contrast, the rest of the representation space is largely unaltered. Moreover, the difference in transformation smoothness between layers, observed between ID and OOD data, can be attributed to the inherently non-smooth transformations of the initialized models.  These non-smooth transformations gradually become smoother within the ID region.
However, more complex datasets (\textsc{bp}, \textsc{ar}, \textsc{mg}, and \textsc{ng}) in conjunction with the RoBERTa model contradict our initial expectation. In these cases, CLES approaches or even drops below 50\%. 
This indicates that the ID region of the representation space undergoes similar or even lesser changes compared to the rest of the representation space.

\begin{wraptable}{R}{.47\linewidth}
\caption{Effect size of the changes in representations between ID and OOD data. We calculate CLES (\%) averaged across layers (Mean) and for the last layer (Last), showing averages over five random seeds with standard deviation.}
\small
\centering
\scalebox{0.77}{
\begin{tabular}{llccc}
\toprule
Model & Dataset & Mean & Last \\
\midrule
\multirow{8}{*}{RoBERTa} & \textsc{sst} & 66.86 $\pm$ \phantom{0}5.90 & 63.91 $\pm$ \phantom{0}5.64 \\ 
& \textsc{subj} & 78.77 $\pm$ \phantom{0}9.61 & 68.08 $\pm$ 10.53 \\
& \textsc{agn} & 73.28 $\pm$ \phantom{0}3.59 & 60.18 $\pm$ \phantom{0}4.38 \\
& \textsc{trec} & 90.63 $\pm$ \phantom{0}7.19 & 74.02 $\pm$ 21.03 \\ 
& \textsc{bp} & 55.98 $\pm$ 29.52 & 39.65 $\pm$ 16.38\\
& \textsc{ar} & 52.52 $\pm$ 15.53 & 33.83 $\pm$ \phantom{0}8.45 \\
& \textsc{mg} & 34.40 $\pm$ \phantom{0}9.56 & 46.23 $\pm$ 11.90 \\
& \textsc{ng} & 40.93 $\pm$ \phantom{0}8.51 & 49.56 $\pm$ \phantom{0}9.14 \\
\midrule
\multirow{8}{*}{ELECTRA} & \textsc{sst} & 82.09 $\pm$ \phantom{0}1.31 & 78.67 $\pm$ \phantom{0}0.97 \\
& \textsc{subj} & 77.43 $\pm$ 13.52 & 75.61 $\pm$ 14.63 \\
& \textsc{agn} & 81.28 $\pm$ \phantom{0}3.62 & 80.82 $\pm$ \phantom{0}4.23 \\
& \textsc{trec} & 99.86 $\pm$ \phantom{0}0.05 & 99.10 $\pm$ \phantom{0}0.54\\
& \textsc{bp} & 93.35 $\pm$ \phantom{0}2.00 & 82.80 $\pm$ \phantom{0}3.19 \\
& \textsc{ar} & 82.21 $\pm$ \phantom{0}9.59 & 81.95 $\pm$ \phantom{0}7.70 \\
& \textsc{mg} & 83.88 $\pm$ \phantom{0}6.01 & 83.83 $\pm$ \phantom{0}7.70 \\
& \textsc{ng} & 79.08 $\pm$ \phantom{0}8.84 & 80.16 $\pm$ \phantom{0}4.60 \\
\bottomrule
\end{tabular}
}
\label{tab:repr-diff}
\end{wraptable}

Our interpretation of this phenomenon is that the algorithm faces greater difficulty in fitting the data, necessitating more substantial adjustments to the model. These significant alterations not only result in smoothing out transitions for ID data but, as a consequence, also make transformations in the rest of the space less smooth.
This would explain the improved performance of \ourmethod{} in conjunction with RoBERTa on these datasets, as the difference in transformation smoothness is attributed not only to the smoothing of the ID region of the space but also to the reduction in smoothness of the remaining space.
This sharpening effect in the region populated by OOD data is evident when comparing sub-figures (c) and (e) in \Cref{fig:repr-change}. 

\subsection{The effect of dataset complexity}
\label{sec:complexity}

In the previous subsection, we demonstrated that \ourmethod{} performs better on more complex datasets compared to simpler ones.\footnote{We support this finding by calculating the Pearson correlation coefficient between MDL and difference in AUROC of \ourmethodmean{} (to capture the influence on all layers in the model) and the baseline method (MSP) for each dataset. We found a significant ($p<.05$) correlation of  0.79 for RoBERTa and 0.73 for ELECTRA.} To investigate this phenomenon further, we re-evaluate the performance of OOD detection methods on simplified versions of the more complex datasets. Specifically, we use the binary classification datasets \textsc{bp2}, \textsc{ar2}, and \textsc{mg2}, which are derived from \textsc{bp}, \textsc{ar}, and \textsc{mg} datasets, respectively, by retaining only two classes (cf.~\Cref{sec:appendix:exp-design} for additional details).

\begin{table}[]
\caption{The performance of OOD detection methods for the simplified datasets measured by AUROC (\%). The best-performing \whitebox{} method is in \textbf{bold}. Open-box methods that outperform all \whitebox{} methods are in \textbf{bold}. Higher is better. The right side of the table shows a comparison of changes in representations between ID and OOD data using CLES (\%).}
\small
\centering
\scalebox{0.77}{
\addtolength{\tabcolsep}{-0.1em}
\begin{tabular}{@{}ll|ccccccc|cccc|cc@{}} 
\toprule
\multirow{2}{*}{Model} & \multirow{2}{*}{Dataset} & \multicolumn{7}{c|}{White-box/Black box} & \multicolumn{4}{c|}{Open-box} & \multicolumn{2}{c}{CLES}\\
 &  & \ourmethodL{} & MSP & ENT & EGY & MC & GRAD & ASH & ReAct & ENSM & TEMP & MD & Mean & Last\\
\midrule
\multirow{3}{*}{\rotatebox[origin=c]{0}{RoBERTa}} & \textsc{bp2} & 79.66 & 89.74 & 89.74 & 88.23 & 88.92 & \textbf{89.84} & 82.66 & 87.60 & 87.59 & \textbf{89.92} & \textbf{97.66} & 94.57 & 84.27 \\
& \textsc{ar2} & 88.20 & 93.33 & 93.33 & \textbf{94.27} & 93.30 & 93.58 & 92.63 & 93.31 & \textbf{94.55} & 93.34 & \textbf{99.02} & 91.84 & 80.47 \\
& \textsc{mg2}  & 84.78 & 78.13 & 78.13 & \textbf{85.44} & 82.62 & 78.28 & 74.05 & \textbf{85.95} & 83.95 & 78.23 & \textbf{97.48} & 86.80 & 70.25\\
\midrule
\multirow{3}{*}{\rotatebox[origin=c]{0}{ELECTRA}} & \textsc{bp2} & 71.71 & \textbf{93.23} & \textbf{93.23} & 92.51 & 92.61 & 93.20 & 86.93 & 92.24 & 91.25 & \textbf{93.34} & \textbf{98.75} & 97.28 & 94.87 \\
& \textsc{ar2}  & 90.67 & \textbf{96.16} & \textbf{96.16} & 93.80 & 95.47 & 96.14  & 91.95 & 93.40 & 95.20 & \textbf{96.20} & 93.22 & 97.07 & 96.22\\
& \textsc{mg2}  & \textbf{91.41} & 88.02 & 88.02 & 85.08 & 88.55 & 88.10 & 76.48 & 85.11 & 84.12 & 87.95 & \textbf{98.28} & 88.28 & 87.10\\
\bottomrule
\end{tabular}
}
\label{tab:agnostic-ood-augmented}
\end{table}

\Cref{tab:agnostic-ood-augmented} shows AUROC for the OOD detection task on simplified datasets, as well as the CLES of representation changes. We observe a decrease in AUROC for \ourmethod{} in comparison to the AUROC on the original datasets, while the AUROC of other \whitebox{} methods shows an increase. The drop in AUROC for \ourmethod{} can be explained by examining the CLES of representation changes, which exhibits a notable increase compared to the original datasets in the case of RoBERTa, and even a slight increase for ELECTRA. The rise in CLES of the change in representations suggests that the models managed to learn the task without the need to sharpen the transformations of the OOD data, thereby reducing the ability of \ourmethod{} to detect OOD instances. 

We suspect that the increase in AUROC for the other \whitebox{} methods may be attributed to the same factor that led to the AUROC decrease in \ourmethod{} -- namely, the task's simplicity. However, this cause manifests differently. The simplified datasets, having fewer ambiguous instances in their test sets due to the reduced number of classes, allow the other (probabilistic) methods to more accurately attribute the estimated uncertainty to the OOD data. See \Cref{sec:appendix:carto} for a more detailed explanation and visualization using dataset cartography \citep{swayamdipta2020dataset}.

\subsection{Types of distribution shift}

Another important aspect to consider for OOD detection is the type of distribution shift. Up to this point, we have only considered OOD data coming from a distribution entirely different than that of the ID data, which is referred to as Far-OOD by \citet{baran2023classical}. We next examine the performance of OOD detection methods on Near-OOD data, which arises from either a semantic or a background shift. For the semantic shift, in line with  \citet{ovadia2019can}, we designate the even-numbered classes of \textsc{ng} dataset as ID and the odd-numbered classes as Near-OOD data. For the background shift,  following  \citet{baran2023classical}, we use the \textsc{sst} dataset as ID and the Yelp Review sentiment classification dataset \citep{zhang2015character} as Near-OOD data. 

\begin{table}[]
\caption{The performance of OOD detection methods on the task of Near-OOD detection measured by AUROC (\%). The best-performing \whitebox{} method is in \textbf{bold}. Open-box methods that outperform all \whitebox{} methods are in \textbf{bold}. Higher is better.}
\small
\centering
\scalebox{0.77}{
\begin{tabular}{cc|ccccccc|cccc} 
\toprule
\multirow{2}{*}{Model} & \multirow{2}{*}{Shift} & \multicolumn{7}{c|}{White-box/Black-box} & \multicolumn{4}{c}{Open-box}\\
 &  & \ourmethodL{} & MSP & ENT & EGY & MC & GRAD & ASH & ReAct & ENSM & TEMP & MD \\
\midrule
\multirow{2}{*}{RoBERTa} & Semantic & 61.61 & 69.46 & \textbf{69.50} & 69.41 & 68.34 & 69.36 & 66.50 & 69.46 & 68.91 & \textbf{70.56} & \textbf{72.03}\\
& Background & \textbf{62.70} & 54.26 & 54.26 & 50.17 & 48.18 & 54.33 & 50.46 & 49.32 & 49.13 & 54.19 & 59.40 \\
\midrule
\multirow{2}{*}{ELECTRA} & Semantic & 62.49 & 63.17 & 63.12 & 60.92 & 62.14 & \textbf{63.23} & 56.85 & 61.00 & \textbf{65.67} & 62.45 & \textbf{64.22}\\
& Background & \textbf{59.35} & 42.96 & 42.96 & 38.68 & 37.96 & 42.77 & 40.66 & 38.53 & 41.25 & 42.63 & 39.31 \\
\bottomrule
\end{tabular}
}
\label{tab:agnostic-ood-shifted}
\end{table}

\Cref{tab:agnostic-ood-shifted} shows the OOD detection performance on the semantic and background shift detection tasks. 
For the semantic shift, \ourmethod{} exhibits suboptimal performance. However, in the case of the background shift, it notably outperforms all other methods, including the open-box approaches, some of which even perform worse than random.
We suspect the subpar performance of other OOD detection methods in background shift detection may be attributed to models performing better on Yelp data compared to the \textsc{sst} data they were trained on (cf.~\Cref{sec:appendix:exp-design}), because Yelp has longer texts with more semantic cues, making models more confident on OOD data.
We speculate the discrepancy in performance between semantic and background shifts arises because \ourmethod{} is focused on the encoding process of the query instances, while other methods only examine the model's outputs. Consequently, \ourmethod{} demonstrates greater sensitivity to the changes in the data-generating distribution. At the same time, other methods are better at detecting changes in the outputs, such as the introduction of an unknown class. In \Cref{sec:appendix:shifts} we show that \ourmethod{} is sensitive to the degree of  distribution shift.

\section{Conclusion}

We have proposed a novel method for out-of-distribution (OOD) detection for Transformer-based networks called \ourmethod{}. The method analyzes representation transformations across intermediate layers and requires only the access to model's weights. Our evaluation on multiple text classification datasets using Transformer-based large pre-trained language models shows that \ourmethod{} outperforms similar methods. 
Our analysis reveals that ID representations undergo smoother transformations between layers compared to OOD representations, because the model concentrates on ID region of the representation space during training. We demonstrated that the learning algorithm retains the original sharpness of the transformations of OOD intermediate representations for simpler datasets but increases the sharpness for more complex datasets. 
\section*{Acknowledgment}

We thank the anonymous reviewers for their insightful comments. Our heartfelt appreciation goes to the members of TakeLab for their continuous support and valuable input. Special thanks to Nina Drobac and Stjepan Šebek for their feedback and helpful suggestions. This work has been supported by the Croatian Science Foundation under the project IP-2020-02-8671 PSYTXT (``Computational Models for Text-Based Personality Prediction and Analysis'').

\bibliography{references}
\bibliographystyle{iclr2023_conference}

\clearpage
\appendix

\section{Reproducibility}

We conducted our experiments on 4× AMD Ryzen Threadripper 3970X 32-Core Processors and 2x NVIDIA GeForce RTX 3090 GPUs with 24GB of RAM, which took a little bit less than three weeks. We used Python 3.8.5, PyTorch \citep{NEURIPS2019_bdbca288} version 1.12.1, Hugging Face Transformers \citep{wolf2020transformers} version 4.21.3, Hugging Face Datasets \citep{lhoest2021datasets} version 2.11.0, scikit-learn \citep{pedregosa2011scikit} version 1.2.2, and CUDA 11.4.

\section{Proof of the Corollary}
\label{sec:appendix:proof}

\begin{repcorollary}{thr:estimate}
   Let $\mJ(\vx) \in \R^{m \times n}$ be a Jacobian matrix, and let $\rvv \in \R^n$ and $\rvw \in \R^m$ be random vectors whose elements are independent random variables with zero mean and unit variance. Then, $\E[(\rvw^{\intercal}\mJ(\vx)\rvv)^2] = \| \mJ(\vx) \|_F^2$.
\end{repcorollary}
\begin{remark}
	Clearly, the result holds true regardless of whether $\mJ(\vx)$ is a Jacobian matrix of some transformation or, indeed, any constant matrix $\mA$ with real entries. Henceforth, we assume the latter is the case.
\end{remark}
It turns out that the requirements on random vectors $\rvv$ and $\rvw$ can be relaxed, and a more general statement from which \Cref{thr:estimate} trivially follows is given in \Cref{thr:general} below:
\begin{theorem}
	\label{thr:general}
	Let $\mA \in \R^{m \times n}$ be a constant $m \times n$ matrix, and let $\rvv \in \R^n$ and $\rvw \in \R^m$ be independent random vectors with identity autocorrelation matrices $\E[\rvv \rvv^{\intercal}] = \mI_n$ and $\E[\rvw \rvw^{\intercal}] = \mI_m$, then  $\E[(\rvw^{\intercal}\mA\rvv)^2] = || \mA ||_F^2$.
\end{theorem}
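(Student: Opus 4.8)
The plan is to reduce the scalar $(\rvw^{\intercal}\mA\rvv)^2$ to a quadratic form, peel off the two expectations one at a time using the independence of $\rvv$ and $\rvw$ together with the autocorrelation hypotheses, and close with the elementary trace identity $\E[\rvw^{\intercal}\mB\rvw]=\tr(\mB)$ that holds whenever $\E[\rvw\rvw^{\intercal}]=\mI$.

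Concretely, I would proceed in three steps. \emph{Step 1 (linearize the square).} Since $\rvw^{\intercal}\mA\rvv$ is a real number, it equals its own transpose $\rvv^{\intercal}\mA^{\intercal}\rvw$, so $(\rvw^{\intercal}\mA\rvv)^2=\rvw^{\intercal}\mA\rvv\rvv^{\intercal}\mA^{\intercal}\rvw$. \emph{Step 2 (integrate out $\rvv$).} By independence of $\rvv$ and $\rvw$ and the tower rule, $\E[\rvw^{\intercal}\mA\rvv\rvv^{\intercal}\mA^{\intercal}\rvw]=\E_{\rvw}\!\big[\rvw^{\intercal}\mA\,\E_{\rvv}[\rvv\rvv^{\intercal}]\,\mA^{\intercal}\rvw\big]=\E_{\rvw}[\rvw^{\intercal}\mA\mA^{\intercal}\rvw]$, using $\E[\rvv\rvv^{\intercal}]=\mI_n$. \emph{Step 3 (integrate out $\rvw$).} For any fixed symmetric matrix $\mB$ one has $\rvw^{\intercal}\mB\rvw=\tr(\mB\rvw\rvw^{\intercal})$, hence $\E[\rvw^{\intercal}\mB\rvw]=\tr(\mB\,\E[\rvw\rvw^{\intercal}])=\tr(\mB)$; applying this with $\mB=\mA\mA^{\intercal}$ gives $\E_{\rvw}[\rvw^{\intercal}\mA\mA^{\intercal}\rvw]=\tr(\mA\mA^{\intercal})=\sum_{i,j}A_{ij}^2=\|\mA\|_F^2$, which is the claim. \Cref{thr:estimate} then follows immediately, since coordinates that are independent with zero mean and unit variance yield $\E[\rvv\rvv^{\intercal}]=\mI_n$ and $\E[\rvw\rvw^{\intercal}]=\mI_m$.

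As a cross-check — and as an alternative route that avoids matrix algebra altogether — I could instead expand in coordinates: $\rvw^{\intercal}\mA\rvv=\sum_{i,j}\rvw_i A_{ij}\rvv_j$, so $(\rvw^{\intercal}\mA\rvv)^2=\sum_{i,j,k,l}A_{ij}A_{kl}\,\rvw_i\rvw_k\rvv_j\rvv_l$, and taking expectations, the independence of $\rvv$ and $\rvw$ gives $\E[\rvw_i\rvw_k\rvv_j\rvv_l]=\E[\rvw_i\rvw_k]\,\E[\rvv_j\rvv_l]=\delta_{ik}\delta_{jl}$ by the two autocorrelation hypotheses, leaving $\sum_{i,j}A_{ij}^2$. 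Either way, the only genuinely technical point is integrability and the resulting licence to interchange expectation with the (finite) sums: all the second-order and mixed moments appearing above are finite because the autocorrelation matrices are finite — their diagonal entries equal $1$ and their off-diagonal entries exist by Cauchy–Schwarz — so Fubini applies and Steps 2--3 are justified. I do not anticipate a substantive obstacle here; the one thing to be careful about is to invoke \emph{only} the independence between $\rvv$ and $\rvw$ and the two autocorrelation identities, and in particular not to assume independence among the coordinates of a single vector — a point the coordinate expansion makes transparent.
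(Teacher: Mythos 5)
Your proof is correct, and it is essentially the paper's argument: both rest on the tower rule together with the two identity-autocorrelation hypotheses, integrating out one vector at a time. The only cosmetic differences are that you integrate out $\rvv$ first (the paper conditions on $\rvv$ and integrates out $\rvw$ first) and that you phrase the second step via the trace identity $\E[\rvw^{\intercal}\mB\rvw]=\tr(\mB)$ — which the paper itself notes, in a remark, is just Hutchinson's trick and equivalent to its Lemma~\ref{lem:Squaring_principle_for_vectors}.
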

Before the proof of this theorem, we will need to show two lemmas. The first one says that if one wishes to find a sum of squared $2$-norms of $n$ vectors $\sum_i \|\rva_i\|_2^2$ (which one may interpret as the square of the Frobenius norm $\|\mA\|_F^2$ of the matrix obtained by putting all those vectors together), one can do this stochastically by taking a random linear combination of those vectors and then squaring the $2$-norm of the resulting vector $\|\sum_i \rv_i\rva_i\|_2^2$. The random weights have to satisfy $\E[\rv_i\rv_j] = \delta_{ij}$, where $\delta_{ij}$ is Kronecker delta which is $1$ if $i=j$ and $0$ otherwise.
\begin{lemma}\label{lem:Squaring_principle_for_vectors}
	Let $\mA \in \R^{m \times n}$ be a constant $m \times n$ matrix, and let $\rvv \in \R^n$ be a random vector with identity autocorrelation matrix $\E[\rvv \rvv^{\intercal}]=\mI_n$. Then, $\E[\|\mA \rvv\|_2^2] = \| \mA \|_F^2$.
\end{lemma}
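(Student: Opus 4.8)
\textbf{Proof proposal for \Cref{lem:Squaring_principle_for_vectors}.}
The plan is to expand $\|\mA\rvv\|_2^2$ as a quadratic form in $\rvv$, push the expectation inside by linearity, and invoke the hypothesis $\E[\rvv\rvv^{\intercal}]=\mI_n$. Concretely, I would first write $\|\mA\rvv\|_2^2 = (\mA\rvv)^{\intercal}(\mA\rvv) = \rvv^{\intercal}\mA^{\intercal}\mA\rvv$, and then use the scalar-trace identity $\rvv^{\intercal}\mA^{\intercal}\mA\rvv = \tr(\mA^{\intercal}\mA\,\rvv\rvv^{\intercal})$, which holds because a $1\times 1$ matrix equals its own trace and the trace is cyclic.

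Next I would take expectations. Since $\mA$ is constant and the trace is linear, $\E[\tr(\mA^{\intercal}\mA\,\rvv\rvv^{\intercal})] = \tr(\mA^{\intercal}\mA\,\E[\rvv\rvv^{\intercal}])$. Applying the assumption $\E[\rvv\rvv^{\intercal}]=\mI_n$ gives $\tr(\mA^{\intercal}\mA) = \|\mA\|_F^2$, where the last equality is the standard characterization of the Frobenius norm as $\tr(\mA^{\intercal}\mA) = \sum_{i,j}A_{ij}^2$. This completes the argument.

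As an alternative that avoids trace manipulations (and makes the role of the autocorrelation hypothesis fully transparent), I would instead expand coordinatewise: $\|\mA\rvv\|_2^2 = \sum_{i=1}^{m}\bigl(\sum_{j=1}^{n}A_{ij}v_j\bigr)^2 = \sum_{i=1}^{m}\sum_{j=1}^{n}\sum_{k=1}^{n}A_{ij}A_{ik}v_jv_k$, and then $\E[\|\mA\rvv\|_2^2] = \sum_{i,j,k}A_{ij}A_{ik}\,\E[v_jv_k] = \sum_{i,j,k}A_{ij}A_{ik}\,\delta_{jk} = \sum_{i,j}A_{ij}^2 = \|\mA\|_F^2$, using that the $(j,k)$ entry of $\E[\rvv\rvv^{\intercal}]$ is $\E[v_jv_k]=\delta_{jk}$.

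There is essentially no obstacle here: the only subtlety is the mild regularity needed to exchange expectation and the finite sums (equivalently, to move $\E$ past the trace), which is automatic since everything is a finite sum of products $v_jv_k$ whose expectations exist by hypothesis. The content of the lemma is purely the identity $\tr(\mA^{\intercal}\mA) = \|\mA\|_F^2$ combined with the defining property of the autocorrelation matrix; this lemma will then feed into the proof of \Cref{thr:general} (and hence \Cref{thr:estimate}) by being applied to a suitable matrix built from $\mA$ and $\rvw$.
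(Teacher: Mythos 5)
Your proposal is correct and matches the paper's argument: the coordinatewise expansion you give as an ``alternative'' is precisely the paper's proof (written there in terms of the columns $\rva_i$ of $\mA$ rather than individual entries), and your primary trace-based route is the Hutchinson-style identity $\|\mA\|_F^2 = \tr(\mA^{\intercal}\mA) = \E[\rvv^{\intercal}\mA^{\intercal}\mA\rvv]$ that the paper itself notes in a remark immediately after. Both reduce to linearity of expectation plus $\E[\rv_i\rv_j]=\delta_{ij}$, so there is nothing to add.
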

\begin{proof}
	Denote by $\rva_i$ columns of matrix $\mA$, so that $\mA=[\rva_1|\dots|\rva_n]$. The matrix-vector product
	$$\mA\rvv = \sum_i \rv_i\rva_i$$
	where $\rv_i$ denote entries of the vector $\rvv$. Note
	\begin{align*}
		\|\mA \rvv\|_2^2
		&= (\mA \rvv)^{\intercal}\mA \rvv = \left(\sum_i \rv_i\rva_i\right)^\intercal\left(\sum_j \rv_j\rva_j\right)
		= \left(\sum_i \rv_i\rva_i^\intercal\right)\left(\sum_j \rv_j\rva_j\right)\\
		&=  \sum_{i,j} \rva_i^{\intercal}\rva_j \rv_i\rv_j.
	\end{align*}
	Therefore,
	$$\E[\|\mA \rvv\|_2^2]
	= \E[\sum_{i,j} \rva_i^{\intercal}\rva_j \rv_i\rv_j]
	= \sum_{i,j} \rva_i^{\intercal}\rva_j \E[\rv_i\rv_j]
	= \sum_{i} \rva_i^{\intercal}\rva_i = \sum_i \|\rva_i\|_2^2=\|\mA\|_F^2.$$
\end{proof}
The next lemma deals with the same principle as the previous lemma, but this time for scalars rather than vectors. In short, if one wishes to find a sum of squares of $m$ scalars $\sum_j \ra_j^2$ (which one may interpret as the square of the $2$-norm of the vector $\rva$ with those components), one can do this stochastically by taking a random linear combination of those scalars and then just squaring the sum $(\sum_j \rw_j\rva_j)^2$. Again, the random weights have to satisfy $\E[\rw_i\rw_j] = \delta_{ij}$.
\begin{lemma}\label{cor:Squaring_principle_for_numbers}
	Let $\rva \in \R^{m}$ be a constant row-vector, and let $\rvw \in \R^m$ be a random vector with identity autocorrelation matrix $\E[\rvw \rvw^{\intercal}]=\mI_m$. Then, $\E[(\rva^\intercal \rvw)^2] = \| \rva \|_2^2$.
\end{lemma}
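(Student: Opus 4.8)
The plan is to prove \Cref{cor:Squaring_principle_for_numbers} by the same second-moment computation that established \Cref{lem:Squaring_principle_for_vectors}, specialized to a single scalar in place of each column vector. In fact the statement is just \Cref{lem:Squaring_principle_for_vectors} applied to the matrix $\mA \in \R^{1 \times m}$ whose entries are those of $\rva$ and to the random vector $\rvw$: then $\|\mA\|_F^2 = \|\rva\|_2^2$ and the matrix--vector product $\mA\rvw = \rva^\intercal\rvw$ is a scalar, so $\|\mA\rvw\|_2^2 = (\rva^\intercal\rvw)^2$ and the conclusion is immediate. For self-containedness I would nevertheless also give the direct argument below.

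First I would expand the square over coordinates. Writing $\ra_j$ for the entries of $\rva$ and $\rw_j$ for the entries of $\rvw$, the scalar $\rva^\intercal\rvw = \sum_{j=1}^m \ra_j\rw_j$, hence $(\rva^\intercal\rvw)^2 = \sum_{i=1}^m\sum_{j=1}^m \ra_i\ra_j\,\rw_i\rw_j$. Since $\rva$ is constant and the sum is finite, linearity of expectation gives $\E[(\rva^\intercal\rvw)^2] = \sum_{i,j}\ra_i\ra_j\,\E[\rw_i\rw_j]$.

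Next I would unpack the hypothesis: the condition $\E[\rvw\rvw^\intercal] = \mI_m$ is, entrywise, exactly $\E[\rw_i\rw_j] = \delta_{ij}$. Substituting, every off-diagonal term vanishes and $\E[(\rva^\intercal\rvw)^2] = \sum_{i=1}^m \ra_i^2 = \|\rva\|_2^2$, which is the claim.

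There is no real obstacle here; the only thing to be careful about is the index bookkeeping and making explicit that ``identity autocorrelation matrix'' is precisely the condition that annihilates the cross terms, the same mechanism as in \Cref{lem:Squaring_principle_for_vectors}. Once this lemma is in hand, the proof of \Cref{thr:general} follows by conditioning on $\rvv$, applying this lemma to the (then constant) vector $\mA\rvv$ together with the random vector $\rvw$ to obtain $\E[(\rvw^\intercal\mA\rvv)^2 \mid \rvv] = \|\mA\rvv\|_2^2$, and then taking the expectation over $\rvv$ and invoking \Cref{lem:Squaring_principle_for_vectors} to reach $\|\mA\|_F^2$.
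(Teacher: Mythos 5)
Your proposal is correct and matches the paper's proof exactly: the paper also derives this lemma by applying Lemma \ref{lem:Squaring_principle_for_vectors} to the $1\times m$ matrix $\mA=\rva^\intercal$ and noting that the Frobenius norm of a row vector is its Euclidean $2$-norm. Your additional direct coordinate expansion is a sound (if redundant) verification of the same fact.
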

\begin{proof}
	This is a direct consequence of \Cref{lem:Squaring_principle_for_vectors}. Just take $\mA = \rva^\intercal$ to be a row-vector and note that the Frobenius norm of that row-vector is just its Euclidean 2-norm.
\end{proof}
\begin{proof}[Proof of \Cref{thr:general}]
	When conditioning on $\rvv$, $\mA\rvv$ is a constant vector and we can use \Cref{cor:Squaring_principle_for_numbers} to write:
	$$\|\mA\rvv\|_2^2
	= \E[\left((\mA\rvv)^\intercal \rvw\right)^2 \mathop{\big|} \rvv]
	= \E[\left(\rvv^\intercal \mA^\intercal \rvw\right)^2 \mathop{\big|} \rvv]
	= \E[\left(\rvw^\intercal \mA \rvv \right)^2 \mathop{\big|} \rvv].$$
	where in the last step we transposed the $1\times 1$ matrix $\rvv^\intercal \mA^\intercal \rvw$.
	Putting this together with \Cref{lem:Squaring_principle_for_vectors} gives:
	$$|| \mA ||_F^2 = \E[ \|\mA\rvv\|_2^2]
	= \E[\E[\left(\rvw^\intercal \mA \rvv \right)^2 \mathop{\big|} \rvv]]
	= \E[\left(\rvw^\intercal \mA \rvv \right)^2].$$
\end{proof}
\begin{remark}
	The estimates above are closely related to the so-called Hutchinson's trick \citep{hutchinson1989trick}, which gives an unbiased estimate of the trace of a matrix as
	$$\tr(\mW) = \E[\rvv^\intercal \mW \rvv],$$ where $\rvv$ satisfies the same conditions as before. Our estimate in \Cref{lem:Squaring_principle_for_vectors} can be seen as its corollary since
	$$\|\mA\|_F^2 = \tr(\mA^\intercal \mA) = \E[\rvv^\intercal \mA^\intercal \mA \rvv] = \E[\|\mA\rvv\|_2^2].$$
    The proof of \Cref{thr:general} is not new. It has appeared in \citep{bujanovic2021theorem}, although the result was stated there in lesser generality. We decided to include both the proofs of \Cref{lem:Squaring_principle_for_vectors} and \Cref{thr:general} for completeness.
\end{remark}
\begin{remark}
	Note that both \Cref{lem:Squaring_principle_for_vectors} and \Cref{thr:general} estimate $\|\mA\|_F^2$. It is possible to show that the variance of the estimator in \Cref{lem:Squaring_principle_for_vectors} is bounded above by the variance of the estimator in \Cref{thr:general}. This should be intuitively clear as the latter takes the vector $\mA\rvv$ and rather than just taking its exact $2$-norm (like the former), it further projects it onto another random vector $\rvw$ in order to estimate its $2$-norm (cf.~Proof of \Cref{thr:general})
\end{remark}
\begin{remark}
The estimate given in \Cref{thr:general} is most useful when both dimensions of the matrix $\mM$ are large, and if obtaining its entries is computationally expensive, but calculating the vector-matrix-vector product can be performed efficiently. If, in addition, one can perform matrix-vector product efficiently (which is the case when, e.g., $m$ is small) it is beneficial to use the estimator given in \Cref{lem:Squaring_principle_for_vectors}. The same is true (by transposing everything) if $n$ is small and/or one can perform vector-matrix product efficiently.
\end{remark}

\section{Experimental Design}
\label{sec:appendix:exp-design}

In this section, we present details about the models and datasets used in the experiments along with the description and hyperparamters of the training procedures to ensure the reproducibility of the results.

\subsection{Models}

For our experiments, we choose two popular transformer-based large pre-trained language models that obtain state-of-the-art results on a variety of NLP tasks, e.g., text classification, named entity recognition, machine translation, text summarization. Both models have similar architecture with the main difference being the pre-training data and pre-training objectives. We used the same hyperparamters for both models. For fine-tuning we used Adam optimizer \citep{kingma2015adam} with $\beta_1=0.9$, $\beta_2=0.999$, $\epsilon=10^{-8}$, learning rate of $2\times10^{-5}$. We fine-tuned the models for ten epochs. The batch size depends on the dataset used. We repeated each experiment with five different random seeds that varied the initialization of the classification head and the stochastic nature of the learning procedure. The models we use are:

\begin{itemize}
    \item \textbf{RoBERTa} -- Uses masked language modeling (MLM) pre-training objective. The model has 12 layers, a hidden state size of 768, and 12 attention heads with 125M parameters in total; 
    \item \textbf{ELECTRA} -- Unlike RobERTa which uses generative pre-training objective, ELECTRA uses discriminative pre-training objective. The model has 12 layers, a hidden state size of 768, and 12 attention heads with 110M parameters in total.
\end{itemize}

\subsection{Datasets}

In our experiments, we work with several text classification datasets. Datasets we used as ID data and their preprocessing procedures are:

\begin{itemize}
    \item \textbf{\textsc{sst}} -- The Stanford Sentiment Treebank dataset contains single sentences extracted from movie reviews labeled with the sentiment of the review. The task is an almost balanced binary classification with labels corresponding to positive and negative sentiment;
    \item \textbf{\textsc{subj}} -- The Subjectivity dataset is a collection of movie review documents. The task is to classify the reviews into one of two balanced classes based on the nature of the review: objective or subjective;
    \item \textbf{\textsc{agn}} -- The AG News topic classification dataset consists of news articles from several news sources. The dataset consists of four balanced classes representing the topic of the article: World, Sports, Business, and Sci/Tech. The train split of the dataset was subsampled to 20000 instances for our experiments keeping the balance of the labels;
    \item \textbf{\textsc{trec}} -- The Text REtrieval Conference (TREC) Question Classification dataset gathers questions labeled with their topics: Abbreviation, Entity, Description and abstract concept, Human being, Location, and Numeric value. The dataset contains six imbalanced labels;
    \item \textbf{\textsc{bp}} -- BigPatent consists of records of U.S. patent descriptions along with human written abstractive summaries from nine patent categories: Human Necessities, Performing Operations and Transporting, Chemistry and Metallurgy, Textiles and Paper, Fixed Constructions, Mechanical Engineering and Lightning and Heating and Weapons and Blasting, Physics, Electricity, and General tagging of new or cross-sectional technology. Even though this dataset is usually used for summarization we use the summaries for classification. We remove all duplicates from train and test splits and between train and test splits. We subsample the whole dataset by taking 3\% of the original train set and 20\% of the original test set;  
    \item \textbf{\textsc{ar}} -- Amazon Customer Reviews dataset contains customer reviews of products from Amazon store. The dataset contains the product category along with the review text. We subset all of the categories to 12 with comparable sizes and significant semantic differences: Gift Card, Software, Video Games, Luggage, Video, Grocery, Furniture, Musical Instruments, Watches, Tools, Baby, and Jewelry. We preprocess the data first dropping all of the reviews with less than 15 words and all of the duplicates from the training split. We then subsample 0.25\% of the data and split that subsample into train and test sets with sizes of 80\% and 20\% of the subsample size, respectively;
    \item \textbf{\textsc{mg}} -- IMDb Genre Classification Dataset is used for the classification of movies' genres from their descriptions from IMDb. In our experiment, we use a subset of 15 biggest genres with significant semantic differences: Drama, Documentary, Comedy, Horror, Thriller, Action, Western, Reality TV, Adventure, Family, Music, Romance, Sci-fi, Adult, and Crime. We preprocess the data by first removing all duplicates from train and test splits and between train and test splits, and then subsample the train data to 50\% of the original size and test data to 15\% of the original size;
    \item \textbf{\textsc{ng}} -- 20Newsgroups data is a collection of news documents labeled based on the topic of the news with 20 different labels that are almost uniformly distributed. Following sci-kit-learn, we preprocess both train and test data by removing headers, signature blocks, and quotation blocks to eliminate  simple correlations to which models easily overfit. We also remove any potential duplicate documents between train and test sets to avoid data leaks.
\end{itemize}

Simplified datasets were preprocessed the same as their original counterparts described above. After the preprocesing, datasets were simplified by removing the data from all of the labels but two from both train and test sets. The choice of the retained two labels was made on the basis of the absolute and relative sizes of the data with given labels and semantic differences between labels. 

\begin{itemize}
    \item \textbf{\textsc{bp2}} -- BigPatent2 retains labels Human Necessities and Physics.
    \item \textbf{\textsc{ar2}} -- AmazonReviews2 retains labels Grocery and Baby.
    \item \textbf{\textsc{mg2}} -- MovieGenre2 retains labels Drama and Documentary.
\end{itemize}

Datasets we used as OOD data and their preprocessing procedures are:

\begin{itemize}
    \item \textbf{\textsc{obw}} -- One Billion Word Benchmark is a popular dataset for training and evaluating language models. In this paper, we use it as the OOD data for that reason, i.e., the diversity of the corpus. We use the test set of the corpus and in each experiment, we subsample it to the size of the ID test set.
    \item \textbf{Yelp} -- The Yelp Reviews dataset consists of reviews from Yelp labeled for the sentiment of the review, e.g., positive or negative. The task is balanced binary sentiment classification. When using this dataset as OOD data we subsample the test set of the dataset to the size of the ID test set. Transfer of RoBERTa model fine-tuned on \textsc{sst} data achieves $F_1$ score of 94.13 $\pm$ 0.79, while transfer of ELECTRA model fine-tuned on \textsc{sst} data achieves $F_1$ score of 95.05 $\pm$ 1.24.
\end{itemize}

More details about the used datasets are shown in \Cref{tab:data-stats}.

\begin{table}
\caption{Information about the datasets used in the experiments. The table contains sizes of the train and ID test sets, number of classes in each dataset, micro $F_1$ score on the test set for RoBERTa and ELECTRA, minimum description length (MDL) as an estimate of data complexity (Based on Rissanen Data Analysis. We take an average of five runs with RoBERTa and ELECTRA. MDL is computed with 128 blocks of 32 instances per batch, Adam optimizer and a learning rate of $2\times10^{-5}$. Numbers in the table are normalized with respect to the largest value.), and batch size used to train the model.}
\small
\centering
\begin{tabular}{lcccccccc}
\toprule
Dataset & Train & Test & \# Labels & $F_1^{\text{RoBERTa}}$ & $F_1^{\text{ELECTRA}}$ & MDL & Batch Size\\
\midrule
\textsc{sst} & 8544 & 2210 & 2 & 87.70 $\pm$ 1.17 & 88.60 $\pm$ 0.50 & 0.389 & 32 \\ 
\textsc{subj} & 8000 & 2000 & 2 & 96.30 $\pm$ 0.49 & 96.83 $\pm$ 0.62 & 0.148 & 32\\
\textsc{agn} & 20000 & 7600 & 4 & 92.63 $\pm$ 0.35 & 92.18 $\pm$ 0.80 & 0.279 & 16\\
\textsc{trec} & 5452 & 500 & 6 & 96.92 $\pm$ 0.37 & 96.56 $\pm$ 0.73 & 0.138 & 32\\ 
\textsc{bp} & 20792 & 7678 & 9 & 64.24 $\pm$ 0.67 & 63.87 $\pm$ 0.92 & 1 & 16\\
\textsc{ar} & 20336 & 5085 & 12 & 87.26 $\pm$ 0.40 & 86.65 $\pm$ 0.70 &  0.324 & 16\\
\textsc{mg} & 22832 & 6849 & 15 & 70.99 $\pm$ 0.57 & 71.32 $\pm$ 0.44 &  0.652 & 16\\
\textsc{ng} & 11314 & 7306 & 20 & 72.65 $\pm$ 0.37 & 71.38 $\pm$ 0.44 & 0.751 & 16\\
\midrule
\textsc{bp2} & 7410 & 2735 & 2 & 93.52 $\pm$ 0.22 & 93.61 $\pm$ 0.50 &  0.253 & 16\\
\textsc{ar2} & 6388 & 1597 & 2 & 97.27 $\pm$ 0.59 & 97.85 $\pm$ 0.27 & 0.124 & 16\\
\textsc{mg2} & 13291 & 3986 & 2 & 93.38 $\pm$ 0.36 & 93.30 $\pm$ 0.28 & 0.281 & 16\\
\bottomrule
\end{tabular}
\label{tab:data-stats}
\end{table}

\section{BLOOD for image classification}

To explore the effectiveness of our BLOOD methodology on image data, we utilized CIFAR-10 and CIFAR-100 as our ID datasets \citep{krizhevsky2009learning}. These were chosen for their resemblance to the ImageNet dataset, which was used to pre-train the Vision Transformer (ViT) \citep{dosovitskiy2021image}. We tested OOD detection using two different datasets: Street View House Numbers \citep[SVHN;][]{yuval2011reading}, which comprises real-world imagery of house numbers offering different visual features from CIFAR datasets, and the Beans dataset, which contains images relevant to agricultural disease classification, adding another layer of diversity to our evaluation.

In addition to the Transformer architecture, we also employ convolutional neural networks (CNNs), specifically ResNet34 and ResNet50 \citep{he2016deep}, to gauge how our method performs on other architectures. For CNNs, we employed a learning rate of $0.01$ and trained the models over $50$ epochs. In contrast, for our transformer-based model, the ViT, we adopted a fine-tuning approach, utilizing a learning rate of $2\times10^{-5}$ with an additional weight decay of $0.001$, over a shorter span of $10$ epochs. This distinction in training methodologies is reflective of the different learning dynamics and capacities of CNNs versus Transformers.

We show the results in \Cref{tab:cv}. The ViT model, with its transformer architecture, showed a marked improvement in AUROC with \ourmethodL{}, underscoring the critical role of the last layer in such models. This aligns with our observations from textual data analysis, where the most significant changes also occurred in the last layer, suggesting a pattern that transformers exhibit across different data modalities.

Conversely, the CNN models, ResNet34 and ResNet50, displayed more nuanced changes across their layers. Since these models were trained from scratch, the learning occurred more prominently in the lower layers, and the last layer often had an inverse impact on OOD detection capabilities. This was evidenced by low AUROC scores for \ourmethodL{}, while \ourmethodmean{} remained competitive with other methods. Additionally, \ourmethod{}'s ability to leverage the complexity of datasets was apparent both with ViT and CNNs, particularly with the more granular CIFAR-100 dataset compared to CIFAR-10, which is consistent with the observations on textual data.

\begin{table}[]
\caption{The performance of OOD detection methods on the task of OOD detection on image classification measured by AUROC (\%). The best-performing method is in \textbf{bold}. Higher is better.}
\small
\centering
\scalebox{0.95}{
\begin{tabular}{llcccccccc} 
\toprule
Model & ID Dataset & \ourmethodmean{} & \ourmethodL{} & MSP & ENT & EGY & MC & GRAD & ASH \\
\midrule
& & \multicolumn{8}{c}{SVHN}\\
\cmidrule(lr){3-10}
\multirow{2}{*}{ResNet34} & CIFAR-10 & 84.00 & 49.99 & 88.92 & \textbf{89.71} & 88.23 & 81.39 & 82.29 & 83.15 \\
& CIFAR-100 & \textbf{87.05} & 49.51 & 73.64 & 76.20 & 77.90 & 78.31 & 85.97 & 80.54 \\
\midrule
\multirow{2}{*}{ResNet50} & CIFAR-10 & 84.13 & 49.91 & 89.55 & \textbf{90.29} & 84.36 & 90.07 & 88.01 & 88.20 \\
& CIFAR-100 & 79.77 & 50.50 & 79.86 & 83.32 & 82.70 & \textbf{83.42} & 81.80 & 80.89 \\
\midrule
\multirow{2}{*}{ViT} & CIFAR-10 & 99.37 & 92.45 & 99.12 & 98.19 & \textbf{99.55} & 98.91 & 96.94 & 95.01 \\
& CIFAR-100 & 80.97 & \textbf{89.07} & 84.06 & 84.58 & 88.21 & 87.47 & 85.53 & 81.29 \\
\midrule
& & \multicolumn{8}{c}{Beans}\\
\cmidrule(lr){3-10}
\multirow{2}{*}{ResNet34} & CIFAR-10 & 85.68 & 14.16 & 89.23 & 91.55 & \textbf{92.41} & 91.09 & 81.04 & 90.67 \\
& CIFAR-100 & \textbf{86.40} & 23.54 & 73.32 & 79.90 & 81.22 & 83.97 & 76.55 & 81.02 \\
\midrule
\multirow{2}{*}{ResNet50} & CIFAR-10 & 78.37 & 50.28 & 79.18 & \textbf{80.45} & 79.02 & 79.62 & 69.33 & 77.26 \\
& CIFAR-100 & \textbf{86.62} & 49.76 & 79.13 & 80.79 & 82.31 & 83.07 & 85.29 & 75.16 \\
\midrule
\multirow{2}{*}{ViT} & CIFAR-10 & 95.41 & 99.58 & 99.31 & 99.31 & \textbf{99.98} & 98.79 & 96.92 & 94.31 \\
& CIFAR-100 & 91.68 & \textbf{96.53} & 96.02 & 95.71 & 95.81 & 95.83 & 89.33 & 88.14 \\
\toprule
\end{tabular}
}
\label{tab:cv}
\end{table}

\section{BLOOD as an Open-Box Method}

In the scenarios where more resources are available, beyond just the trained model, it is possible to utilize those resources to improve the OOD detection performance of \ourmethod{}. Similarly to a lot of popular OOD detection methods \citep{liang2018enhancing,sun2021react,sun2022dice}, \ourmethod{} can be improved by using a small validation set to learn the optimal weights for the weighted average of the \ourmethod{} score in each layer.

To obtain the weights for the weighted sum, first we create a validation set from the 5\% of our ID and OOD test sets and then fit the logistic regression. In \Cref{tab:blood-open-box} we showcase the results of using \ourmethod{} in the open-box scenarios. From the results it can be seen that it is useful to extend \ourmethod{} to open-box setting if the validation set is obtainable. In rare cases \ourmethodL{} outperform the open-box \ourmethod{} likely due to noise introduced by including the lower layers. But still, the differences in performance when open-box \ourmethod{} is outperformed by the \ourmethodL{} are minuscule compared to the gains in other cases.

\begin{table}[]
\caption{Performance of \ourmethod{} in an open-box setting. Cases in which the open-box \ourmethod{} ouperforms both \ourmethodmean{} and \ourmethodL{} are in \textbf{bold}.}
\small
\centering
\scalebox{1}{
\begin{tabular}{l|cccccccc}
\toprule
Model   & \textsc{SST}   & \textsc{SUBJ}  & \textsc{AGN}   & \textsc{TREC}  & \textsc{BP}    &  \textsc{AR}    &  \textsc{MG}    & \textsc{NG}    \\ \midrule
RoBERTa & \textbf{73.94} & \textbf{82.98} & \textbf{81.39} & 91.73 & \textbf{93.47} & \textbf{96.25} & \textbf{92.46} & \textbf{89.97} \\ 
ELECTRA & 77.67 & 77.30 & \textbf{82.76} & 98.73 & 96.54 & \textbf{91.97} & \textbf{90.92} & \textbf{85.19} \\ \bottomrule
\end{tabular}
}
\label{tab:blood-open-box}
\end{table}
\section{Dataset Complexity and Uncertainty}
\label{sec:appendix:carto}

The identification of OOD instances often relies on estimating the underlying uncertainty of the model \citep{ovadia2019can}. The intuition is that a well-performing model should exhibit higher confidence when dealing with data resembling the training data, i.e., ID data.
However, ML models are susceptible to two sources of uncertainty.
\textit{Aleatoric} uncertainty stems from the inherent ambiguity and noise in the data, and it is thus irreducible by the acquisition of more data and characteristic of ambiguous ID data;
In contrast, \textit{epistemic} uncertainty stems from the lack of relevant information in the training data, and it is thus reducible by acquiring more relevant data and characteristic of the OOD data \citep{der2009aleatory,kendall2017uncertainties}.
The total amount of uncertainty about a given prediction, i.e., aleatoric and epistemic uncertainty, is called \textit{predictive uncertainty} \citep{gal2016uncertainty}.

In our experiments, we use probabilistic baselines such as MSP, ENT, and MC. However, these approaches cannot reliably disentangle the epistemic uncertainty from the model's predictive uncertainty \citep{kirsch2021pitfalls}. 

In \Cref{sec:complexity} we show our method, \ourmethod{}, performs better on complex datasets than on simpler datasets. The results of our experiment also show the opposite of that for probabilistic methods (MSP, ENT, and MC), i.e., they work better on simpler datasets than on more complex datasets. We hypothesize that the effect of probabilistic methods working better on simpler datasets comes from the amount of aleatoric uncertainty in the dataset. Because of their simplicity, simpler datasets have less ambiguous instances, and thus with lowered aleatoric uncertainty, epistemic uncertainty of OOD instances dominates the model's predictive uncertainty. 

\begin{figure}[t!]
\small
\centering
\begin{subfigure}{.49\linewidth}
  \centering
  \includegraphics[width=\linewidth]{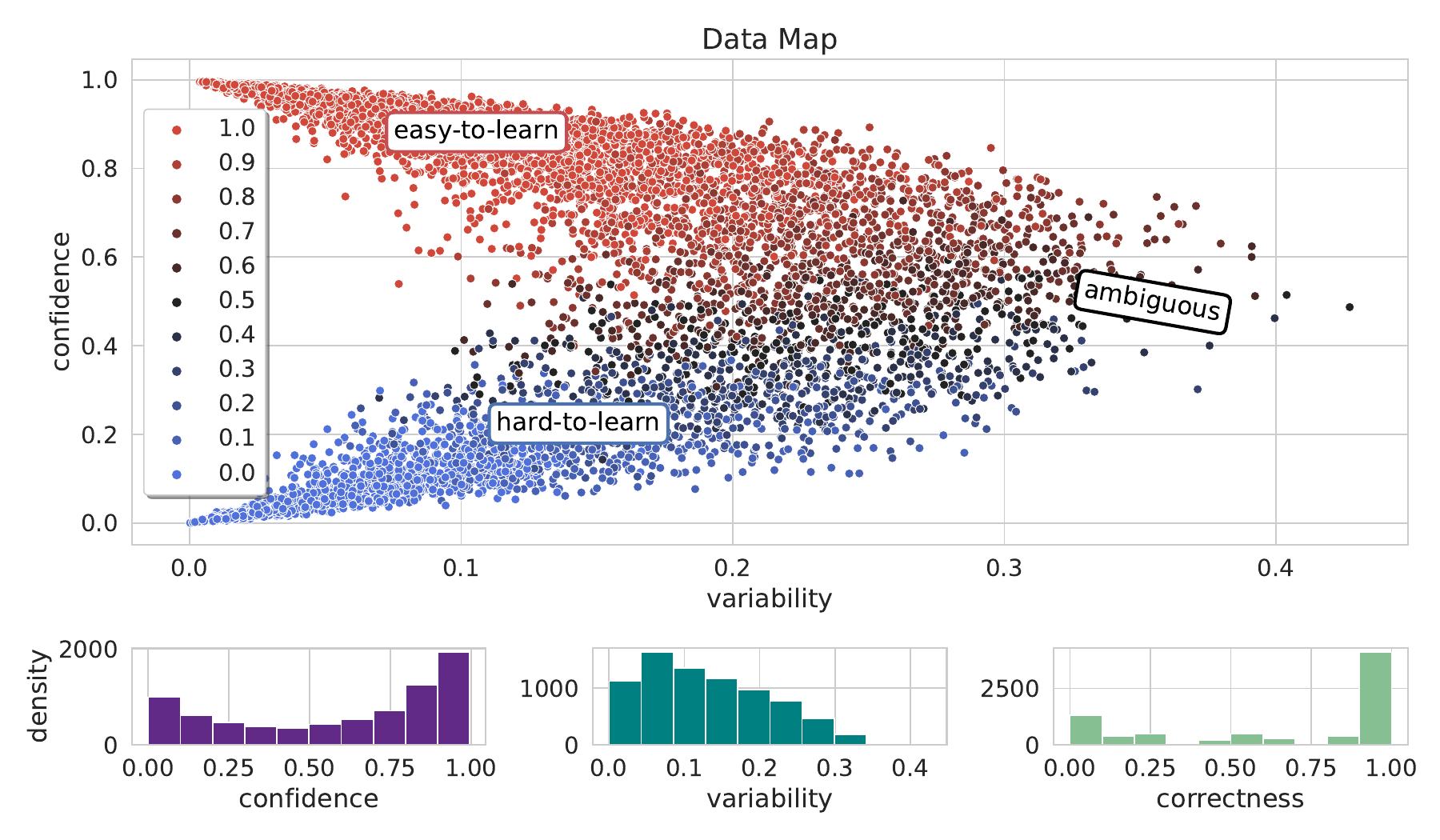}
  \caption{\textsc{bp}}
\end{subfigure}
\begin{subfigure}{.49\linewidth}
  \centering
  \includegraphics[width=\linewidth]{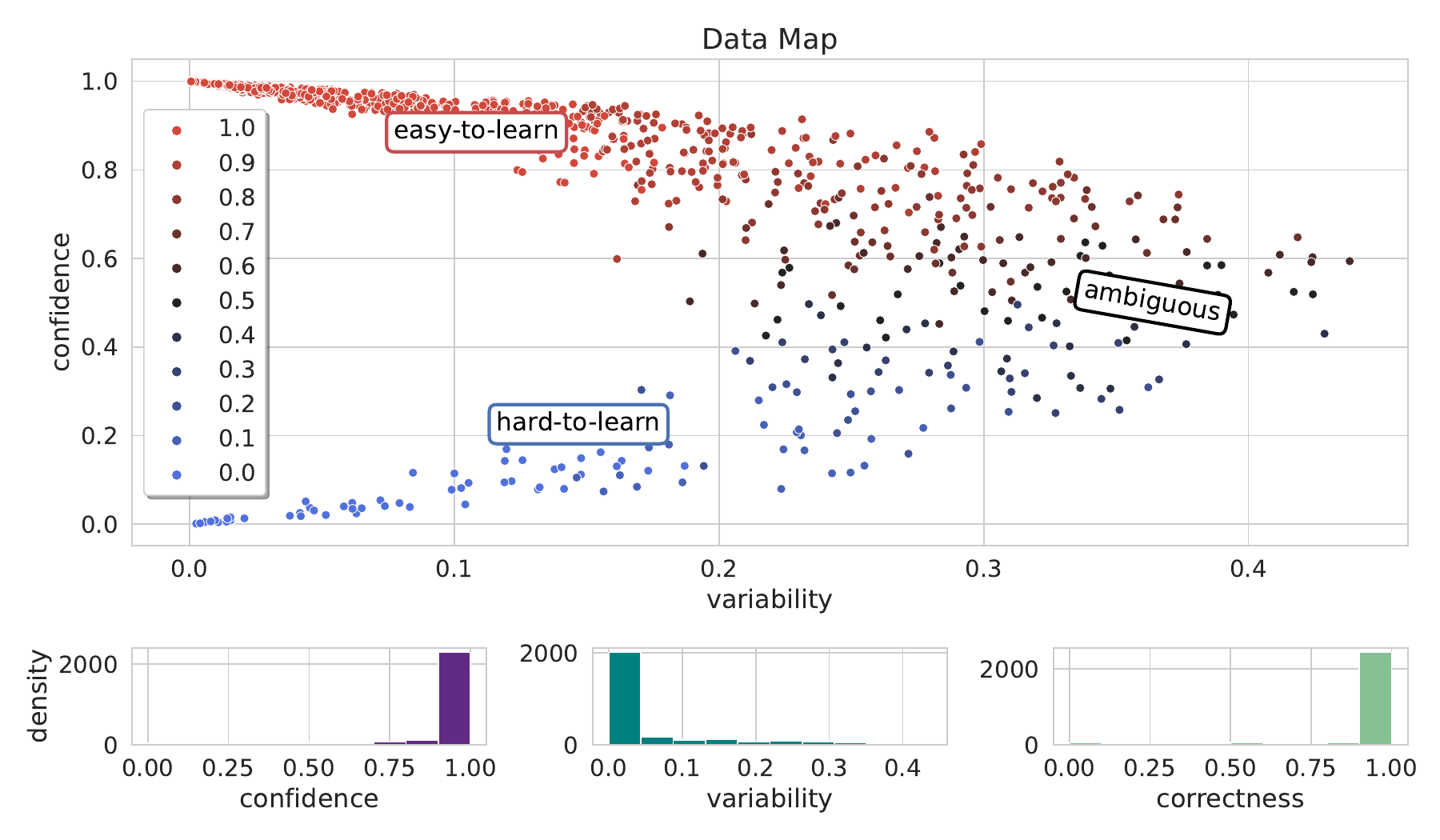}
  \caption{\textsc{bp2}}
\end{subfigure}
\begin{subfigure}{.49\linewidth}
  \centering
  \includegraphics[width=\linewidth]{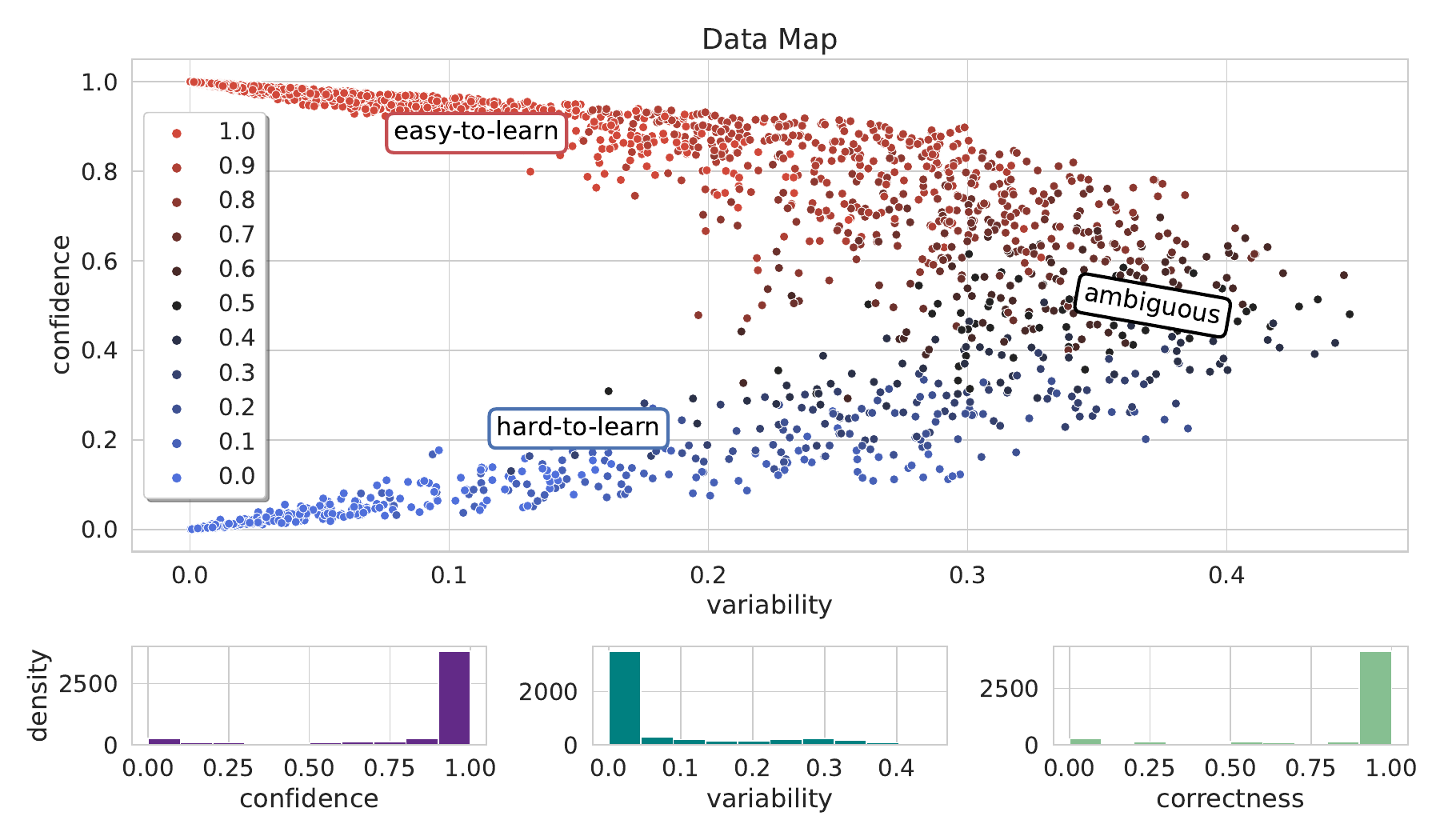}
  \caption{\textsc{ar}}
\end{subfigure}
\begin{subfigure}{.49\linewidth}
  \centering
  \includegraphics[width=\linewidth]{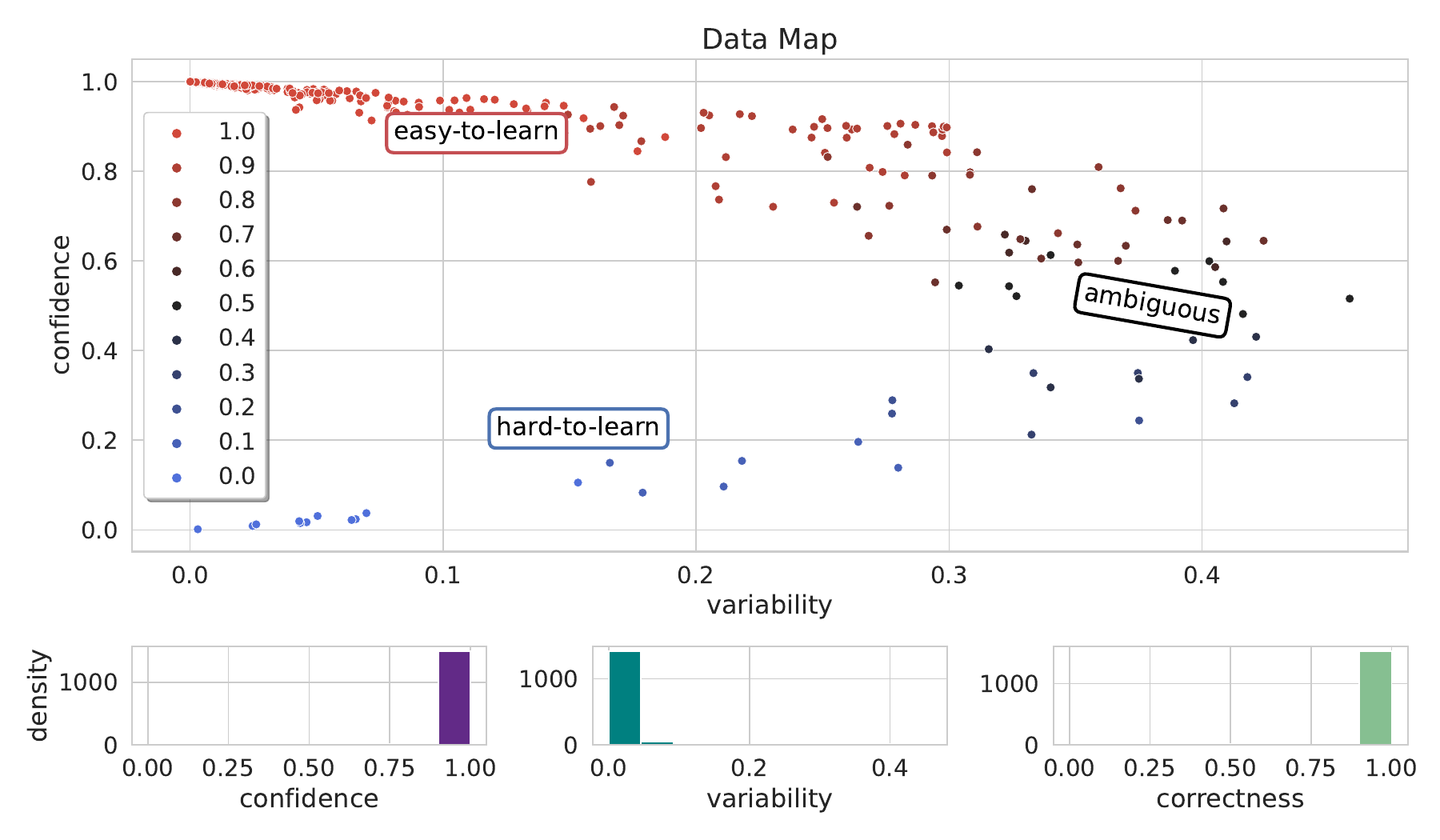}
  \caption{\textsc{ar2}}
\end{subfigure}
\begin{subfigure}{.49\linewidth}
  \centering
  \includegraphics[width=\linewidth]{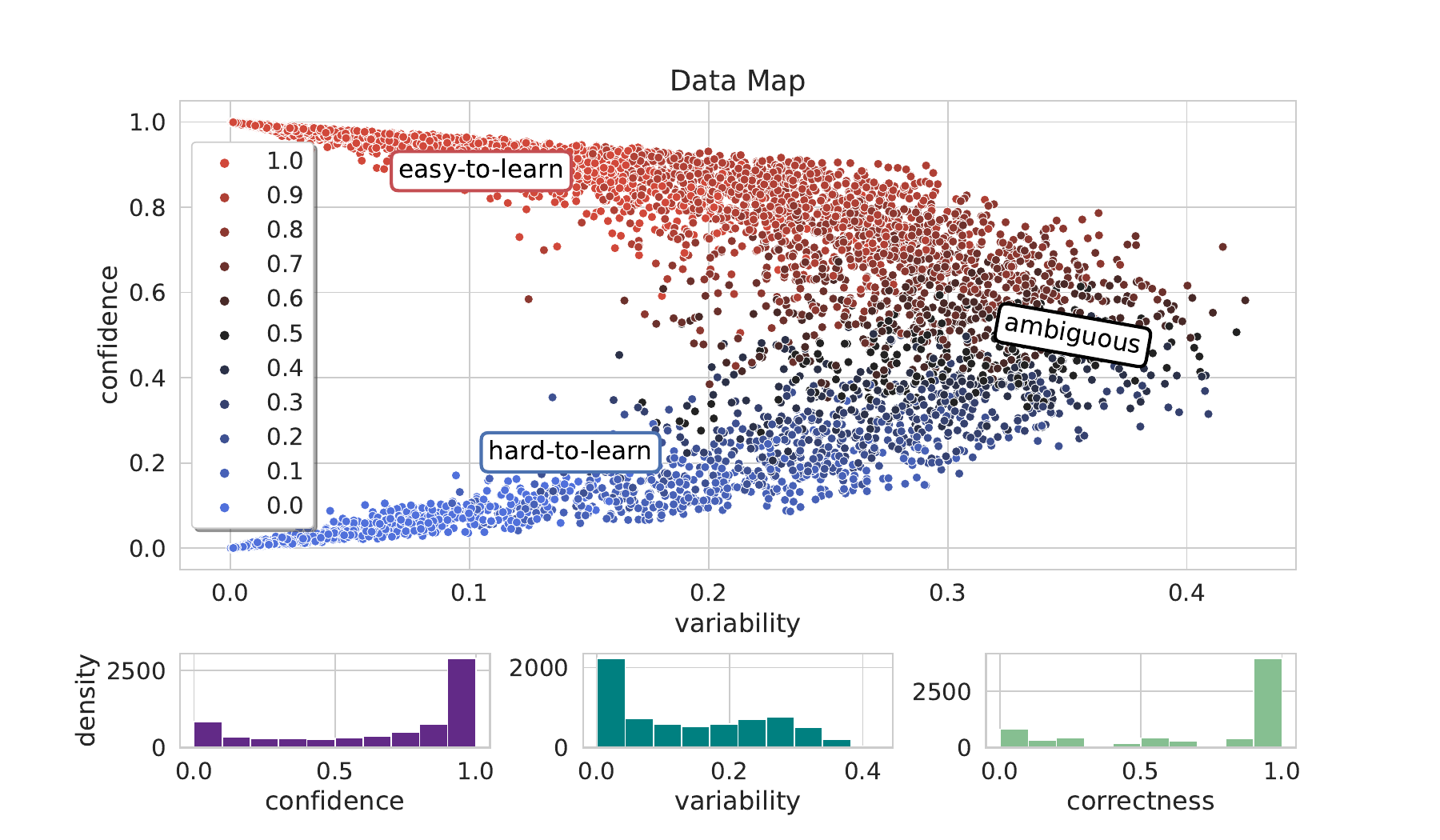}
  \caption{\textsc{mg}}
\end{subfigure}
\begin{subfigure}{.49\linewidth}
  \centering
  \includegraphics[width=\linewidth]{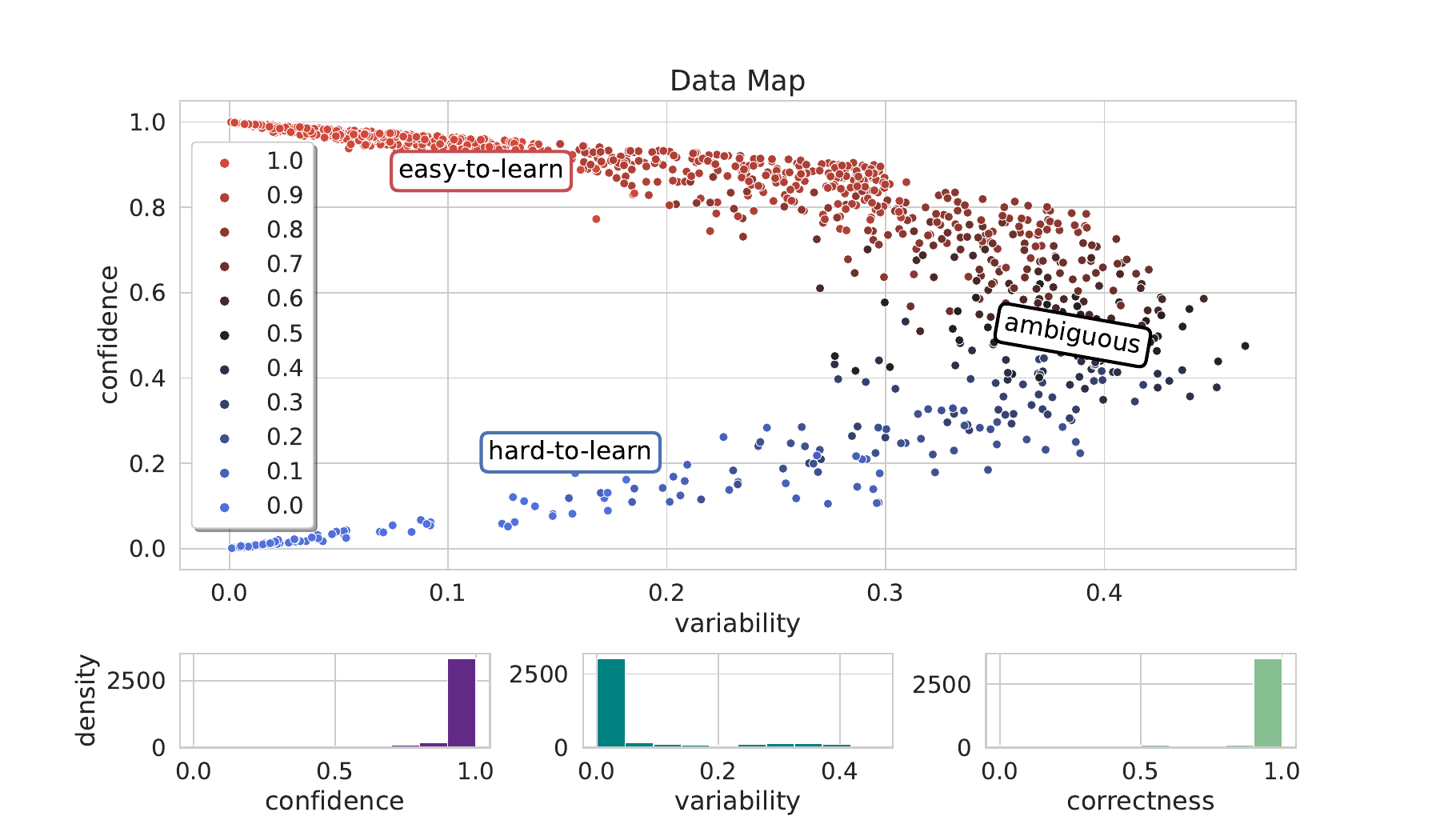}
  \caption{\textsc{mg2}}
\end{subfigure}
\caption{Data maps with RoBERTa for test sets of (a) \textsc{bp}, (b) \textsc{bp2}, (c) \textsc{ar}, \textsc{ar2}, \textsc{mg}, and \textsc{mg2}. Each subfigure shows data map and histograms of confidence, variability, and correctness of instances.  Data maps for ELECTRA are qualitatively the same.}
\label{fig:cartography}
\end{figure}

To visualize the drop in ambiguity, we use data cartography \citep{swayamdipta2020dataset}. The idea is to use the training dynamics of the examples to discover easy-to-learn, hard-to-learn, and ambiguous examples in the dataset. Training dynamics used to create data maps are confidence (how confident the model is in the true label across epochs), variability (the spread of the posterior probability of the true label across epochs), and correctness (the fraction of times the model correctly labels the example across epochs). Ambiguous examples are characterized by high variability, and it can be seen from \Cref{fig:cartography} that simplified datasets, by removing classes, lowered the density of ambiguous examples in the ID dataset. Since the other \whitebox{} methods can not disentangle aleatoric from epistemic uncertainty, lowering the density of ambiguous examples helps them capture the epistemic uncertainty in the OOD data needed for their detection.
\section{Degree of Distribution Shift}
\label{sec:appendix:shifts}

\begin{figure}[t!]
\small
\centering
\begin{subfigure}{.49\linewidth}
  \centering
  \includegraphics[width=\linewidth]{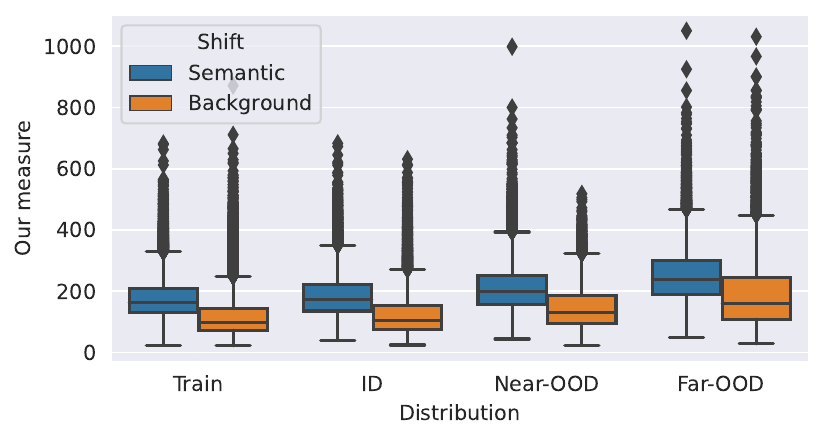}
  \caption{RoBERTa}
\end{subfigure}
\begin{subfigure}{.49\linewidth}
  \centering
  \includegraphics[width=\linewidth]{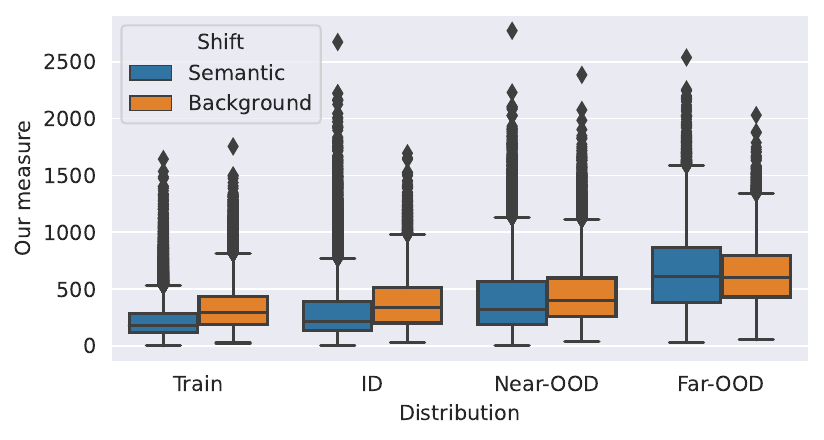}
  \caption{ELECTRA}
\end{subfigure}
\caption{Box plots of change in \ourmethodL{} scores with an increase in the degree of distribution shift for the tasks of semantic and background shift detection for (a) RoBERTa and (b) ELECTRA. The amount of distribution shift increases from left to right: training distribution, test ID data distribution, Near-OOD distribution, and Far-OOD distribution.}
\label{fig:shifted}
\end{figure}

Another important feature of an OOD detection method is the proportional sensitivity to the degree of distribution shift involved in the data \citep{ovadia2019can}. In \Cref{fig:shifted}, we show that the uncertainty scores produced by \ourmethodL{} increase in proportion to the degree of distribution shift. Training data exhibits the lowest uncertainty score, ID test data shows only a slight increase in uncertainty, Near-OOD data exhibits a jump in the uncertainty score, while for Far-OOD data, the uncertainty score is the highest. 
\section{Additional Results}
\label{sec:appendix:extra-results}

In \Cref{sec:experiments} we present the results of our experiments measured with AUROC averaged across five different seeds. In this section, we present averaged results alongside their standard deviations for AUROC as well as two other commonly used metrics in the OOD detection literature, AUPR-IN and FPR@95TPR:
\begin{itemize}
    \item \textbf{AUPR-IN} -- area under the Precision-Recall curve illustrates how precision and recall vary with different thresholds of OOD detection method's uncertainty score. The ID data is considered a positive class. A higher score indicates better performance.
    \item \textbf{FPR@95TPR} -- the false positive rate of an OOD classifier when the true positive rate is 95\%, The ID data is considered a positive class. A lower score indicates better performance.
\end{itemize}
Results of OOD detection measured with AUROC are given in \Cref{tab:agnostic-ood-a} for white-box/black-box methods and in \Cref{tab:non-agnostic-ood-a} for open-box methods. Results for simplified datasets measured with AUROC are given in \Cref{tab:agnostic-ood-augmented-a} for white-box/black-box methods and in \Cref{tab:non-agnostic-ood-augmented-a} for open-box methods.
Results of distribution Nera-OOD detection measured with AUROC are given in \Cref{tab:agnostic-ood-shifted-a} for white-box/black-box methods and in \Cref{tab:non-agnostic-ood-shifted-a} for open-box methods.

Results of OOD detection measured with AUPR are given in \Cref{tab:agnostic-ood-aupr} for white-box/black-box methods and in \Cref{tab:non-agnostic-ood-aupr} for open-box methods. Results for simplified datasets measured with AUPR are given in \Cref{tab:agnostic-ood-augmented-aupr} for white-box/black-box methods and in \Cref{tab:non-agnostic-ood-augmented-aupr} for open-box methods.
Results of distribution Near-OOD detection measured with AUPR are given in \Cref{tab:agnostic-ood-shifted-aupr} for white-box/black-box methods and in \Cref{tab:non-agnostic-ood-shifted-aupr} for open-box methods.

Results of OOD detection measured with FPR@95TPR are given in \Cref{tab:agnostic-ood-fpr} for white-box/black-box methods and in \Cref{tab:non-agnostic-ood-fpr} for open-box methods. Results for simplified datasets measured with FPR@95TPR are given in \Cref{tab:agnostic-ood-augmented-fpr} for white-box/black-box methods and in \Cref{tab:non-agnostic-ood-augmented-fpr} for open-box methods.
Results of distribution Near-OOD detection measured with FPR@95TPR are given in \Cref{tab:agnostic-ood-shifted-fpr} for white-box/black-box methods and in \Cref{tab:non-agnostic-ood-shifted-fpr} for open-box methods.

We also provide visualization of the assessment of how changes in individual layers influence the BLOOD score throughout intermediate layers. Figures \ref{fig:repr-change-sst}, \ref{fig:repr-change-subj}, \ref{fig:repr-change-agn}, \ref{fig:repr-change-trec}, \ref{fig:repr-change-bp}, \ref{fig:repr-change-mg}, and \ref{fig:repr-change-ng} show vizualizations for \textsc{sst}, \textsc{subj}, \textsc{agn}, \textsc{trec} \textsc{bp}, \textsc{mg}, and \textsc{ng} respectively akin to \Cref{fig:repr-change} for \textsc{ar}.

\begin{table}[]
\caption{OOD detection performance measured by AUROC (\%) with standard deviation of the white-box/black-box methods. The best-performing measure is in \textbf{bold}. Higher is better.}
\small
\centering
\scalebox{0.8}{
\begin{tabular}{llccccccc} 
\toprule
Model & Dataset & \ourmethodmean{} & \ourmethodL{} & MSP & ENT & EGY & MC & GRAD \\
\midrule
\multirow{8}{*}{RoBERTa} & \textsc{sst} & 50.56 $\pm$ 7.03 & \textbf{72.83 $\pm$ 9.72} & 71.61 $\pm$ 0.84 & 71.69 $\pm$ 1.38 & 71.69 $\pm$ 1.38 & 68.28 $\pm$ 1.16 & 71.76 $\pm$ 1.34 \\ 
& \textsc{subj} & 52.02 $\pm$ 14.01 & 74.66 $\pm$ 7.32 & \textbf{75.79 $\pm$ 8.95} & 74.55 $\pm$ 8.48 & 74.55 $\pm$ 8.48 & 74.21 $\pm$ 6.82 & 74.93 $\pm$ 8.57 \\
& \textsc{agn} & 77.46 $\pm$ 5.73 & 61.95 $\pm$ 8.87 & 76.36 $\pm$ 3.34 & 73.57 $\pm$ 2.96 & 73.8 $\pm$ 2.99 & \textbf{77.55 $\pm$ 3.04} & 73.58 $\pm$ 2.94 \\
& \textsc{trec} &  69.63 $\pm$ 10.11 & 95.3 $\pm$ 2.96 & 96.28 $\pm$ 0.74 & 96.2 $\pm$ 0.83 & \textbf{96.40 $\pm$ 0.79} & 95.68 $\pm$ 0.78 & 96.14 $\pm$ 0.84 \\ 
& \textsc{bp} & 87.20 $\pm$ 2.95 & \textbf{89.53 $\pm$ 3.37} & 85.84 $\pm$ 1.34 & 70.15 $\pm$ 0.84 & 72.82 $\pm$ 0.98 & 74.29 $\pm$ 1.06 & 73.11 $\pm$ 1.49 \\
& \textsc{ar} & 91.41 $\pm$ 1.78 & \textbf{93.20 $\pm$ 1.24} & 92.39 $\pm$ 0.77 & 89.06 $\pm$ 1.11 & 89.96 $\pm$ 1.05 & 90.59 $\pm$ 0.92 & 88.65 $\pm$ 1.06 \\
& \textsc{mg} & \textbf{88.15 $\pm$ 1.87} & 85.23 $\pm$ 3.99 & 86.45 $\pm$ 2.87 & 75.02 $\pm$ 2.33 & 76.6 $\pm$ 2.47 & 79.98 $\pm$ 2.34 & 74.28 $\pm$ 2.27 \\
& \textsc{ng} & \textbf{83.53 $\pm$ 1.13} & 72.02 $\pm$ 4.76 & 82.65 $\pm$ 0.73 & 77.49 $\pm$ 0.73 & 78.76 $\pm$ 0.82 & 79.32 $\pm$ 1.10 & 76.93 $\pm$ 0.75 \\
\midrule
\multirow{8}{*}{ELECTRA} & \textsc{sst} & 74.36 $\pm$ 2.77 & \textbf{78.11 $\pm$ 2.09} & 71.97 $\pm$ 1.72 & 73.84 $\pm$ 1.89 & 73.84 $\pm$ 1.89 & 70.81 $\pm$ 2.01 & 73.82 $\pm$ 1.92 \\
& \textsc{subj} & 74.1 $\pm$ 11.02 & 77.41 $\pm$ 11.41 & 70.46 $\pm$ 7.16 & \textbf{78.17 $\pm$ 7.78} & \textbf{78.17 $\pm$ 7.78} & 77.71 $\pm$ 8.40 & 78.11 $\pm$ 7.86 \\
& \textsc{agn} & 65.67 $\pm$ 6.90 & \textbf{80.28 $\pm$ 3.41} & 79.75 $\pm$ 3.96 & 76.8 $\pm$ 3.18 & 77.01 $\pm$ 3.28 & 79.55 $\pm$ 2.74 & 76.57 $\pm$ 3.15 \\
& \textsc{trec} & 97.48 $\pm$ 0.84 & \textbf{98.90 $\pm$ 0.37} & 97.48 $\pm$ 0.65 & 97.26 $\pm$ 0.93 & 97.56 $\pm$ 0.82 & 96.21 $\pm$ 0.79 & 97.07 $\pm$ 0.96 \\
& \textsc{bp} & 86.06 $\pm$ 1.85 & \textbf{96.72 $\pm$ 1.4} & 84.63 $\pm$ 1.80 & 78.56 $\pm$ 2.57 & 81.75 $\pm$ 2.58 & 83.04 $\pm$ 2.33 & 76.77 $\pm$ 3.04 \\
& \textsc{ar} & 84.58 $\pm$ 3.49 & \textbf{91.66 $\pm$ 2.59} & 90.64 $\pm$ 1.64 & 87.74 $\pm$ 1.68 & 88.44 $\pm$ 1.76 & 88.53 $\pm$ 2.16 & 87.52 $\pm$ 1.62 \\
& \textsc{mg} & 80.52 $\pm$ 7.30 & \textbf{90.63 $\pm$ 3.56} & 80.41 $\pm$ 4.15 & 73.83 $\pm$ 4.12 & 74.78 $\pm$ 4.25 & 76.67 $\pm$ 3.46 & 73.35 $\pm$ 4.04 \\
& \textsc{ng} & 77.61 $\pm$ 2.12 & \textbf{82.47 $\pm$ 2.85} & 80.83 $\pm$ 2.88 & 76.45 $\pm$ 2.66 & 77.73 $\pm$ 2.74 & 79.11 $\pm$ 2.16 & 75.97 $\pm$ 2.63  \\
\bottomrule
\end{tabular}
}
\label{tab:agnostic-ood-a}
\end{table}

\begin{table}[]
\caption{OOD detection performance measured by AUROC (\%) with standard deviation of the open-box measures. Measures that outperform all of the white-box/black-box methods are in \textbf{bold}. Higher is better.}
\small
\centering
\scalebox{0.8}{
\begin{tabular}{llccc}
\toprule
Model & Dataset & ENSM & TEMP & MD \\
\midrule
\multirow{8}{*}{RoBERTa} & \textsc{sst} & 69.03 $\pm$ 1.07 & 71.64 $\pm$ 1.60 & \textbf{85.36 $\pm$ 3.42}\\ 
& \textsc{subj} & \textbf{76.68 $\pm$ 1.91} & 74.41 $\pm$ 8.60 & \textbf{93.47 $\pm$ 0.66}\\
& \textsc{agn} & \textbf{80.35 $\pm$ 1.15} & 75.38 $\pm$ 2.93 & \textbf{82.63 $\pm$ 3.19} \\
& \textsc{trec} & \textbf{96.87 $\pm$ 0.54} & \textbf{96.74 $\pm$ 0.84} & \textbf{96.74 $\pm$ 2.01} \\ 
& \textsc{bp} & 79.39 $\pm$ 1.07 & 86.01 $\pm$ 1.02 & \textbf{97.35 $\pm$ 1.12} \\
& \textsc{ar} & 92.44 $\pm$ 0.26 & 92.25 $\pm$ 0.85 & \textbf{98.35 $\pm$ 0.26} \\
& \textsc{mg} & 76.98 $\pm$ 1.54 & 84.3 $\pm$ 3.2 & \textbf{95.12 $\pm$ 1.68} \\
& \textsc{ng} & 80.77 $\pm$ 1.2 & 82.87 $\pm$ 0.49 & \textbf{90.68 $\pm$ 0.95} \\
\midrule
\multirow{8}{*}{ELECTRA} & \textsc{sst} & 73.81 $\pm$ 1.18 & 73.58 $\pm$ 1.98 & \textbf{78.85 $\pm$ 1.48} \\
& \textsc{subj} & \textbf{79.23 $\pm$ 3.17} & \textbf{78.20 $\pm$ 7.87} & \textbf{81.59 $\pm$ 8.16} \\
& \textsc{agn} & 79.5 $\pm$ 2.03 & 78.31 $\pm$ 3.59 & \textbf{86.1 $\pm$ 1.85} \\
& \textsc{trec} & 97.55 $\pm$ 0.37 & 98.2 $\pm$ 0.57 & 97.54 $\pm$ 1.18\\
& \textsc{bp} & 84.2 $\pm$ 1.68 & 84.69 $\pm$ 2.12 & \textbf{98.28 $\pm$ 0.46} \\
& \textsc{ar} & \textbf{91.98 $\pm$ 1.08} & 90.35 $\pm$ 1.7 & \textbf{95.47 $\pm$ 0.83} \\
& \textsc{mg} & 76.86 $\pm$ 1.08 & 78.47 $\pm$ 4.66 & \textbf{92.96 $\pm$ 3.67} \\
& \textsc{ng} & 79.93 $\pm$ 0.83 & 80.75 $\pm$ 2.69 & \textbf{89.13 $\pm$ 0.86} \\
\bottomrule
\end{tabular}
}
\label{tab:non-agnostic-ood-a}
\end{table}

\begin{table}[]
\caption{OOD detection performance measured by AUROC (\%) with standard deviation of the simplified datasets. The best-performing measure is in \textbf{bold}. Higher is better.}
\small
\centering
\scalebox{0.8}{
\begin{tabular}{llcccccc} 
\toprule
Model & Dataset & \ourmethodL{} & MSP & ENT & EGY & MC & GRAD \\
\midrule
\multirow{3}{*}{RoBERTa} & \textsc{bp2} & 79.66 $\pm$ 10.03 & 89.74 $\pm$ 1.20 & 89.74 $\pm$ 1.20 & 88.23 $\pm$ 0.72 & 88.92 $\pm$ 1.33 & \textbf{89.84 $\pm$ 1.13} \\
& \textsc{ar2} & 88.20 $\pm$ 5.04 & 93.33 $\pm$ 3.98 & 93.33 $\pm$ 3.98 & \textbf{94.27 $\pm$ 1.00} & 93.30 $\pm$ 2.72 & 93.58 $\pm$ 3.75 \\
& \textsc{mg2}  & 84.78 $\pm$ 14.37 & 78.13 $\pm$ 6.36 & 78.13 $\pm$ 6.36 & \textbf{85.44 $\pm$ 3.25} & 82.62 $\pm$ 4.25 & 78.28 $\pm$ 6.37 \\
\midrule
\multirow{3}{*}{ELECTRA} & \textsc{bp2} & 71.71 $\pm$ 16.41 & \textbf{93.23 $\pm$ 2.00} & \textbf{93.23 $\pm$ 2.00} & 92.51 $\pm$ 1.45 & 92.61 $\pm$ 1.44 & 93.20 $\pm$ 2.02 \\
& \textsc{ar2}  & 90.67 $\pm$ 5.55 & \textbf{96.16 $\pm$ 0.89} & \textbf{96.16 $\pm$ 0.89} & 93.80 $\pm$ 3.80 & 95.47 $\pm$ 1.01 & 96.14 $\pm$ 0.90 \\
& \textsc{mg2}  & \textbf{91.41 $\pm$ 1.45} & 88.02 $\pm$ 2.15 & 88.02 $\pm$ 2.15 & 85.08 $\pm$ 6.62 & 88.55 $\pm$ 1.52 & 88.10 $\pm$ 2.11 \\
\bottomrule
\end{tabular}
}
\label{tab:agnostic-ood-augmented-a}
\end{table}

\begin{table}[]
\caption{OOD detection performance measured by AUROC (\%) with standard deviation of the open-box measures for the simplified dataset is shown left of the vertical line. Measures that outperform all white-box/black-box methods are in \textbf{bold}. Higher is better. Effect size of changes in representations between ID and OOD data for a simplified datasets using CLES (\%) is shown right of the vertical line.}
\small
\centering
\scalebox{0.8}{
\begin{tabular}{llccc|cc}
\toprule
Model & Dataset & ENSM & TEMP & MD & Mean & Last \\
\midrule
\multirow{3}{*}{RoBERTa} & \textsc{bp2} & 87.59 $\pm$ 1.30 & \textbf{89.92 $\pm$ 1.18} & \textbf{97.66 $\pm$ 0.61} & 94.57 $\pm$ 6.33 & 84.27 $\pm$ 3.85 \\
& \textsc{ar2} & \textbf{94.55 $\pm$ 0.69} & 93.34 $\pm$ 4.03 & \textbf{99.02 $\pm$ 0.29} & 91.84 $\pm$ 4.70 & 80.47 $\pm$ 6.56 \\
& \textsc{mg2} & 83.95 $\pm$ 1.79 & 78.23 $\pm$ 6.29 & \textbf{97.48 $\pm$ 0.83} & 86.8 $\pm$ 11.25 & 70.25 $\pm$ 10.69 \\
\midrule
\multirow{3}{*}{ELECTRA} & 91.25 $\pm$ 1.10 & \textbf{93.34 $\pm$ 2.01} & \textbf{98.75 $\pm$ 0.3}3 & 97.28 $\pm$ 0.92 & 94.87 $\pm$ 1.21 \\
& \textsc{ar2} & 95.20 $\pm$ 1.04 & \textbf{96.20 $\pm$ 0.91} & 93.22 $\pm$ 6.26 & 97.07 $\pm$ 1.01 & 96.22 $\pm$ 2.63 \\
& \textsc{mg2} & 84.12 $\pm$ 2.71 & 87.95 $\pm$ 2.09 & \textbf{98.28 $\pm$ 0.34} & 88.28 $\pm$ 1.09 & 87.1 $\pm$ 2.65 \\
\bottomrule
\end{tabular}
}
\label{tab:non-agnostic-ood-augmented-a}
\end{table}

\begin{table}[]
\caption{Near-OOD detection performance measured by AUROC (\%) with standard deviation of the white-box/black-box methods. The best-performing measure is in \textbf{bold}. Higher is better.}
\small
\centering
\scalebox{0.8}{
\begin{tabular}{cccccccc} 
\toprule
Model & Shift & \ourmethodL{} & MSP & ENT & EGY & MC & GRAD \\
\midrule
\multirow{2}{*}{RoBERTa} & semantic & 61.61 $\pm$ 2.61 & 69.46 $\pm$ 0.83 & 69.41 $\pm$ 0.99 & \textbf{69.50 $\pm$ 0.86} & 68.34 $\pm$ 0.56 & 69.36 $\pm$ 0.91 \\
& background & \textbf{62.7 $\pm$ 5.75} & 54.26 $\pm$ 2.49 & 50.17 $\pm$ 4.82 & 54.26 $\pm$ 2.49 & 48.18 $\pm$ 2.44 & 54.33 $\pm$ 2.5  \\
\midrule
\multirow{2}{*}{ELECTRA} & semantic & 62.49 $\pm$ 3.81 & 63.17 $\pm$ 2.65 & 60.92 $\pm$ 3.70 & 63.12 $\pm$ 2.69 & 62.14 $\pm$ 2.26 & \textbf{63.23 $\pm$ 2.63} \\
& background & \textbf{59.35 $\pm$ 3.19} & 42.96 $\pm$ 2.92 & 38.68 $\pm$ 2.25 & 42.96 $\pm$ 2.92 & 37.96 $\pm$ 3.47 & 42.77 $\pm$ 2.92 \\
\bottomrule
\end{tabular}
}
\label{tab:agnostic-ood-shifted-a}
\end{table}

\begin{table}[]
\caption{Near-OOD detection performance measured by AUROC (\%) with standard deviation of the open-box measures for the augmented dataset is shown left of the vertical line. Measures that outperform all white-box/black-box methods are in \textbf{bold}. Higher is better.}
\small
\centering
\scalebox{0.8}{
\begin{tabular}{ccccc}
\toprule
Model & Shift & ENSM & TEMP & MD \\
\midrule
\multirow{2}{*}{RoBERTa} & semantic & 68.91 $\pm$ 1.12 & \textbf{70.56 $\pm$ 1.25} & \textbf{72.03 $\pm$ 0.89} \\
& background & 49.13 $\pm$ 2.5 & 54.19 $\pm$ 2.57  & 59.4 $\pm$ 10.18 \\
\midrule
\multirow{2}{*}{ELECTRA} & semantic & \textbf{65.67 $\pm$ 0.45} & 62.45 $\pm$ 3.17 & \textbf{64.22 $\pm$ 2.75} \\
& background & 41.25 $\pm$ 2.47 & 42.63 $\pm$ 2.84 & 39.31 $\pm$ 4.93 \\
\bottomrule
\end{tabular}
}
\label{tab:non-agnostic-ood-shifted-a}
\end{table}

\begin{table}[]
\caption{OOD detection performance measured by AUPR-IN (\%) with standard deviation of the white-box/black-box methods. The best-performing measure is in \textbf{bold}. Higher is better.}
\small
\centering
\scalebox{0.8}{
\begin{tabular}{llccccccc} 
\toprule
Model & Dataset & \ourmethodmean{} & \ourmethodL{} & MSP & ENT & EGY & MC & GRAD \\
\midrule
\multirow{8}{*}{RoBERTa} & \textsc{sst} & 50.72 $\pm$ 5.49 & \textbf{72.68 $\pm$ 9.50} & 71.13 $\pm$ 1.35 & 71.49 $\pm$ 2.49 & 71.49 $\pm$ 2.49 & 70.35 $\pm$ 2.04 & 71.59 $\pm$ 2.44 \\ 
& \textsc{subj} & 54.60 $\pm$ 11.38 & 73.45 $\pm$ 8.69 & 73.68 $\pm$ 11.08 & 76.18 $\pm$ 7.48 & 76.18 $\pm$ 7.47 & 75.85 $\pm$ 6.76 & \textbf{76.66 $\pm$ 7.54} \\
& \textsc{agn} & \textbf{76.48 $\pm$ 5.49} & 59.48 $\pm$ 8.20 & 72.73 $\pm$ 3.70 & 71.42 $\pm$ 3.78 & 71.47 $\pm$ 3.78 & 74.64 $\pm$ 3.93 & 71.26 $\pm$ 3.64 \\
& \textsc{trec} & 67.70 $\pm$ 9.83 & 94.98 $\pm$ 3.18 & 96.31 $\pm$ 0.76 & 96.67 $\pm$ 0.88 & \textbf{96.78 $\pm$ 0.85} & 96.29 $\pm$ 0.83 & 96.64 $\pm$ 0.87 \\ 
& \textsc{bp} & 86.80 $\pm$ 3.01 & \textbf{89.98 $\pm$ 2.83} & 84.50 $\pm$ 0.76 & 69.72 $\pm$ 0.88 & 71.06 $\pm$ 0.98 & 72.58 $\pm$ 0.85 & 73.29 $\pm$ 1.15 \\
& \textsc{ar} & 91.30 $\pm$ 1.92 & \textbf{93.18 $\pm$ 1.23} & 92.32 $\pm$ 0.84 & 89.95 $\pm$ 1.17 & 90.35 $\pm$ 1.19 & 91.25 $\pm$ 1.08 & 89.75 $\pm$ 1.16\\
& \textsc{mg} & \textbf{87.86 $\pm$ 1.94} & 84.18 $\pm$ 4.69 & 86.31 $\pm$ 2.93 & 77.69 $\pm$ 2.57 & 78.36 $\pm$ 2.65 & 81.04 $\pm$ 2.26 & 77.27 $\pm$ 2.55 \\
& \textsc{ng} & \textbf{84.70 $\pm$ 0.99} & 72.97 $\pm$ 4.44 & 82.59 $\pm$ 0.99 & 78.74 $\pm$ 0.80 & 79.29 $\pm$ 0.84 & 80.11 $\pm$ 0.92 & 78.43 $\pm$ 0.81\\
\midrule
\multirow{8}{*}{ELECTRA} & \textsc{sst} & 73.50 $\pm$ 3.25 & \textbf{79.28 $\pm$ 1.66} & 69.72 $\pm$ 1.92 & 73.75 $\pm$ 1.97 & 73.75 $\pm$ 1.97 & 72.15 $\pm$ 2.11 & 73.69 $\pm$ 2.03\\
& \textsc{subj} & 73.62 $\pm$ 11.47 & 78.02 $\pm$ 11.63 & 70.99 $\pm$ 7.04 & 79.86 $\pm$ 6.71 & 79.86 $\pm$ 6.71 & \textbf{79.93 $\pm$ 7.30} & 79.78 $\pm$ 6.84 \\
& \textsc{agn} & 64.00 $\pm$ 5.74 & \textbf{78.77 $\pm$ 3.04} & 77.13 $\pm$ 4.37 & 75.63 $\pm$ 3.46 & 75.71 $\pm$ 3.48 & 78.00 $\pm$ 2.98 & 75.37 $\pm$ 3.46 \\
& \textsc{trec} & 97.39 $\pm$ 0.84 & \textbf{98.89 $\pm$ 0.40} & 97.34 $\pm$ 0.62 & 97.78 $\pm$ 0.75 & 97.94 $\pm$ 0.70 & 96.73 $\pm$ 0.74 & 97.66 $\pm$ 0.78 \\
& \textsc{bp} & 87.87 $\pm$ 1.89 & \textbf{96.54 $\pm$ 1.41} & 82.67 $\pm$ 3.05 & 79.34 $\pm$ 3.26 & 80.97 $\pm$ 3.31 & 82.81 $\pm$ 2.66 & 77.16 $\pm$ 3.53 \\
& \textsc{ar} & 86.27 $\pm$ 2.79 & \textbf{91.95 $\pm$ 3.04} & 90.81 $\pm$ 1.38 & 88.69 $\pm$ 1.38 & 89.05 $\pm$ 1.43 & 89.73 $\pm$ 1.57 & 88.53 $\pm$ 1.31 \\
& \textsc{mg} & 81.65 $\pm$ 6.40 & \textbf{91.14 $\pm$ 3.30} & 79.82 $\pm$ 4.52 & 75.49 $\pm$ 4.64 & 75.91 $\pm$ 4.68 & 77.99 $\pm$ 3.63 & 75.21 $\pm$ 4.58 \\
& \textsc{ng} & 78.90 $\pm$ 3.08 & \textbf{82.37 $\pm$ 5.11} & 79.20 $\pm$ 4.28 & 76.85 $\pm$ 3.50 & 77.42 $\pm$ 3.63 & 79.08 $\pm$ 3.10 & 76.60 $\pm$ 3.53 \\
\bottomrule
\end{tabular}
}
\label{tab:agnostic-ood-aupr}
\end{table}

\begin{table}[]
\caption{OOD detection performance measured by AUPR-IN (\%) with standard deviation of the open-box measures. Measures that outperform all of the white-box/black-box methods are in \textbf{bold}. Higher is better.}
\small
\centering
\scalebox{0.8}{
\begin{tabular}{llccc}
\toprule
Model & Dataset & ENSM & TEMP & MD \\
\midrule
\multirow{8}{*}{RoBERTa} & \textsc{sst} & 69.91 $\pm$ 1.79 & 66.91 $\pm$ 3.13 & \textbf{85.77 $\pm$ 3.49} \\ 
& \textsc{subj} & \textbf{78.51 $\pm$ 3.20} & 72.32 $\pm$ 8.40 & \textbf{93.53 $\pm$ 0.75} \\
& \textsc{agn} & \textbf{77.45 $\pm$ 1.62} & 67.88 $\pm$ 3.91 & \textbf{79.93 $\pm$ 3.90} \\
& \textsc{trec} & \textbf{97.28 $\pm$ 0.46} & 96.39 $\pm$ 1.12 & \textbf{96.91 $\pm$ 1.85} \\ 
& \textsc{bp} & 77.44 $\pm$ 1.01 & 80.61 $\pm$ 1.35 & \textbf{96.93 $\pm$ 1.28} \\
& \textsc{ar} & 92.73 $\pm$ 0.34 & 90.37 $\pm$ 1.22 & \textbf{98.24 $\pm$ 0.27} \\
& \textsc{mg} & 77.62 $\pm$ 1.81 & 81.20 $\pm$ 3.85 & \textbf{95.14 $\pm$ 1.51} \\
& \textsc{ng} & 80.83 $\pm$ 1.24 & 79.15 $\pm$ 0.94 & \textbf{91.54 $\pm$ 0.84} \\
\midrule
\multirow{8}{*}{ELECTRA} & \textsc{sst} & 75.15 $\pm$ 1.40 & 69.21 $\pm$ 2.20 & \textbf{81.24 $\pm$ 0.94} \\
& \textsc{subj} & \textbf{81.71 $\pm$ 3.41} & 76.71 $\pm$ 7.49 & \textbf{82.03 $\pm$ 6.97} \\
& \textsc{agn} & 77.76 $\pm$ 3.20 & 72.91 $\pm$ 3.95 & \textbf{83.90 $\pm$ 3.48} \\
& \textsc{trec} & 97.93 $\pm$ 0.24 & 98.02 $\pm$ 0.66 & 97.60 $\pm$ 1.12 \\
& \textsc{bp} & 82.91 $\pm$ 2.00 & 79.52 $\pm$ 3.31 & \textbf{98.27 $\pm$ 0.43} \\
& \textsc{ar} & \textbf{92.42 $\pm$ 0.98} & 88.61 $\pm$ 1.66 & \textbf{96.09 $\pm$ 0.72} \\
& \textsc{mg} & 78.34 $\pm$ 1.32 & 74.60 $\pm$ 5.69 & \textbf{93.67 $\pm$ 3.27} \\
& \textsc{ng} & 80.01 $\pm$ 1.20 & 75.91 $\pm$ 4.15 & \textbf{90.97 $\pm$ 0.80} \\
\bottomrule
\end{tabular}
}
\label{tab:non-agnostic-ood-aupr}
\end{table}

\begin{table}[]
\caption{OOD detection performance measured by AUPR-IN (\%) with standard deviation of the simplified datasets. The best-performing measure is in \textbf{bold}. Higher is better.}
\small
\centering
\scalebox{0.8}{
\begin{tabular}{llcccccc} 
\toprule
Model & Dataset & \ourmethodL{} & MSP & ENT & EGY & MC & GRAD \\
\midrule
\multirow{3}{*}{RoBERTa} & \textsc{bp2} & 78.96 $\pm$ 10.76 & 90.37 $\pm$ 0.95 & 90.37 $\pm$ 0.95 & 89.03 $\pm$ 1.35 & \textbf{90.55 $\pm$ 0.96} & 90.50 $\pm$ 0.88 \\
& \textsc{ar2} & 86.43 $\pm$ 6.29 & 92.81 $\pm$ 5.97 & 92.81 $\pm$ 5.97 & \textbf{94.68 $\pm$ 0.71} & 93.82 $\pm$ 3.79 & 93.24 $\pm$ 5.32 \\
& \textsc{mg2} & 82.94 $\pm$ 15.31 & 77.55 $\pm$ 7.63 & 77.55 $\pm$ 7.63 & \textbf{85.45 $\pm$ 3.94} & 83.11 $\pm$ 4.93 & 77.84 $\pm$ 7.45 \\
\midrule
\multirow{3}{*}{ELECTRA} & \textsc{bp2} & 72.95 $\pm$ 15.63 & 93.22 $\pm$ 3.21 & 93.22 $\pm$ 3.21 & 93.05 $\pm$ 1.83 & \textbf{93.42 $\pm$ 2.26} & 93.15 $\pm$ 3.27 \\
& \textsc{ar2} & 88.97 $\pm$ 6.73 & \textbf{96.34 $\pm$ 1.20} & \textbf{96.34 $\pm$ 1.20} & 94.32 $\pm$ 3.57 & 96.20 $\pm$ 1.06 & 96.30 $\pm$ 1.25 \\
& \textsc{mg2} & 91.12 $\pm$ 1.00 & 90.34 $\pm$ 1.33 & 90.34 $\pm$ 1.33 & 86.75 $\pm$ 6.33 & \textbf{91.22 $\pm$ 0.80} & 90.40 $\pm$ 1.30 \\
\bottomrule
\end{tabular}
}
\label{tab:agnostic-ood-augmented-aupr}
\end{table}

\begin{table}[]
\caption{OOD detection performance measured by AUPR-IN (\%) with standard deviation of the open-box measures for the simplified dataset is shown left of the vertical line. Measures that outperform all white-box/black-box methods are in \textbf{bold}. Higher is better.}
\small
\centering
\scalebox{0.8}{
\begin{tabular}{llccc}
\toprule
Model & Dataset & ENSM & TEMP & MD \\
\midrule
\multirow{3}{*}{RoBERTa} & \textsc{bp2} & 89.20 $\pm$ 1.39 & 88.70 $\pm$ 1.20 & \textbf{97.94 $\pm$ 0.58} \\
& \textsc{ar2} & \textbf{95.79 $\pm$ 0.68} & 91.42 $\pm$ 7.25 & \textbf{99.10 $\pm$ 0.23} \\
& \textsc{mg2} & \textbf{84.69 $\pm$ 3.85} & 74.03 $\pm$ 8.73 & \textbf{97.73 $\pm$ 0.78} \\
\midrule
\multirow{3}{*}{ELECTRA} & \textsc{bp2} & 93.03 $\pm$ 2.09 & 92.01 $\pm$ 3.83 & \textbf{98.81 $\pm$ 0.29} \\
& \textsc{ar2} & \textbf{96.45 $\pm$ 0.79} & 95.64 $\pm$ 1.41 & 87.32 $\pm$ 12.53 \\
& \textsc{mg2} & 87.43 $\pm$ 2.78 & 88.64 $\pm$ 1.47 & \textbf{98.49 $\pm$ 0.30}  \\
\bottomrule
\end{tabular}
}
\label{tab:non-agnostic-ood-augmented-aupr}
\end{table}

\begin{table}[]
\caption{Near-OOD detection performance measured by AUPR-IN (\%) with standard deviation of the white-box/black-box methods. The best-performing measure is in \textbf{bold}. Higher is better.}
\small
\centering
\scalebox{0.8}{
\begin{tabular}{cccccccc} 
\toprule
Model & Shift & \ourmethodL{} & MSP & ENT & EGY & MC & GRAD \\
\midrule
\multirow{2}{*}{RoBERTa} & semantic & 61.15 $\pm$ 3.20 & \textbf{70.56 $\pm$ 2.81} & 70.36 $\pm$ 2.87 & 70.00 $\pm$ 1.29 & 69.99 $\pm$ 2.58 & 70.42 $\pm$ 2.88 \\
& background & \textbf{62.62 $\pm$ 5.66} & 56.61 $\pm$ 4.48 & 56.61 $\pm$ 4.48 & 51.12 $\pm$ 5.68 & 52.43 $\pm$ 4.43 & 56.68 $\pm$ 4.46 \\
\midrule
\multirow{2}{*}{ELECTRA} & semantic & 61.28 $\pm$ 5.53 & 61.85 $\pm$ 2.25 & 61.76 $\pm$ 2.32 & 58.57 $\pm$ 3.85 & 60.62 $\pm$ 1.85 & \textbf{61.97 $\pm$ 2.20} \\
& background & \textbf{59.40 $\pm$ 3.31} & 45.36 $\pm$ 2.09 & 45.36 $\pm$ 2.09 & 42.00 $\pm$ 1.11 & 41.66 $\pm$ 2.21 & 45.26 $\pm$ 2.09 \\
\bottomrule
\end{tabular}
}
\label{tab:agnostic-ood-shifted-aupr}
\end{table}

\begin{table}[]
\caption{Near-OOD detection performance measured by AUPR-IN (\%) with standard deviation of the open-box measures for the augmented dataset is shown left of the vertical line. Measures that outperform all white-box/black-box methods are in \textbf{bold}. Higher is better.}
\small
\centering
\scalebox{0.8}{
\begin{tabular}{ccccc}
\toprule
Model & Shift & ENSM & TEMP & MD \\
\midrule
\multirow{2}{*}{RoBERTa} & semantic & 69.50 $\pm$ 2.84 & 67.13 $\pm$ 3.45 & \textbf{73.76 $\pm$ 1.27} \\
& background & 52.33 $\pm$ 3.19 & 51.30 $\pm$ 5.11 & \textbf{65.79 $\pm$ 8.60} \\
\midrule
\multirow{2}{*}{ELECTRA} & semantic & \textbf{63.49 $\pm$ 1.69} & 55.54 $\pm$ 3.31 & \textbf{64.94 $\pm$ 4.82} \\
& background & 43.50 $\pm$ 1.38 & 39.75 $\pm$ 2.00 & 46.59 $\pm$ 4.32 \\
\bottomrule
\end{tabular}
}
\label{tab:non-agnostic-ood-shifted-aupr}
\end{table}

\begin{table}[]
\caption{OOD detection performance measured by FPR@95TPR (\%) with standard deviation of the white-box/black-box methods. The best-performing measure is in \textbf{bold}. Lower is better.}
\small
\centering
\scalebox{0.8}{
\begin{tabular}{llccccccc} 
\toprule
Model & Dataset & \ourmethodmean{} & \ourmethodL{} & MSP & ENT & EGY & MC & GRAD \\
\midrule
\multirow{8}{*}{RoBERTa} & \textsc{sst} & 94.11 $\pm$ 3.57 & \textbf{79.31 $\pm$ 10.96} & 86.92 $\pm$ 0.51 & 86.92 $\pm$ 0.51 & 86.03 $\pm$ 2.29 & 90.90 $\pm$ 1.01 & 87.35 $\pm$ 0.76 \\ 
& \textsc{subj} & 92.65 $\pm$ 5.85 & 75.88 $\pm$ 10.04 & 80.65 $\pm$ 6.14 & 80.65 $\pm$ 6.14 & \textbf{75.74 $\pm$ 5.77} & 85.40 $\pm$ 3.52 & 80.81 $\pm$ 6.08 \\
& \textsc{agn} & 70.76 $\pm$ 10.35 & 83.21 $\pm$ 8.49 & 77.45 $\pm$ 3.08 & 76.16 $\pm$ 3.09 & \textbf{63.96 $\pm$ 5.94} & 68.23 $\pm$ 3.90 & 77.52 $\pm$ 2.94 \\
& \textsc{trec} & 77.16 $\pm$ 13.86 & 21.96 $\pm$ 11.91 & 19.72 $\pm$ 5.79 & 18.84 $\pm$ 5.82 & \textbf{18.52 $\pm$ 3.56} & 28.20 $\pm$ 5.40 & 20.08 $\pm$ 5.38 \\ 
& \textsc{bp} & 49.75 $\pm$ 8.85 & \textbf{46.77 $\pm$ 14.68} & 80.81 $\pm$ 1.77 & 71.10 $\pm$ 2.79 & 51.05 $\pm$ 5.75 & 68.82 $\pm$ 2.80 & 83.60 $\pm$ 1.85 \\
& \textsc{ar} & 38.04 $\pm$ 7.21 & \textbf{31.80 $\pm$ 5.90} & 59.68 $\pm$ 3.95 & 49.18 $\pm$ 3.13 & 36.88 $\pm$ 3.59 & 47.31 $\pm$ 3.19 & 62.43 $\pm$ 3.52 \\
& \textsc{mg} & \textbf{47.51 $\pm$ 5.60} & 52.45 $\pm$ 8.99 & 81.20 $\pm$ 2.20 & 71.80 $\pm$ 3.80 & 50.76 $\pm$ 7.53 & 65.02 $\pm$ 4.57 & 84.63 $\pm$ 1.69 \\
& \textsc{ng} & \textbf{67.91 $\pm$ 3.33} & 83.79 $\pm$ 5.22 & 78.91 $\pm$ 1.31 & 72.84 $\pm$ 0.99 & 70.04 $\pm$ 2.40 & 73.91 $\pm$ 1.99 & 81.92 $\pm$ 1.42 \\
\midrule
\multirow{8}{*}{ELECTRA} & \textsc{sst} & 79.75 $\pm$ 3.94 & \textbf{77.66 $\pm$ 4.07} & 84.10 $\pm$ 0.79 & 84.10 $\pm$ 0.79 & 83.97 $\pm$ 2.57 & 89.30 $\pm$ 0.98 & 84.09 $\pm$ 0.93 \\
& \textsc{subj} & 77.48 $\pm$ 10.00 & \textbf{75.27 $\pm$ 11.62} & 79.67 $\pm$ 9.49 & 79.67 $\pm$ 9.49 & 82.89 $\pm$ 4.86 & 82.80 $\pm$ 6.09 & 79.70 $\pm$ 9.40 \\
& \textsc{agn} & 85.07 $\pm$ 11.95 & \textbf{63.68 $\pm$ 15.00} & 77.97 $\pm$ 4.64 & 76.93 $\pm$ 5.28 & 62.94 $\pm$ 9.51 & 71.32 $\pm$ 5.52 & 78.48 $\pm$ 4.46 \\
& \textsc{trec} & 11.20 $\pm$ 4.06 & \textbf{3.84 $\pm$ 1.65} & 12.28 $\pm$ 5.22 & 11.40 $\pm$ 4.81 & 12.28 $\pm$ 3.75 & 24.64 $\pm$ 7.74 & 13.24 $\pm$ 5.70 \\
& \textsc{bp} & 73.77 $\pm$ 12.84 & \textbf{14.86 $\pm$ 6.85} & 76.73 $\pm$ 3.31 & 62.47 $\pm$ 4.02 & 53.56 $\pm$ 3.40 & 59.49 $\pm$ 6.27 & 79.19 $\pm$ 4.12 \\
& \textsc{ar} & 73.06 $\pm$ 14.25 & \textbf{42.24 $\pm$ 7.35} & 64.30 $\pm$ 4.51 & 57.35 $\pm$ 5.90 & 45.68 $\pm$ 7.24 & 56.56 $\pm$ 7.00 & 85.25 $\pm$ 2.35 \\
& \textsc{mg} & 75.77 $\pm$ 13.56 & \textbf{43.06 $\pm$ 13.99} & 82.90 $\pm$ 2.70 & 75.95 $\pm$ 4.32 & 62.25 $\pm$ 6.70 & 71.83 $\pm$ 4.26 & 65.19 $\pm$ 4.16 \\
& \textsc{ng} & 86.26 $\pm$ 4.66 & 76.89 $\pm$ 4.50 & 78.84 $\pm$ 3.79 & 73.33 $\pm$ 4.38 & 70.89 $\pm$ 4.64 & \textbf{72.52 $\pm$ 3.5} & 81.76 $\pm$ 3.16 \\
\bottomrule
\end{tabular}
}
\label{tab:agnostic-ood-fpr}
\end{table}

\begin{table}[]
\caption{OOD detection performance measured by FPR@95TPR (\%) with standard deviation of the open-box measures. Measures that outperform all of the white-box/black-box methods are in \textbf{bold}. Lower is better.}
\small
\centering
\scalebox{0.8}{
\begin{tabular}{llccc}
\toprule
Model & Dataset & ENSM & TEMP & MD \\
\midrule
\multirow{8}{*}{RoBERTa} & \textsc{sst} & 90.97 $\pm$ 0.45 & 86.95 $\pm$ 0.37 & \textbf{59.63 $\pm$ 10.21} \\ 
& \textsc{subj} & 87.23 $\pm$ 1.52 & 80.49 $\pm$ 5.95 & \textbf{32.60 $\pm$ 3.68} \\
& \textsc{agn} & \textbf{62.09 $\pm$ 1.84} & \textbf{67.71 $\pm$ 4.12} & \textbf{55.12 $\pm$ 6.12} \\
& \textsc{trec} & 20.40 $\pm$ 5.49 & \textbf{18.00 $\pm$ 4.53} & \textbf{16.04 $\pm$ 10.83} \\ 
& \textsc{bp} & 62.59 $\pm$ 2.78 & \textbf{46.33 $\pm$ 3.58} & \textbf{11.33 $\pm$ 4.96} \\
& \textsc{ar} & 37.70 $\pm$ 1.82 & 36.06 $\pm$ 3.52 & \textbf{7.74 $\pm$ 1.67} \\
& \textsc{mg} & 68.95 $\pm$ 1.85 & 53.77 $\pm$ 7.40 & \textbf{23.20 $\pm$ 7.71} \\
& \textsc{ng} & 70.41 $\pm$ 1.44 & \textbf{67.54 $\pm$ 1.43} & \textbf{49.95 $\pm$ 5.09} \\
\midrule
\multirow{8}{*}{ELECTRA} & \textsc{sst} & 88.62 $\pm$ 1.95 & 84.21 $\pm$ 0.59 & 84.45 $\pm$ 4.86 \\
& \textsc{subj} & 83.87 $\pm$ 2.13 & 79.17 $\pm$ 10.15 & \textbf{68.15 $\pm$ 13.19} \\
& \textsc{agn} & 68.27 $\pm$ 2.85 & 70.61 $\pm$ 8.14 & \textbf{46.47 $\pm$ 4.98} \\
& \textsc{trec} & 18.48 $\pm$ 4.99 & 8.60 $\pm$ 2.74 & 9.12 $\pm$ 5.98 \\
& \textsc{bp} & 53.33 $\pm$ 3.32 & 52.09 $\pm$ 4.81 & \textbf{8.31 $\pm$ 2.46} \\
& \textsc{ar} & \textbf{41.31 $\pm$ 5.67} & 46.08 $\pm$ 6.96 & \textbf{27.52 $\pm$ 8.00} \\
& \textsc{mg} & 71.18 $\pm$ 1.23 & 66.34 $\pm$ 6.76 & \textbf{34.45 $\pm$ 14.27} \\
& \textsc{ng} & \textbf{72.27 $\pm$ 1.35} & \textbf{69.58 $\pm$ 5.00} & \textbf{62.98 $\pm$ 4.75} \\
\bottomrule
\end{tabular}
}
\label{tab:non-agnostic-ood-fpr}
\end{table}

\begin{table}[]
\caption{OOD detection performance measured by FPR@95TPR (\%) with standard deviation of the simplified datasets. The best-performing measure is in \textbf{bold}. Lower is better.}
\small
\centering
\scalebox{0.8}{
\begin{tabular}{llcccccc} 
\toprule
Model & Dataset & \ourmethodL{} & MSP & ENT & EGY & MC & GRAD \\
\midrule
\multirow{3}{*}{RoBERTa} & \textsc{bp2} & 62.28 $\pm$ 17.17 & 55.45 $\pm$ 8.77 & 55.45 $\pm$ 8.77 & 64.56 $\pm$ 5.95 & 66.65 $\pm$ 5.39 & \textbf{55.01 $\pm$ 8.48} \\
& \textsc{ar2} & 39.75 $\pm$ 15.50 & 29.69 $\pm$ 10.78 & 29.69 $\pm$ 10.78 & 30.78 $\pm$ 8.40 & 39.32 $\pm$ 12.32 & \textbf{29.12 $\pm$ 10.95} \\
& \textsc{mg2} & \textbf{41.38 $\pm$ 23.22} & 73.73 $\pm$ 8.16 & 73.73 $\pm$ 8.16 & 67.98 $\pm$ 6.38 & 72.90 $\pm$ 5.66 & 73.30 $\pm$ 8.07 \\
\midrule
\multirow{3}{*}{ELECTRA} & \textsc{bp2} & 65.40 $\pm$ 27.16 & 34.58 $\pm$ 9.20 & 34.58 $\pm$ 9.20 & 44.50 $\pm$ 7.54 & 49.86 $\pm$ 9.84 & \textbf{34.54 $\pm$ 9.16} \\
& \textsc{ar2} & 30.29 $\pm$ 15.97 & \textbf{15.99 $\pm$ 4.57} & \textbf{15.99 $\pm$ 4.57} & 32.97 $\pm$ 26.91 & 26.73 $\pm$ 10.82 & 16.02 $\pm$ 4.55 \\
& \textsc{mg2} & \textbf{36.59 $\pm$ 7.84} & 68.92 $\pm$ 8.81 & 68.92 $\pm$ 8.81 & 66.01 $\pm$ 9.33 & 69.22 $\pm$ 7.05 & 68.40 $\pm$ 8.61 \\
\bottomrule
\end{tabular}
}
\label{tab:agnostic-ood-augmented-fpr}
\end{table}

\begin{table}[]
\caption{OOD detection performance measured by FPR@95TPR (\%) with standard deviation of the open-box measures for the simplified dataset is shown left of the vertical line. Measures that outperform all white-box/black-box methods are in \textbf{bold}. Lower is better.}
\small
\centering
\scalebox{0.8}{
\begin{tabular}{llccc}
\toprule
Model & Dataset & ENSM & TEMP & MD \\
\midrule
\multirow{3}{*}{RoBERTa} & \textsc{bp2} & 70.41 $\pm$ 3.73 & 54.22 $\pm$ 8.01 & \textbf{11.53 $\pm$ 3.33} \\
& \textsc{ar2} & 41.87 $\pm$ 5.09 & \textbf{28.90 $\pm$ 11.07} & \textbf{4.68 $\pm$ 1.62} \\
& \textsc{mg2} & 75.55 $\pm$ 1.77 & 73.57 $\pm$ 8.19 & \textbf{12.83 $\pm$ 5.26} \\
\midrule
\multirow{3}{*}{ELECTRA} & \textsc{bp2} & 63.95 $\pm$ 4.01 & \textbf{33.65 $\pm$ 8.64} & \textbf{6.60 $\pm$ 2.34} \\
& \textsc{ar2} & 34.50 $\pm$ 14.15 & \textbf{15.84 $\pm$ 5.08} & \textbf{13.32 $\pm$ 7.92} \\
& \textsc{mg2} & 76.87 $\pm$ 4.14 & 68.90 $\pm$ 8.82 & \textbf{8.53 $\pm$ 2.86} \\
\bottomrule
\end{tabular}
}
\label{tab:non-agnostic-ood-augmented-fpr}
\end{table}

\begin{table}[]
\caption{Near-OOD detection performance measured by FPR@95TPR (\%) with standard deviation of the white-box/black-box methods. The best-performing measure is in \textbf{bold}. Lower is better.}
\small
\centering
\scalebox{0.8}{
\begin{tabular}{cccccccc} 
\toprule
Model & Shift & \ourmethodL{} & MSP & ENT & EGY & MC & GRAD \\
\midrule
\multirow{2}{*}{RoBERTa} & semantic & 89.79 $\pm$ 1.75 & 90.21 $\pm$ 0.76 & 89.89 $\pm$ 0.88 & \textbf{89.59 $\pm$ 0.90} & 91.50 $\pm$ 0.83 & 90.21 $\pm$ 0.93 \\
& background & \textbf{91.38 $\pm$ 5.83} & 95.32 $\pm$ 1.07 & 95.32 $\pm$ 1.07 & 95.79 $\pm$ 1.38 & 96.79 $\pm$ 0.72 & 95.25 $\pm$ 1.03 \\
\midrule
\multirow{2}{*}{ELECTRA} & semantic & 91.14 $\pm$ 0.92 & 90.99 $\pm$ 0.74 & \textbf{90.98 $\pm$ 0.98} & 91.72 $\pm$ 0.94 & 91.64 $\pm$ 0.90 & \textbf{90.98 $\pm$ 0.47} \\
& background & \textbf{89.38 $\pm$ 2.40} & 96.30 $\pm$ 0.88 & 96.30 $\pm$ 0.88 & 97.12 $\pm$ 0.32 & 97.64 $\pm$ 0.73 & 96.31 $\pm$ 0.78 \\
\bottomrule
\end{tabular}
}
\label{tab:agnostic-ood-shifted-fpr}
\end{table}

\begin{table}[]
\caption{Near-OOD detection performance measured by FPR@95TPR (\%) with standard deviation of the open-box measures for the augmented dataset is shown left of the vertical line. Measures that outperform all white-box/black-box methods are in \textbf{bold}. Lower is better.}
\small
\centering
\scalebox{0.8}{
\begin{tabular}{ccccc}
\toprule
Model & Shift & ENSM & TEMP & MD \\
\midrule
\multirow{2}{*}{RoBERTa} & semantic & 90.39 $\pm$ 0.49 & \textbf{88.77 $\pm$ 1.03} & \textbf{84.49 $\pm$ 1.83} \\
& background & 97.05 $\pm$ 0.57 & 95.34 $\pm$ 1.06 & 95.57 $\pm$ 3.69 \\
\midrule
\multirow{2}{*}{ELECTRA} & semantic & \textbf{90.26 $\pm$ 0.15} & 91.23 $\pm$ 1.26 & 93.37 $\pm$ 0.29 \\
& background & 97.71 $\pm$ 0.33 & 96.27 $\pm$ 0.95 & 98.83 $\pm$ 0.30 \\
\bottomrule
\end{tabular}
}
\label{tab:non-agnostic-ood-shifted-fpr}
\end{table}


\begin{figure}[t!]
\small
\centering
\begin{subfigure}{.49\linewidth}
  \centering
  \includegraphics[width=\linewidth]{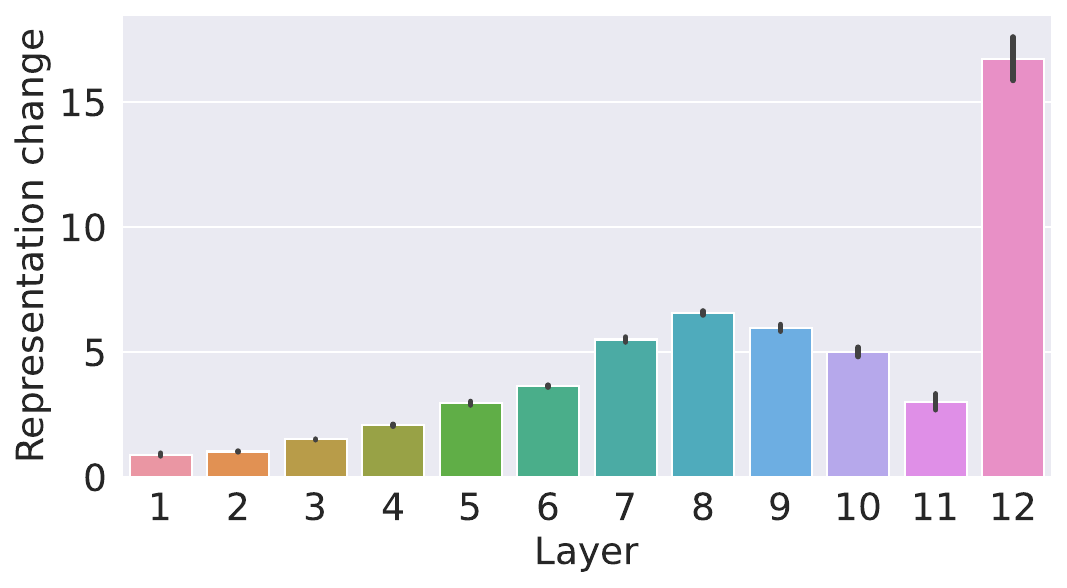}
  \caption{RoBERTa's representation change}
\end{subfigure}
\begin{subfigure}{.49\linewidth}
  \centering
  \includegraphics[width=\linewidth]{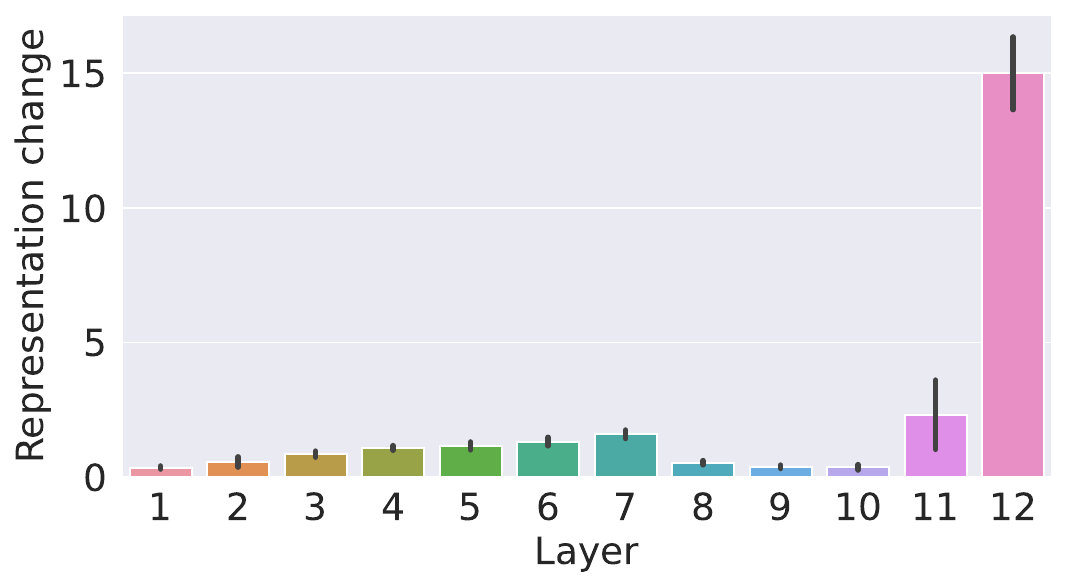}
  \caption{ELECTRA's representation change}
\end{subfigure}
\begin{subfigure}{.49\linewidth}
  \centering
  \includegraphics[width=\linewidth]{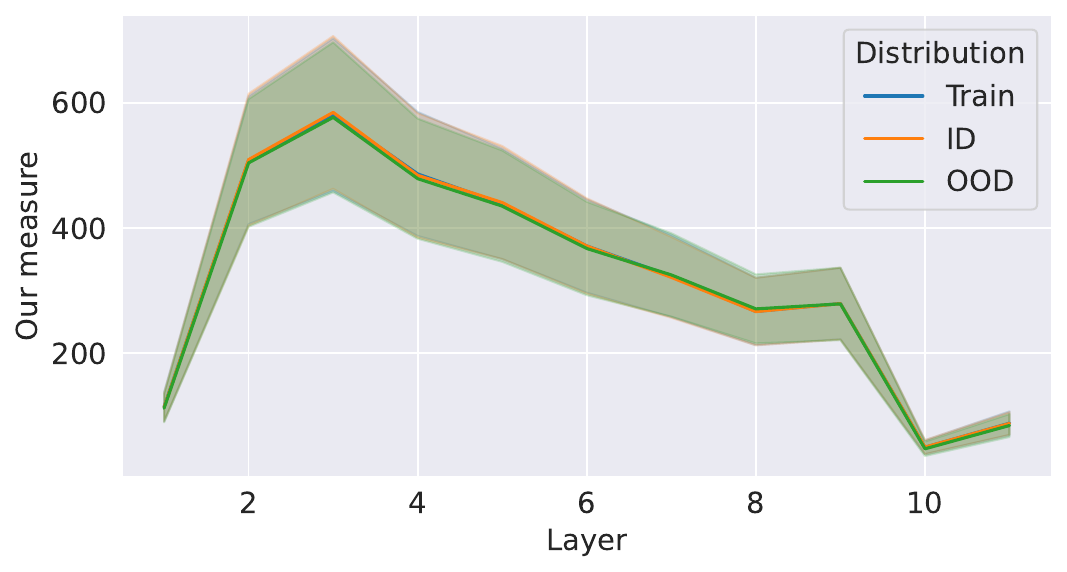}
  \caption{Pre-trained RoBERTa's \ourmethod{} score by layer}
\end{subfigure}
\begin{subfigure}{.49\linewidth}
  \centering
  \includegraphics[width=\linewidth]{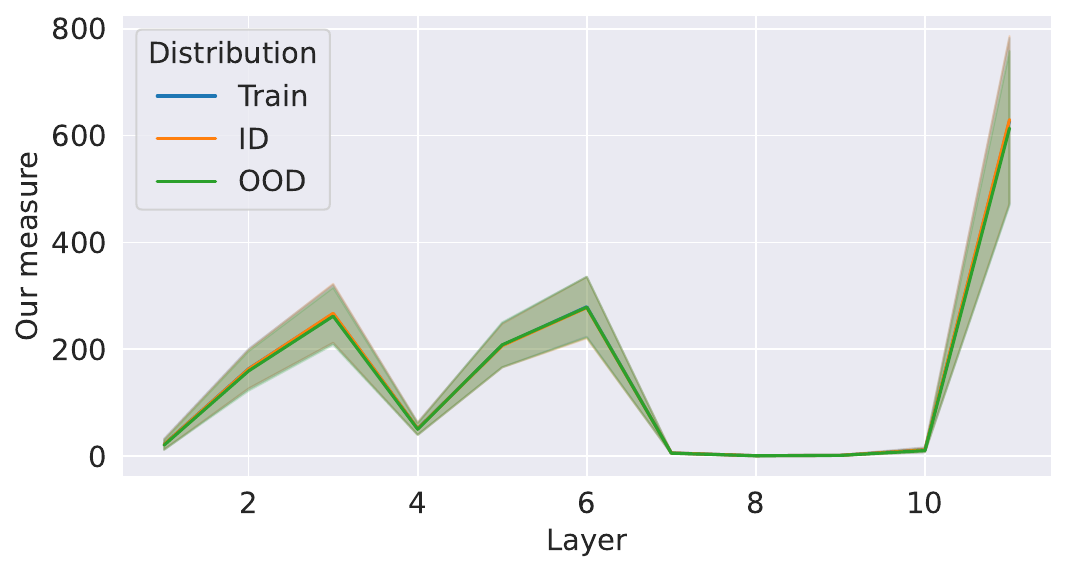}
  \caption{Pre-trained ELECTRA \ourmethod{} score by layer}
\end{subfigure}
\begin{subfigure}{.49\linewidth}
  \centering
  \includegraphics[width=\linewidth]{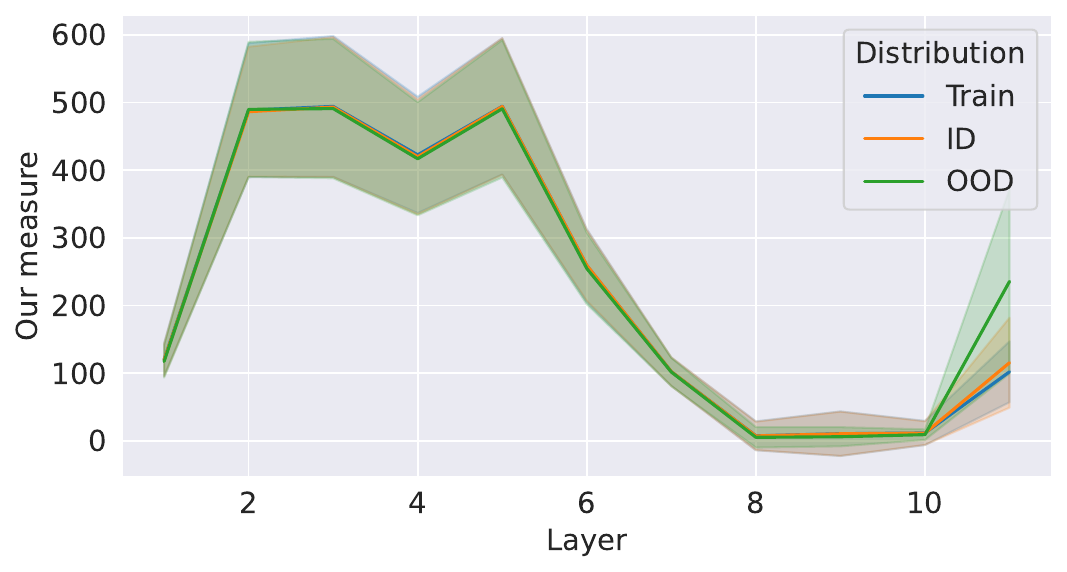}
  \caption{Fine-tuned RoBERTa's \ourmethod{} score by layer}
\end{subfigure}
\begin{subfigure}{.49\linewidth}
  \centering
  \includegraphics[width=\linewidth]{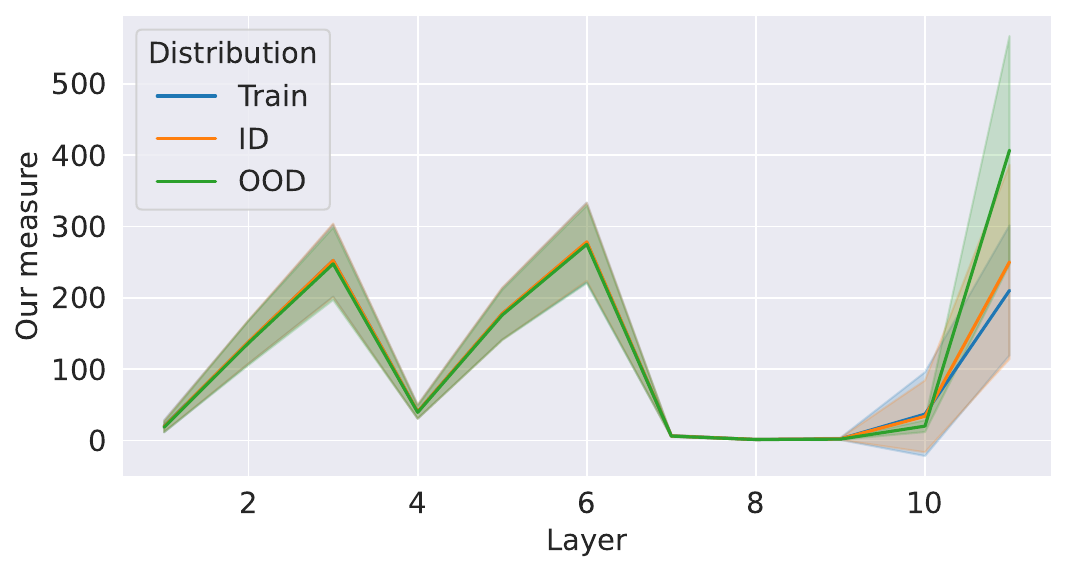}
  \caption{Fine-tuned ELECTRA \ourmethod{} score by layer}
\end{subfigure}
\caption{The impact of change of each layer on \ourmethod{} score across layers. Top row: Change in intermediate representations of training instances by layer for (a) RoBERTa and (b) ELECTRA. The scores are averaged across instances for the \textsc{sst} dataset. The black error bars denote the standard deviation. Middle row: \ourmethod{} score by layer of models for \textsc{sst} before fine-tuning. Bottom row: \ourmethod{} score by layer of models for \textsc{sst} after fine-tuning.}
\label{fig:repr-change-sst}
\end{figure}

\begin{figure}[t!]
\small
\centering
\begin{subfigure}{.49\linewidth}
  \centering
  \includegraphics[width=\linewidth]{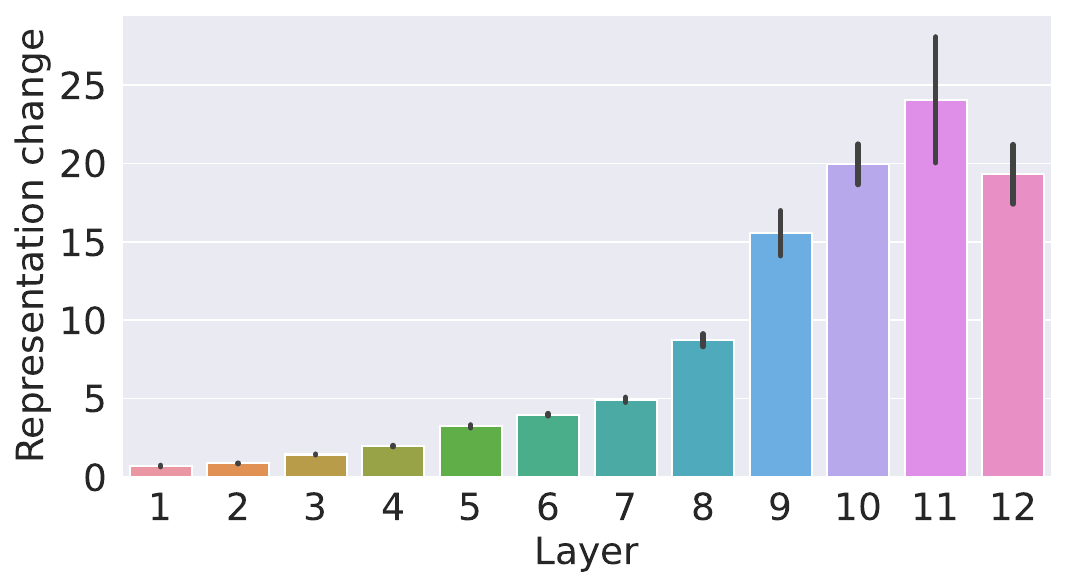}
  \caption{RoBERTa's representation change}
\end{subfigure}
\begin{subfigure}{.49\linewidth}
  \centering
  \includegraphics[width=\linewidth]{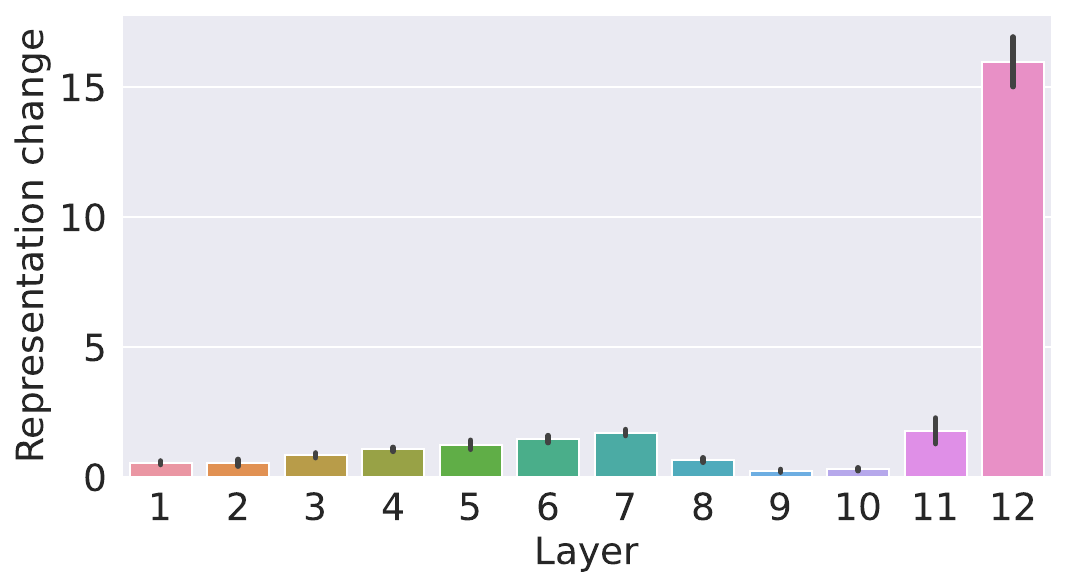}
  \caption{ELECTRA's representation change}
\end{subfigure}
\begin{subfigure}{.49\linewidth}
  \centering
  \includegraphics[width=\linewidth]{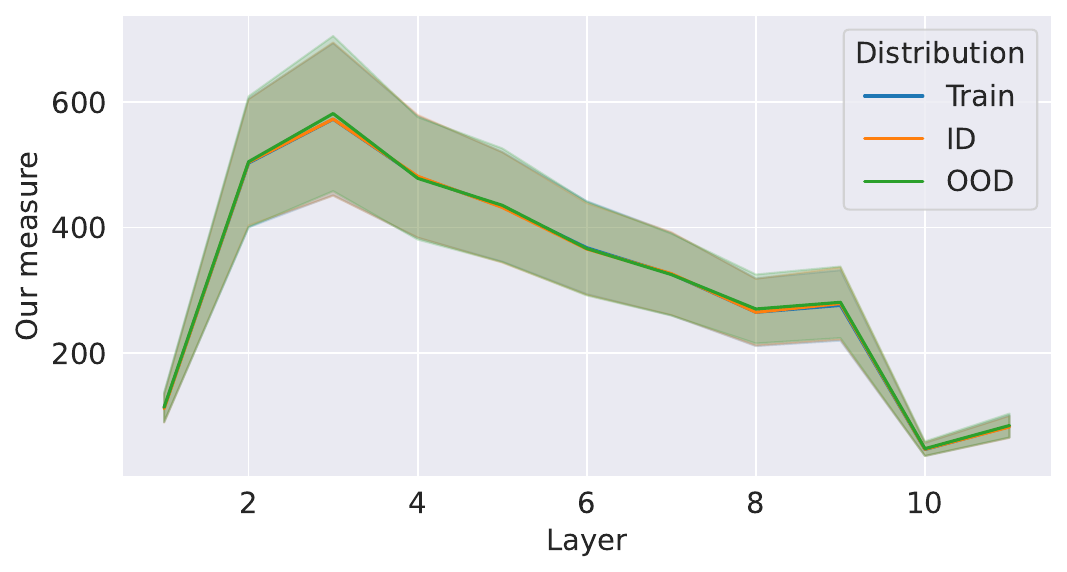}
  \caption{Pre-trained RoBERTa's \ourmethod{} score by layer}
\end{subfigure}
\begin{subfigure}{.49\linewidth}
  \centering
  \includegraphics[width=\linewidth]{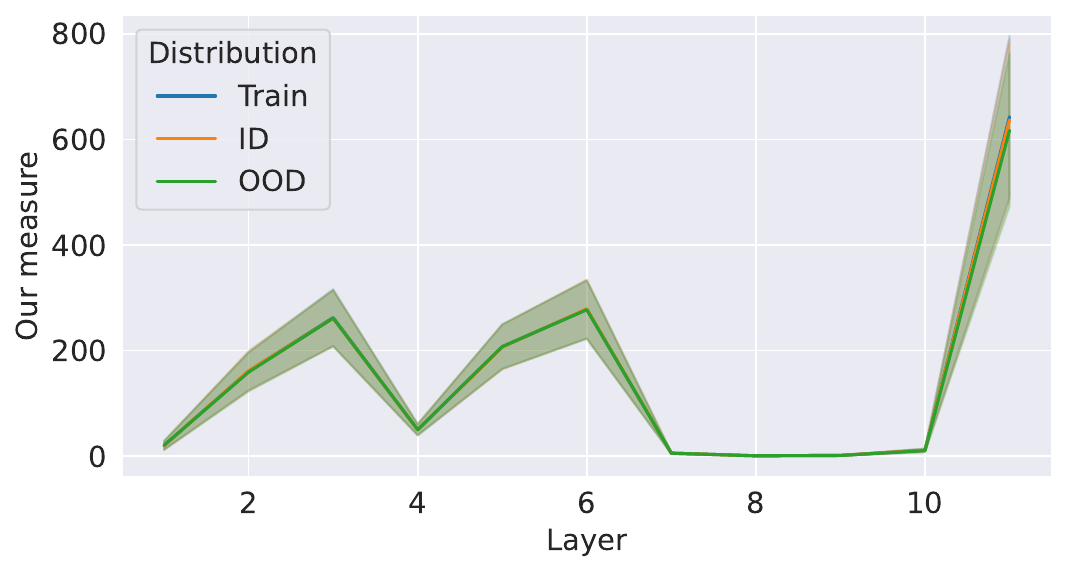}
  \caption{Pre-trained ELECTRA \ourmethod{} score by layer}
\end{subfigure}
\begin{subfigure}{.49\linewidth}
  \centering
  \includegraphics[width=\linewidth]{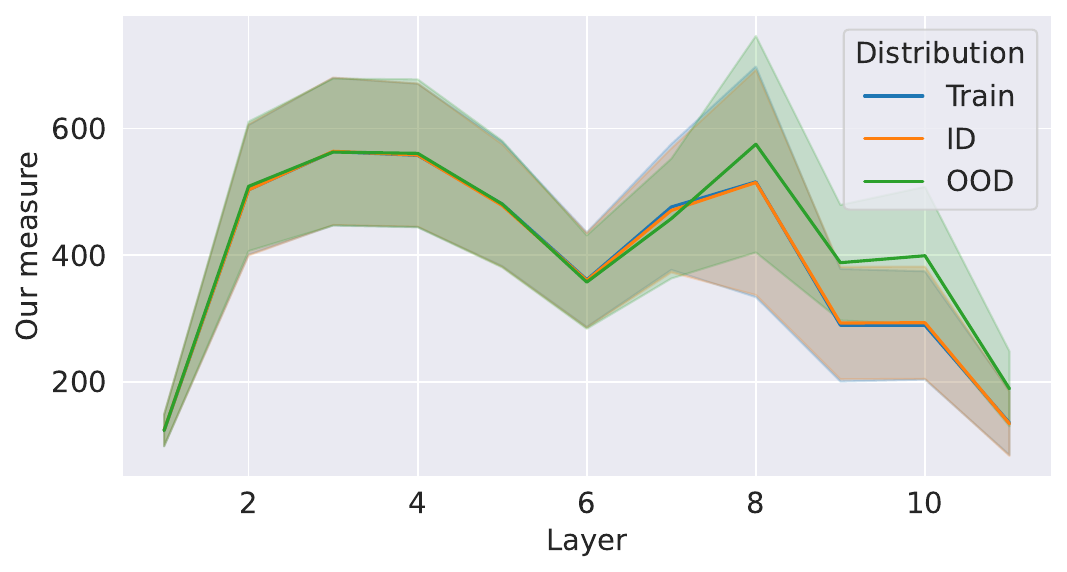}
  \caption{Fine-tuned RoBERTa's \ourmethod{} score by layer}
\end{subfigure}
\begin{subfigure}{.49\linewidth}
  \centering
  \includegraphics[width=\linewidth]{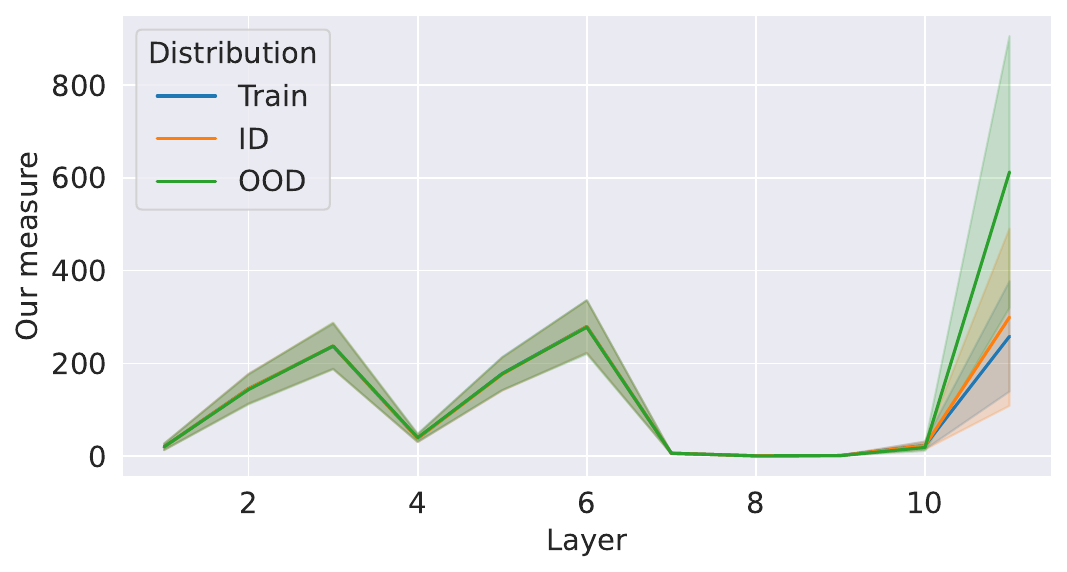}
  \caption{Fine-tuned ELECTRA \ourmethod{} score by layer}
\end{subfigure}
\caption{The impact of change of each layer on \ourmethod{} score across layers. Top row: Change in intermediate representations of training instances by layer for (a) RoBERTa and (b) ELECTRA. The scores are averaged across instances for the \textsc{subj} dataset. The black error bars denote the standard deviation. Middle row: \ourmethod{} score by layer of models for \textsc{subj} before fine-tuning. Bottom row: \ourmethod{} score by layer of models for \textsc{subj} after fine-tuning.}
\label{fig:repr-change-subj}
\end{figure}

\begin{figure}[t!]
\small
\centering
\begin{subfigure}{.49\linewidth}
  \centering
  \includegraphics[width=\linewidth]{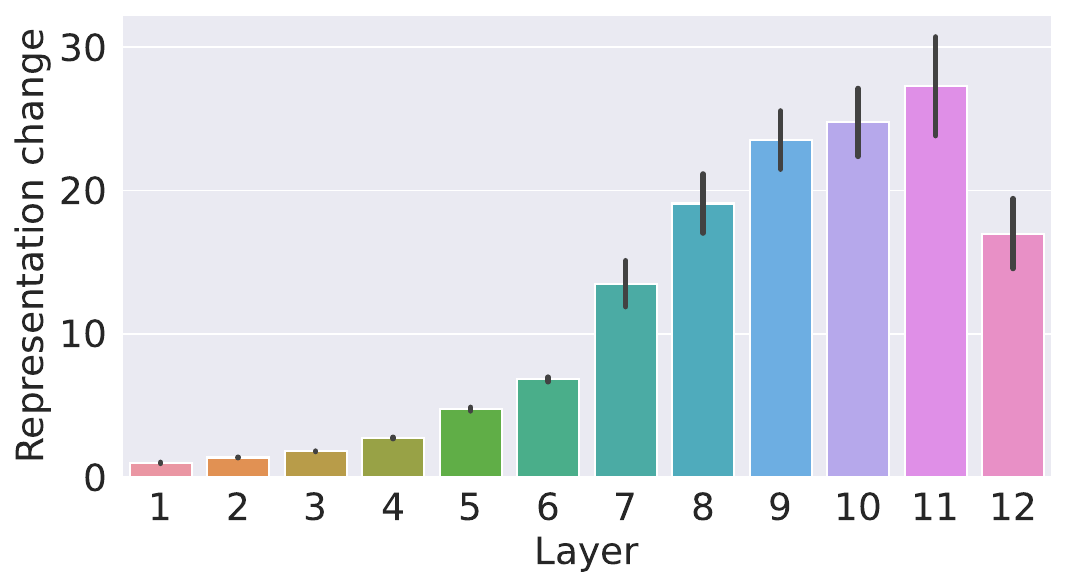}
  \caption{RoBERTa's representation change}
\end{subfigure}
\begin{subfigure}{.49\linewidth}
  \centering
  \includegraphics[width=\linewidth]{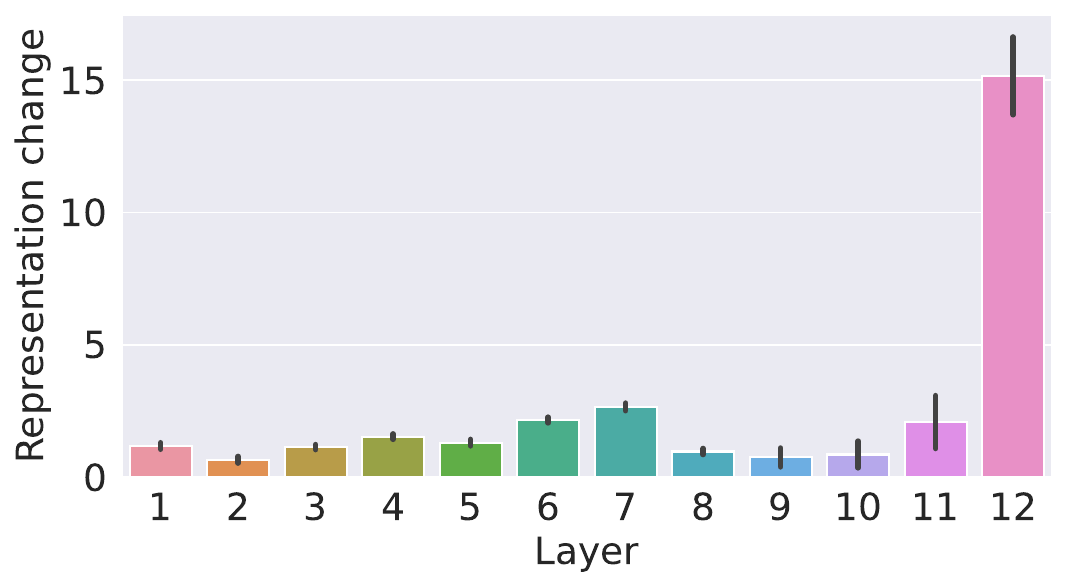}
  \caption{ELECTRA's representation change}
\end{subfigure}
\begin{subfigure}{.49\linewidth}
  \centering
  \includegraphics[width=\linewidth]{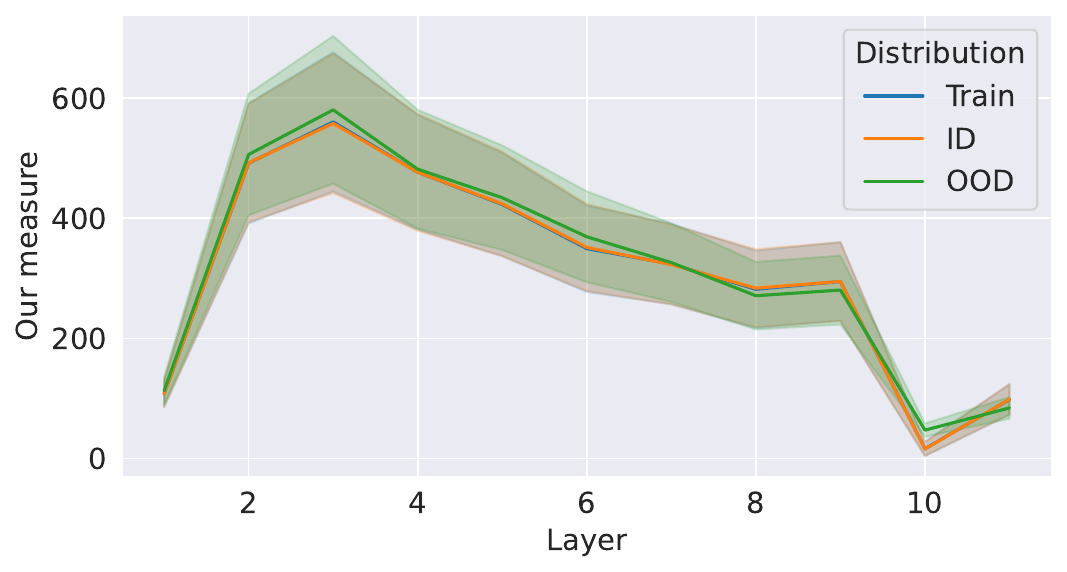}
  \caption{Pre-trained RoBERTa's \ourmethod{} score by layer}
\end{subfigure}
\begin{subfigure}{.49\linewidth}
  \centering
  \includegraphics[width=\linewidth]{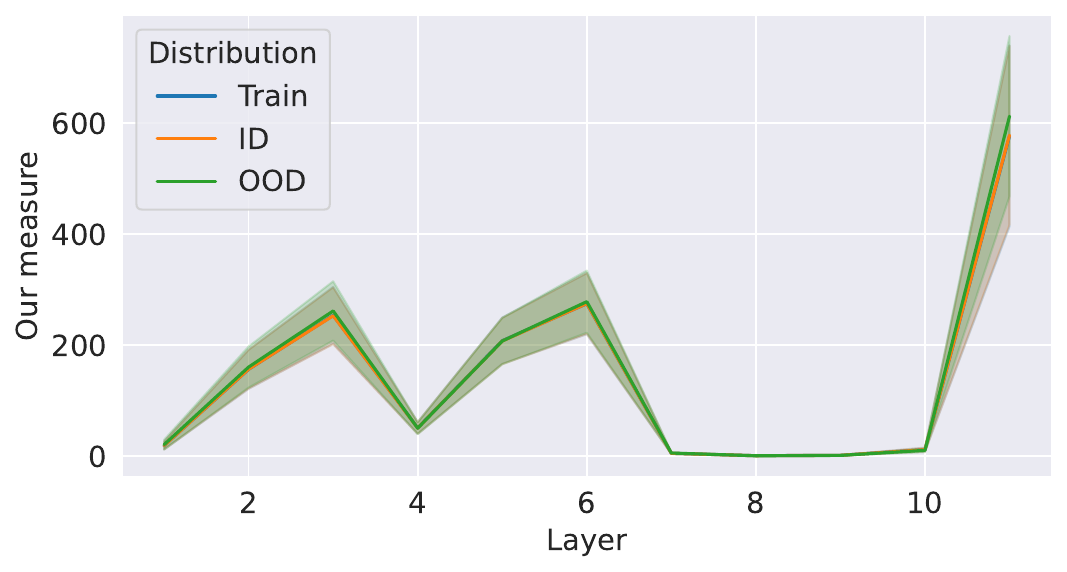}
  \caption{Pre-trained ELECTRA \ourmethod{} score by layer}
\end{subfigure}
\begin{subfigure}{.49\linewidth}
  \centering
  \includegraphics[width=\linewidth]{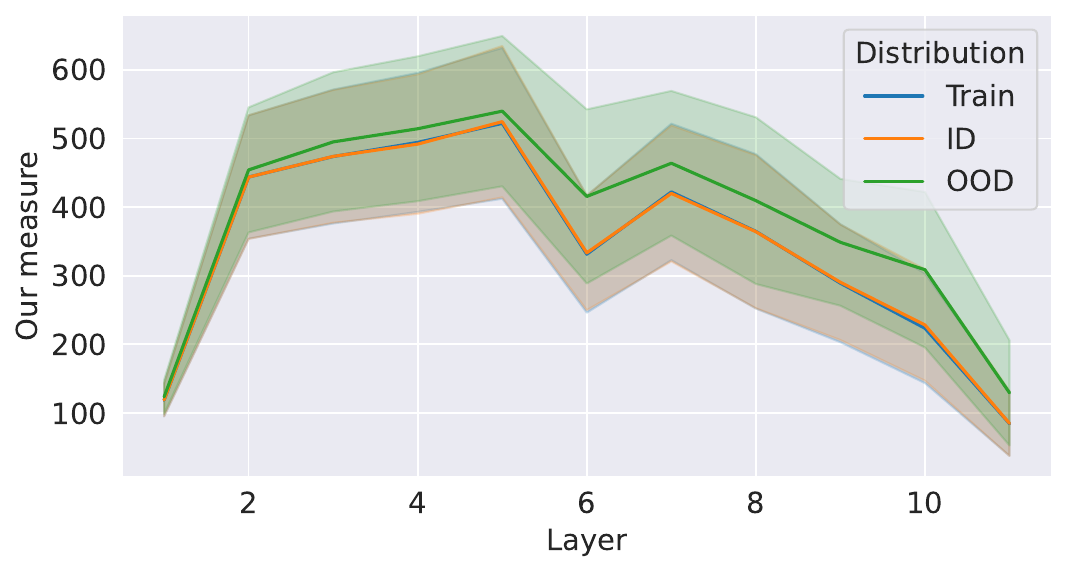}
  \caption{Fine-tuned RoBERTa's \ourmethod{} score by layer}
\end{subfigure}
\begin{subfigure}{.49\linewidth}
  \centering
  \includegraphics[width=\linewidth]{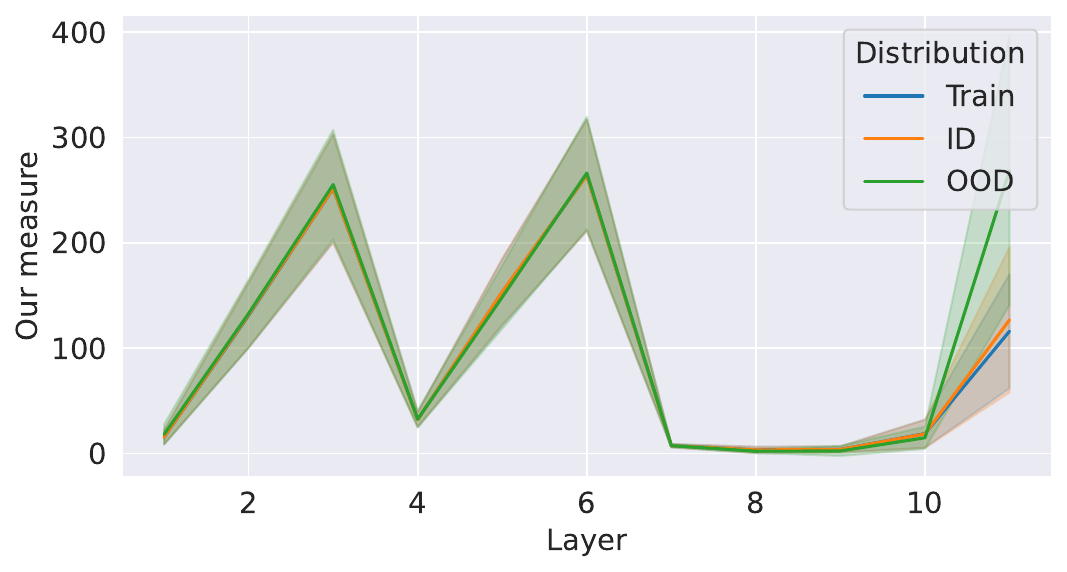}
  \caption{Fine-tuned ELECTRA \ourmethod{} score by layer}
\end{subfigure}
\caption{The impact of change of each layer on \ourmethod{} score across layers. Top row: Change in intermediate representations of training instances by layer for (a) RoBERTa and (b) ELECTRA. The scores are averaged across instances for the \textsc{agn} dataset. The black error bars denote the standard deviation. Middle row: \ourmethod{} score by layer of models for \textsc{agn} before fine-tuning. Bottom row: \ourmethod{} score by layer of models for \textsc{agn} after fine-tuning.}
\label{fig:repr-change-agn}
\end{figure}

\begin{figure}[t!]
\small
\centering
\begin{subfigure}{.49\linewidth}
  \centering
  \includegraphics[width=\linewidth]{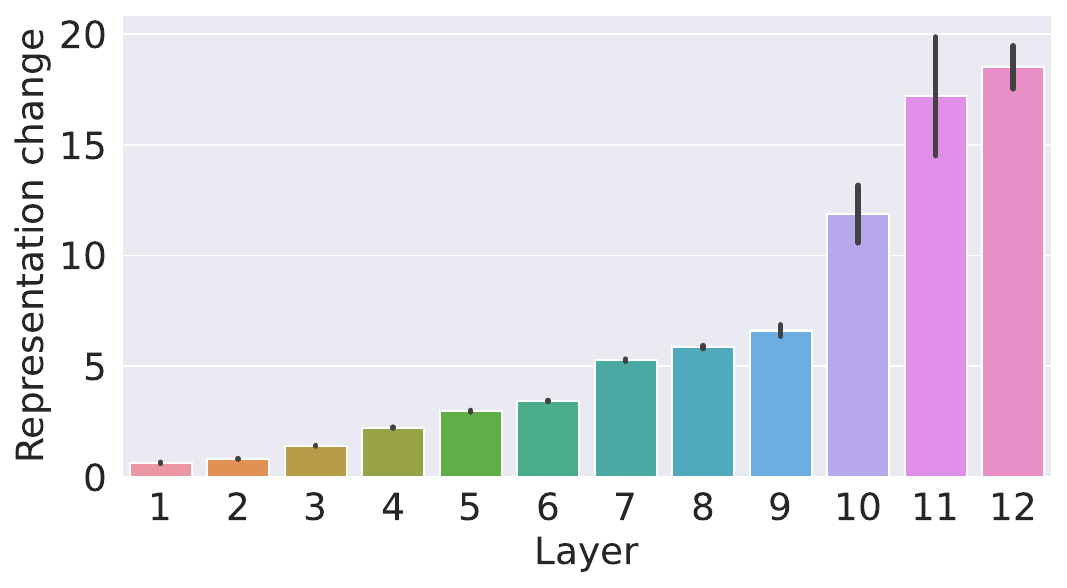}
  \caption{RoBERTa's representation change}
\end{subfigure}
\begin{subfigure}{.49\linewidth}
  \centering
  \includegraphics[width=\linewidth]{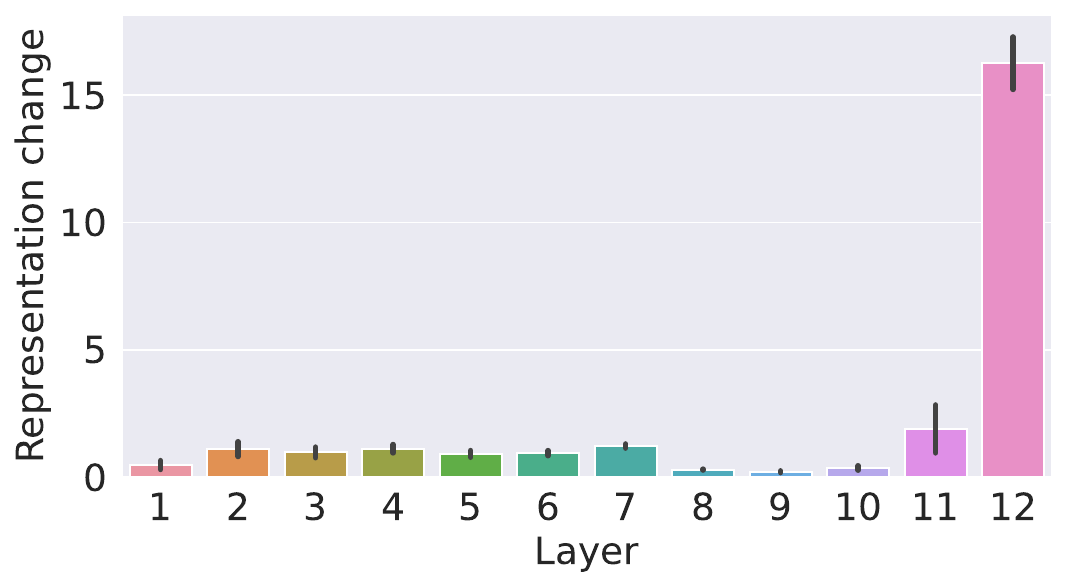}
  \caption{ELECTRA's representation change}
\end{subfigure}
\begin{subfigure}{.49\linewidth}
  \centering
  \includegraphics[width=\linewidth]{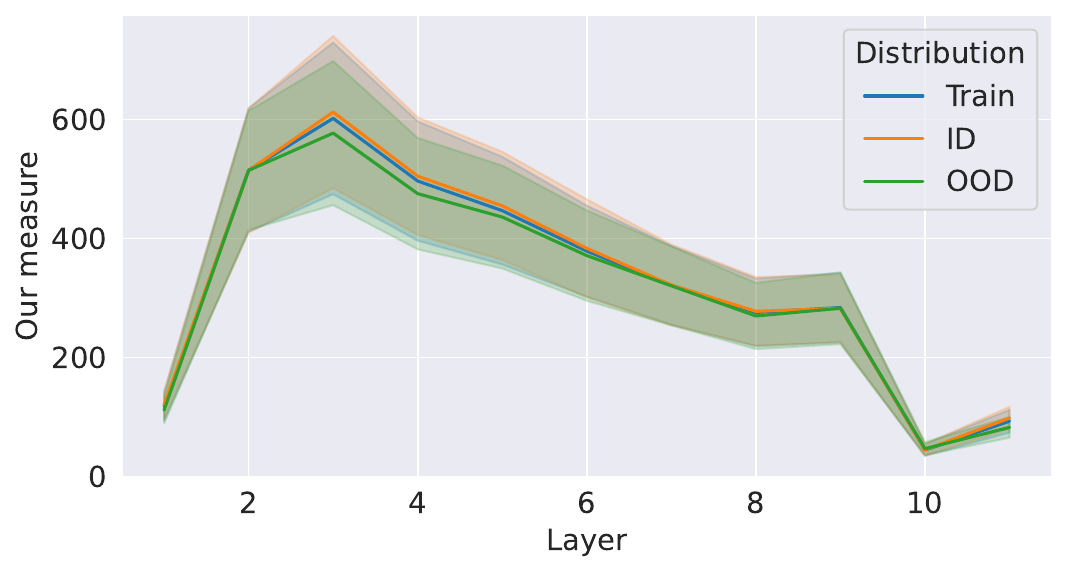}
  \caption{Pre-trained RoBERTa's \ourmethod{} score by layer}
\end{subfigure}
\begin{subfigure}{.49\linewidth}
  \centering
  \includegraphics[width=\linewidth]{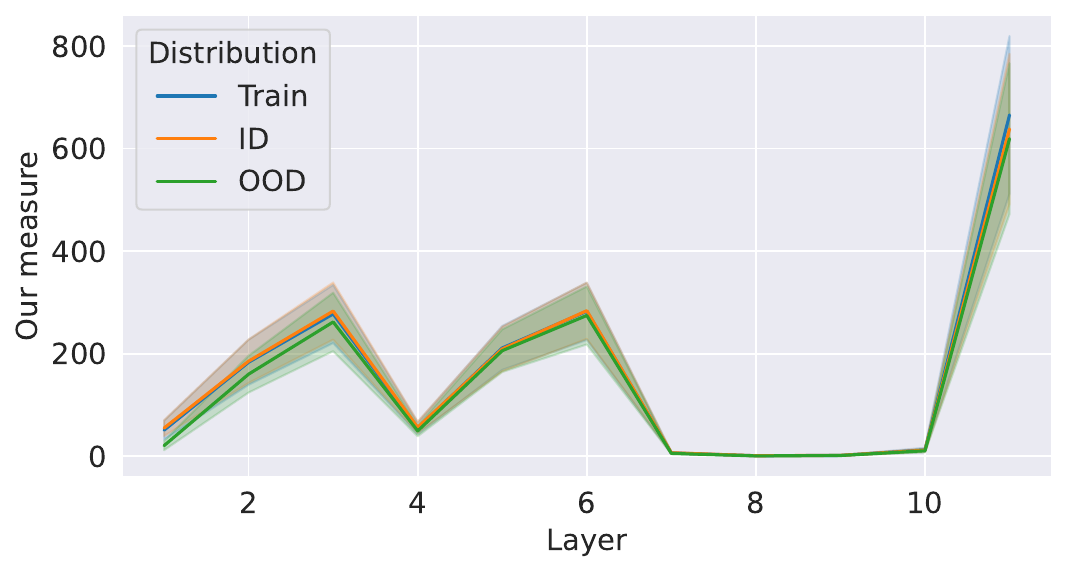}
  \caption{Pre-trained ELECTRA \ourmethod{} score by layer}
\end{subfigure}
\begin{subfigure}{.49\linewidth}
  \centering
  \includegraphics[width=\linewidth]{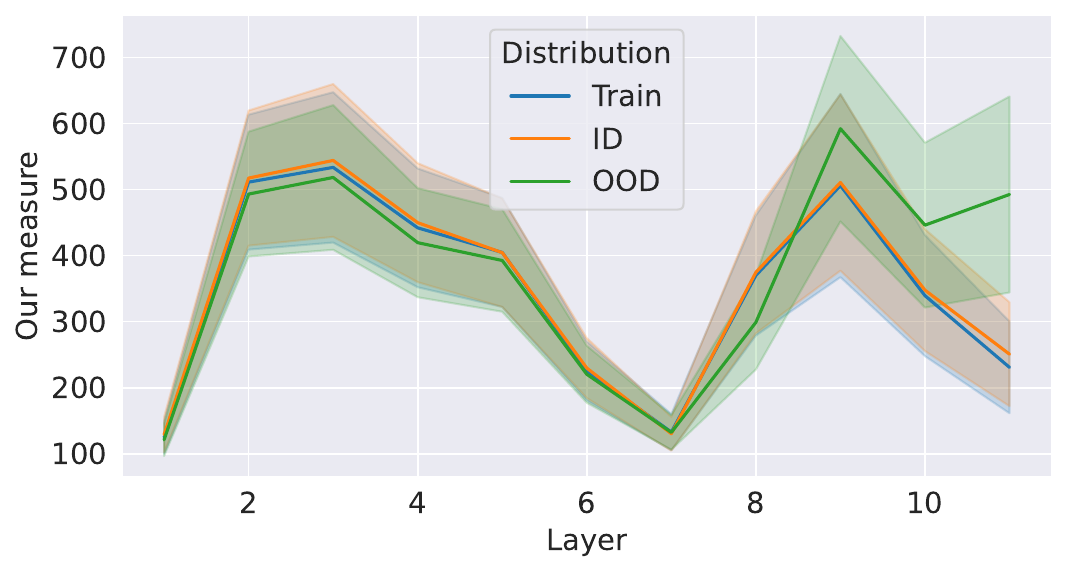}
  \caption{Fine-tuned RoBERTa's \ourmethod{} score by layer}
\end{subfigure}
\begin{subfigure}{.49\linewidth}
  \centering
  \includegraphics[width=\linewidth]{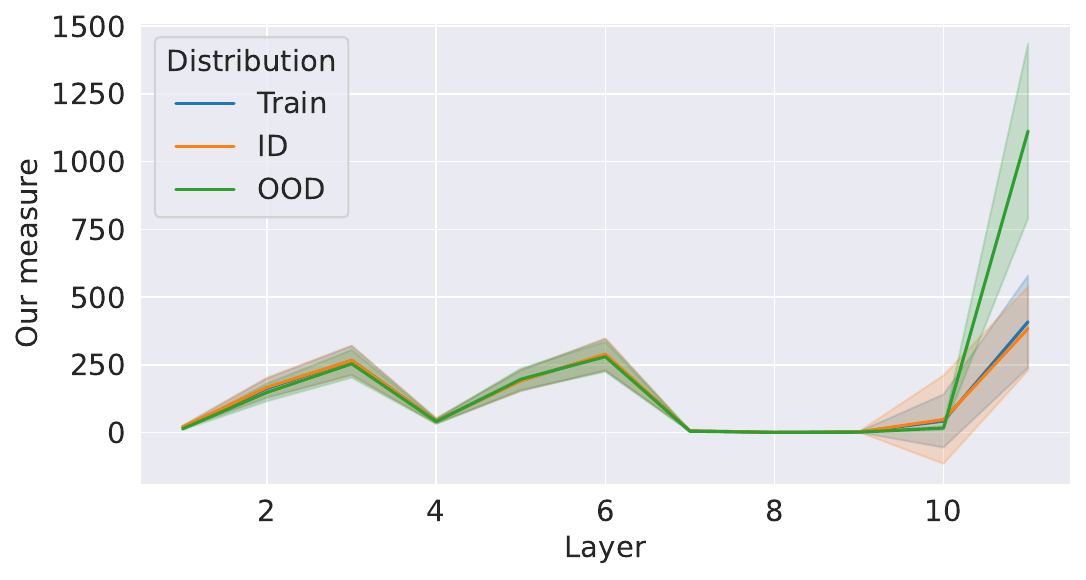}
  \caption{Fine-tuned ELECTRA \ourmethod{} score by layer}
\end{subfigure}
\caption{The impact of change of each layer on \ourmethod{} score across layers. Top row: Change in intermediate representations of training instances by layer for (a) RoBERTa and (b) ELECTRA. The scores are averaged across instances for the \textsc{trec} dataset. The black error bars denote the standard deviation. Middle row: \ourmethod{} score by layer of models for \textsc{trec} before fine-tuning. Bottom row: \ourmethod{} score by layer of models for \textsc{trec} after fine-tuning.}
\label{fig:repr-change-trec}
\end{figure}

\begin{figure}[t!]
\small
\centering
\begin{subfigure}{.49\linewidth}
  \centering
  \includegraphics[width=\linewidth]{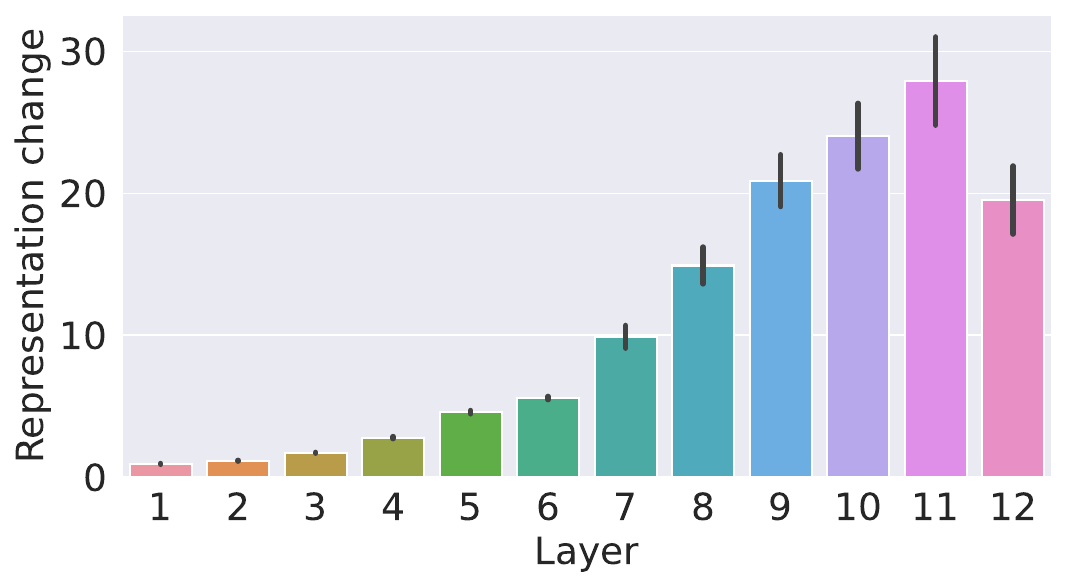}
  \caption{RoBERTa's representation change}
\end{subfigure}
\begin{subfigure}{.49\linewidth}
  \centering
  \includegraphics[width=\linewidth]{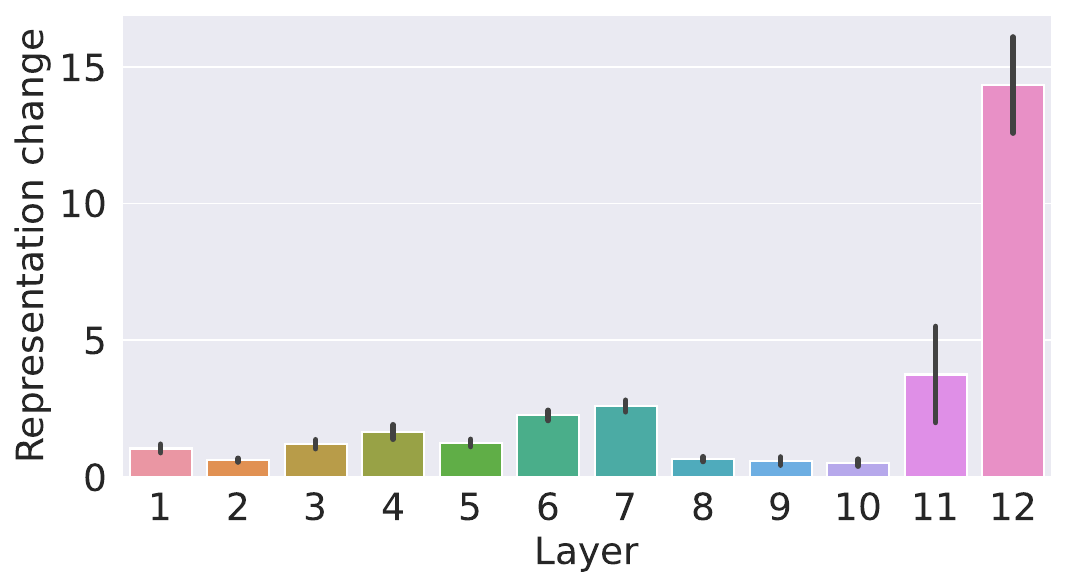}
  \caption{ELECTRA's representation change}
\end{subfigure}
\begin{subfigure}{.49\linewidth}
  \centering
  \includegraphics[width=\linewidth]{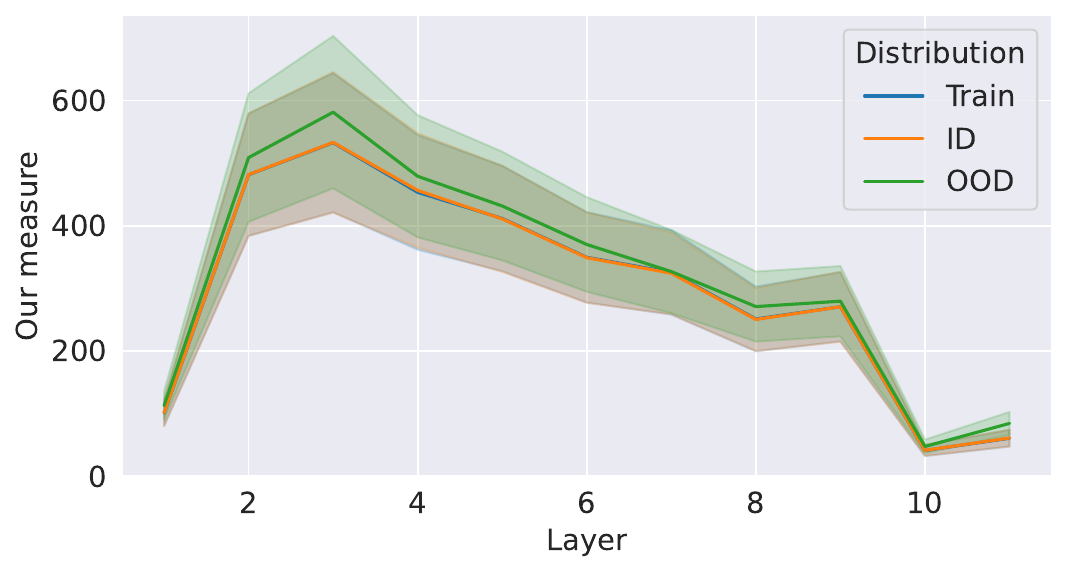}
  \caption{Pre-trained RoBERTa's \ourmethod{} score by layer}
\end{subfigure}
\begin{subfigure}{.49\linewidth}
  \centering
  \includegraphics[width=\linewidth]{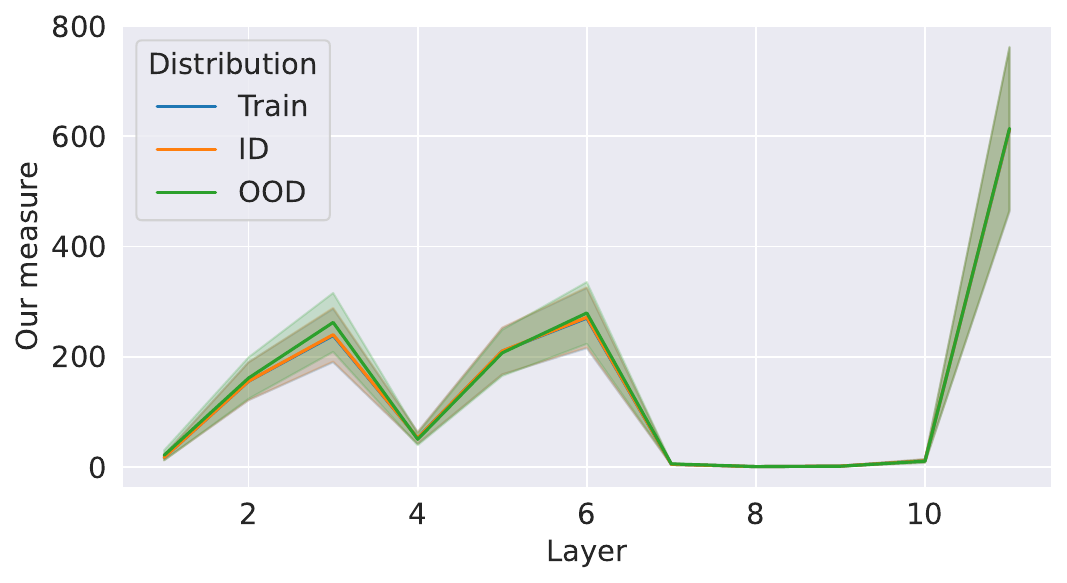}
  \caption{Pre-trained ELECTRA \ourmethod{} score by layer}
\end{subfigure}
\begin{subfigure}{.49\linewidth}
  \centering
  \includegraphics[width=\linewidth]{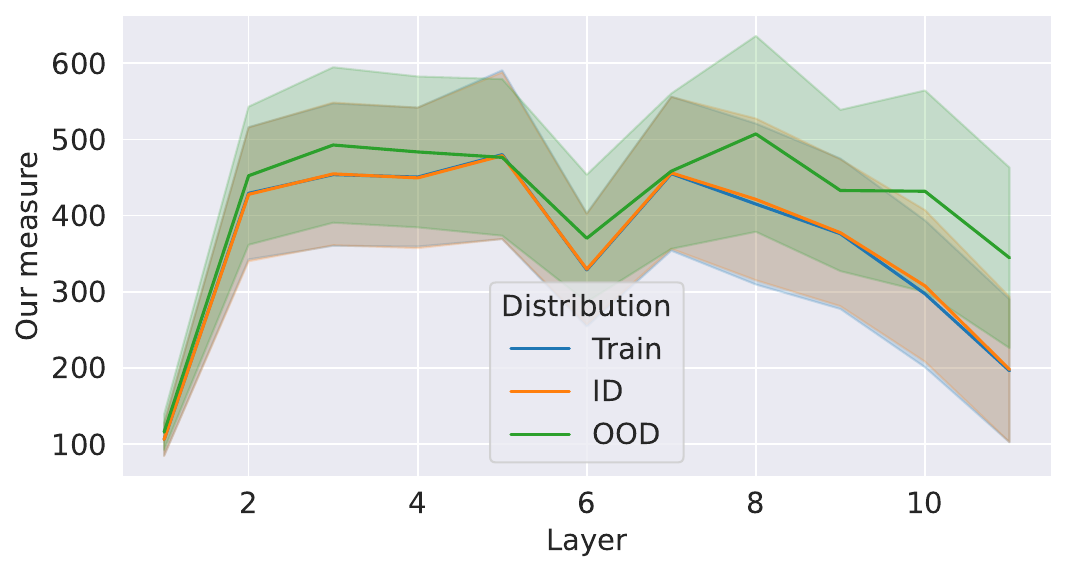}
  \caption{Fine-tuned RoBERTa's \ourmethod{} score by layer}
\end{subfigure}
\begin{subfigure}{.49\linewidth}
  \centering
  \includegraphics[width=\linewidth]{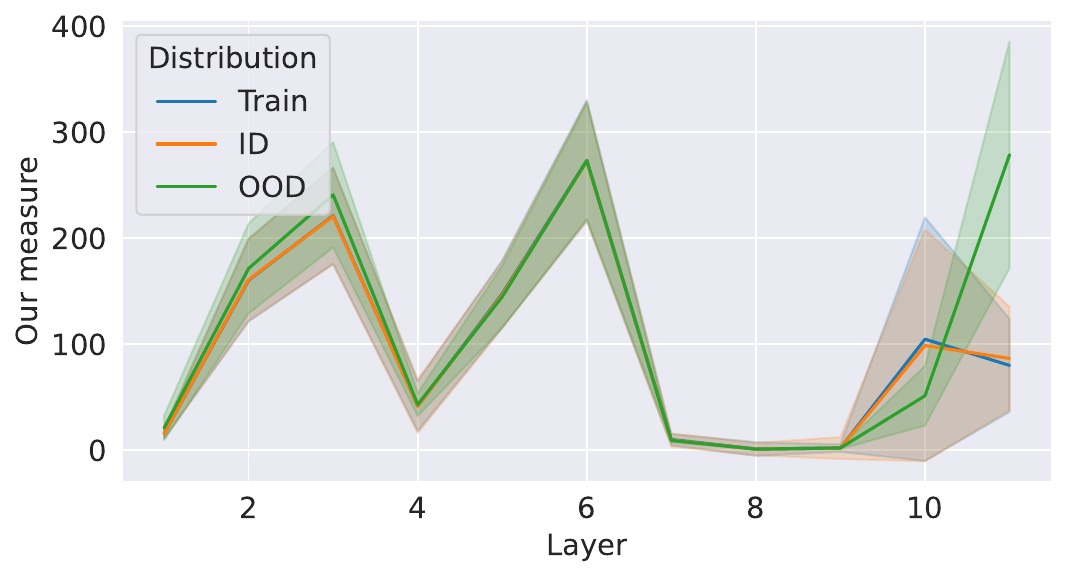}
  \caption{Fine-tuned ELECTRA \ourmethod{} score by layer}
\end{subfigure}
\caption{The impact of change of each layer on \ourmethod{} score across layers. Top row: Change in intermediate representations of training instances by layer for (a) RoBERTa and (b) ELECTRA. The scores are averaged across instances for the \textsc{bp} dataset. The black error bars denote the standard deviation. Middle row: \ourmethod{} score by layer of models for \textsc{bp} before fine-tuning. Bottom row: \ourmethod{} score by layer of models for \textsc{bp} after fine-tuning.}
\label{fig:repr-change-bp}
\end{figure}

\begin{figure}[t!]
\small
\centering
\begin{subfigure}{.49\linewidth}
  \centering
  \includegraphics[width=\linewidth]{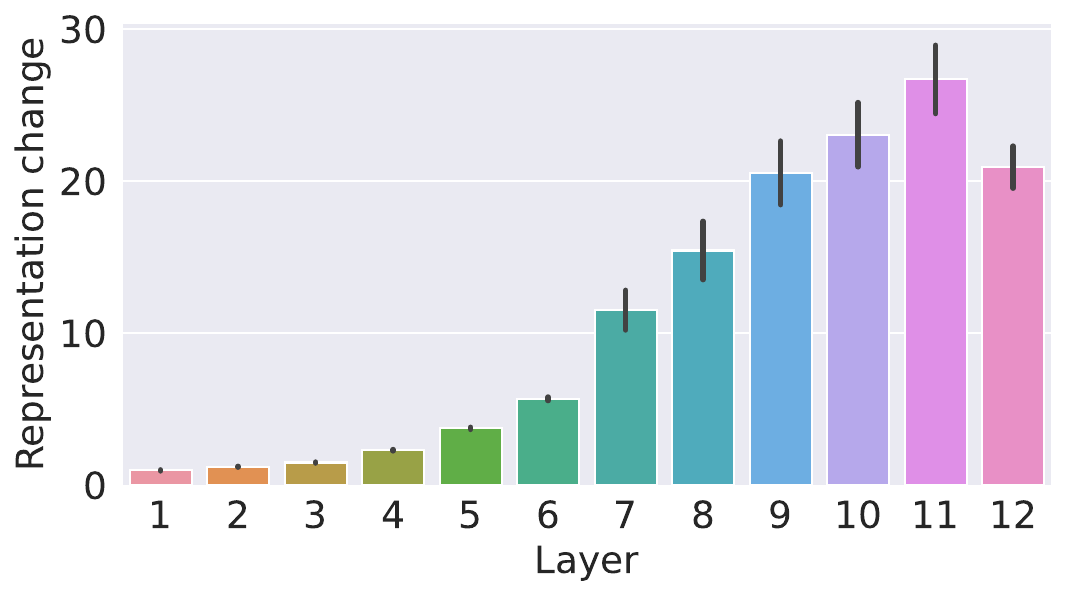}
  \caption{RoBERTa's representation change}
\end{subfigure}
\begin{subfigure}{.49\linewidth}
  \centering
  \includegraphics[width=\linewidth]{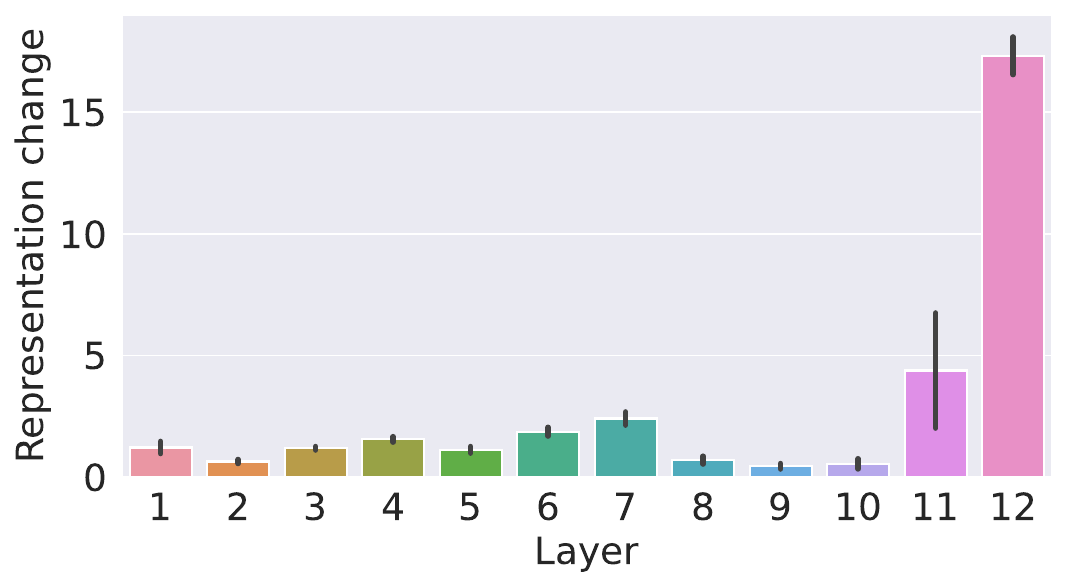}
  \caption{ELECTRA's representation change}
\end{subfigure}
\begin{subfigure}{.49\linewidth}
  \centering
  \includegraphics[width=\linewidth]{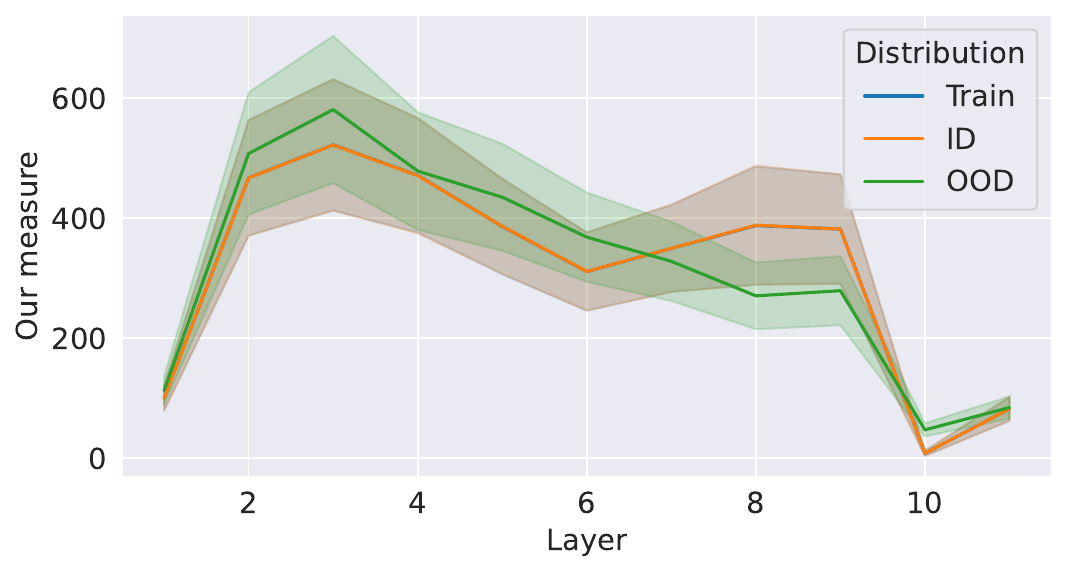}
  \caption{Pre-trained RoBERTa's \ourmethod{} score by layer}
\end{subfigure}
\begin{subfigure}{.49\linewidth}
  \centering
  \includegraphics[width=\linewidth]{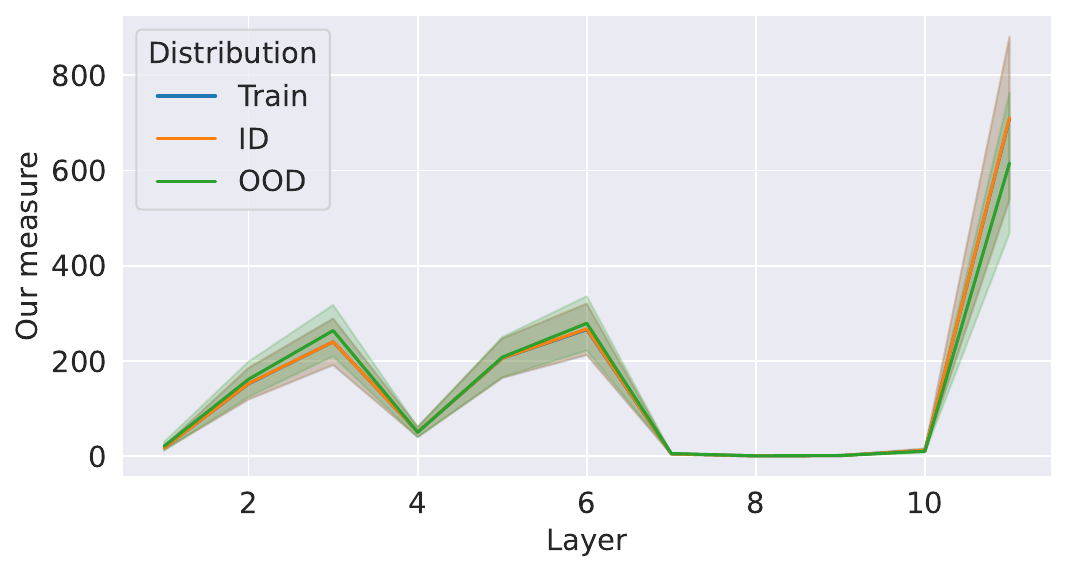}
  \caption{Pre-trained ELECTRA \ourmethod{} score by layer}
\end{subfigure}
\begin{subfigure}{.49\linewidth}
  \centering
  \includegraphics[width=\linewidth]{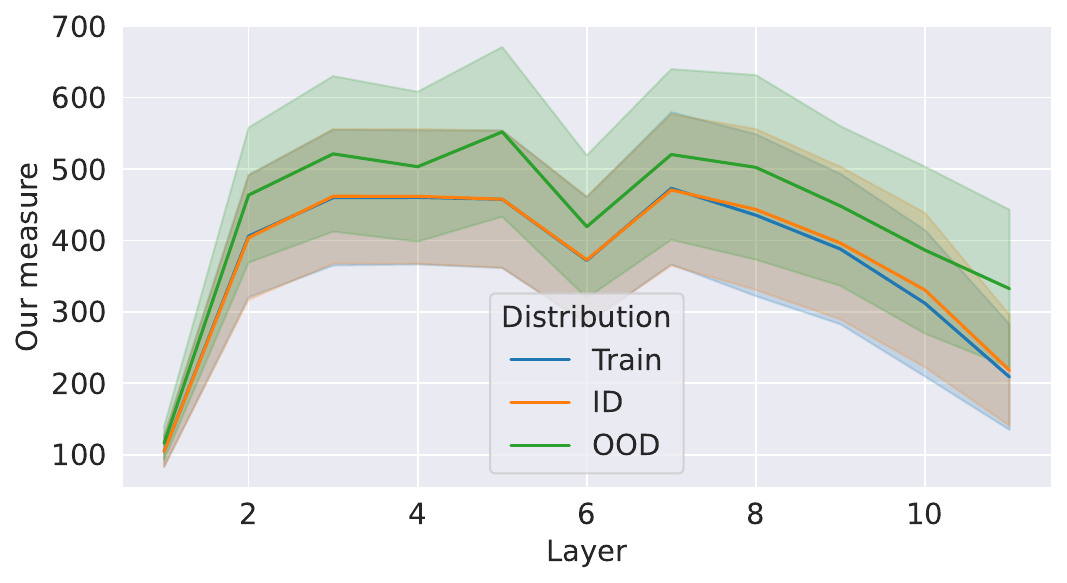}
  \caption{Fine-tuned RoBERTa's \ourmethod{} score by layer}
\end{subfigure}
\begin{subfigure}{.49\linewidth}
  \centering
  \includegraphics[width=\linewidth]{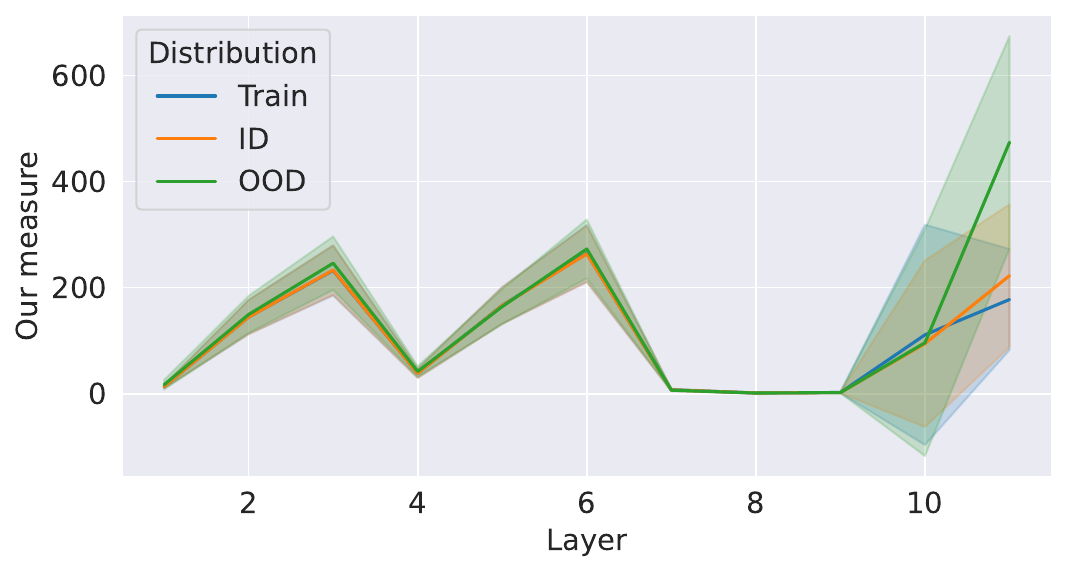}
  \caption{Fine-tuned ELECTRA \ourmethod{} score by layer}
\end{subfigure}
\caption{The impact of change of each layer on \ourmethod{} score across layers. Top row: Change in intermediate representations of training instances by layer for (a) RoBERTa and (b) ELECTRA. The scores are averaged across instances for the \textsc{mg} dataset. The black error bars denote the standard deviation. Middle row: \ourmethod{} score by layer of models for \textsc{mg} before fine-tuning. Bottom row: \ourmethod{} score by layer of models for \textsc{mg} after fine-tuning.}
\label{fig:repr-change-mg}
\end{figure}

\begin{figure}[t!]
\small
\centering
\begin{subfigure}{.49\linewidth}
  \centering
  \includegraphics[width=\linewidth]{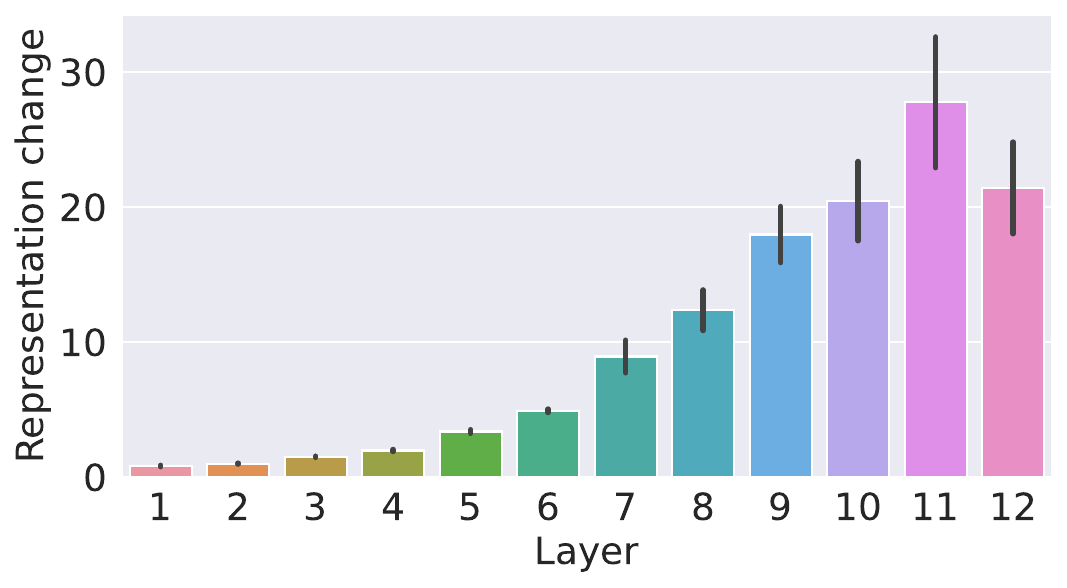}
  \caption{RoBERTa's representation change}
\end{subfigure}
\begin{subfigure}{.49\linewidth}
  \centering
  \includegraphics[width=\linewidth]{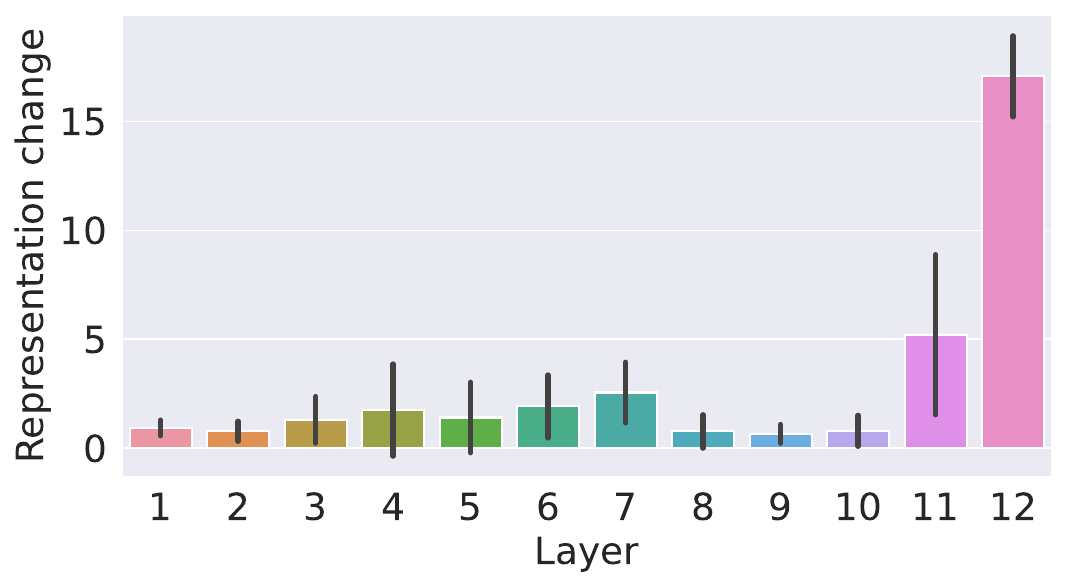}
  \caption{ELECTRA's representation change}
\end{subfigure}
\begin{subfigure}{.49\linewidth}
  \centering
  \includegraphics[width=\linewidth]{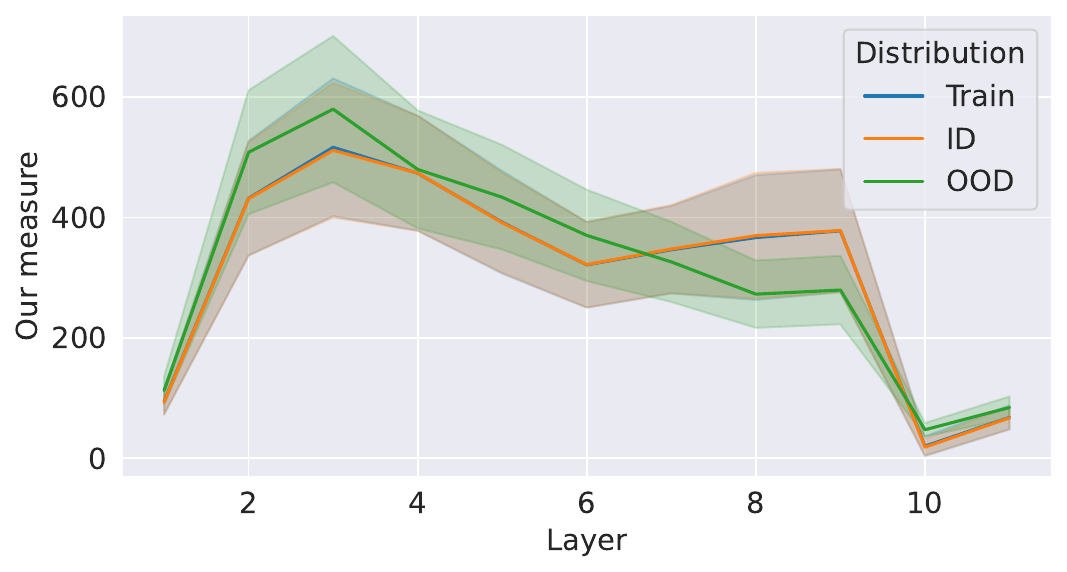}
  \caption{Pre-trained RoBERTa's \ourmethod{} score by layer}
\end{subfigure}
\begin{subfigure}{.49\linewidth}
  \centering
  \includegraphics[width=\linewidth]{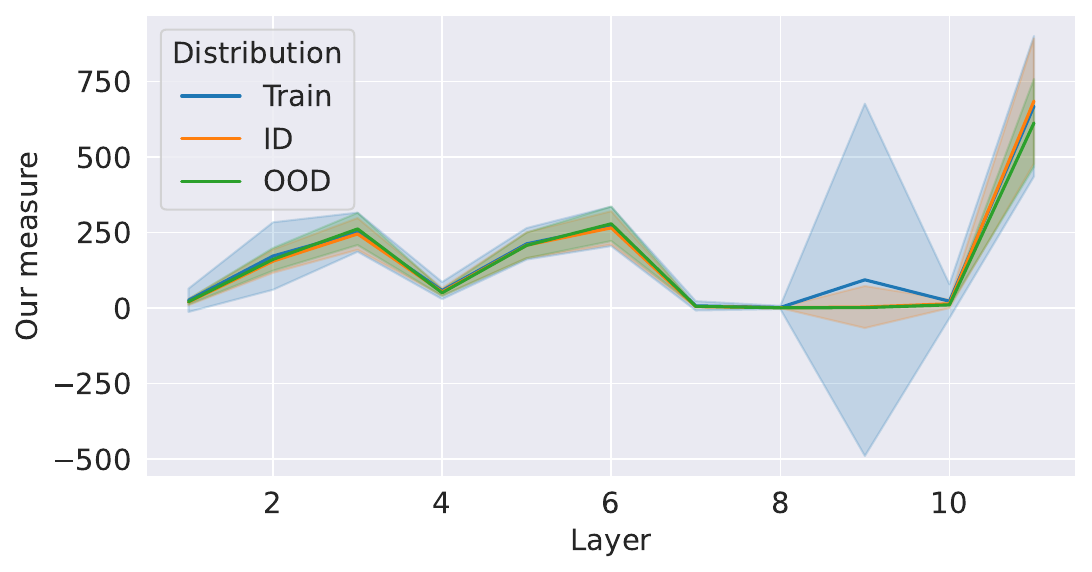}
  \caption{Pre-trained ELECTRA \ourmethod{} score by layer}
\end{subfigure}
\begin{subfigure}{.49\linewidth}
  \centering
  \includegraphics[width=\linewidth]{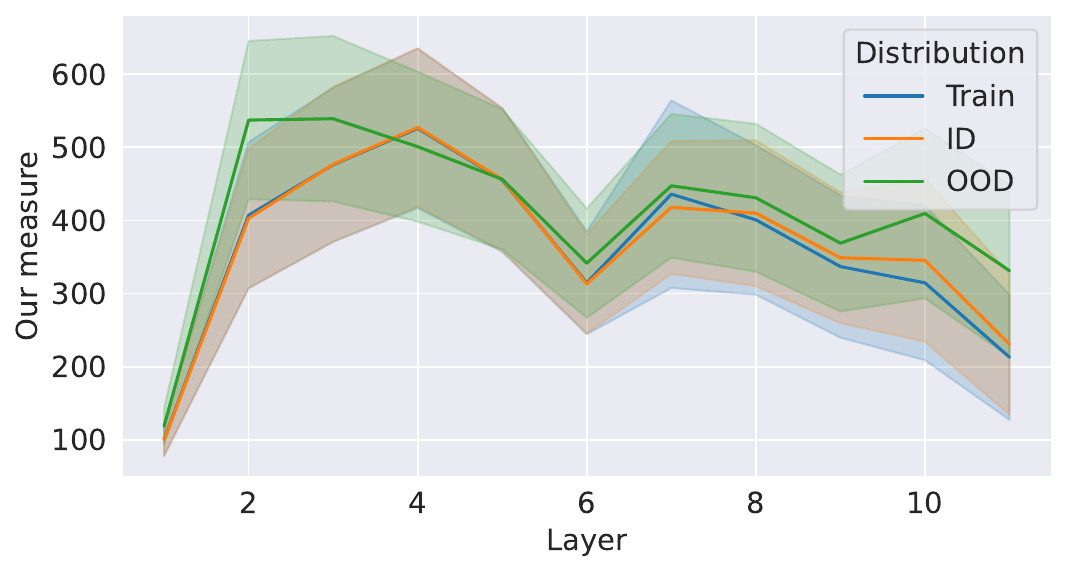}
  \caption{Fine-tuned RoBERTa's \ourmethod{} score by layer}
\end{subfigure}
\begin{subfigure}{.49\linewidth}
  \centering
  \includegraphics[width=\linewidth]{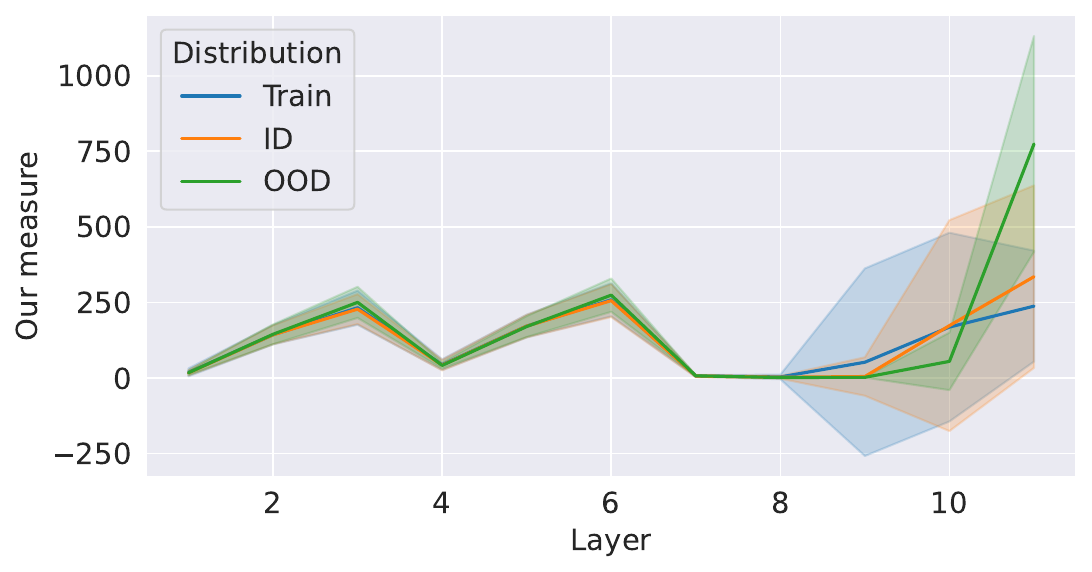}
  \caption{Fine-tuned ELECTRA \ourmethod{} score by layer}
\end{subfigure}
\caption{The impact of change of each layer on \ourmethod{} score across layers. Top row: Change in intermediate representations of training instances by layer for (a) RoBERTa and (b) ELECTRA. The scores are averaged across instances for the \textsc{ng} dataset. The black error bars denote the standard deviation. Middle row: \ourmethod{} score by layer of models for \textsc{ng} before fine-tuning. Bottom row: \ourmethod{} score by layer of models for \textsc{ng} after fine-tuning.}
\label{fig:repr-change-ng}
\end{figure}

\end{document}